\tikzstyle{vertex}=[circle, draw, fill=gray!80!white,thick,scale=1.2]
\tikzstyle{edge}=[draw=black, thick,-]
\let\originalleft\left
\let\originalright\right
\renewcommand{\left}{\mathopen{}\mathclose\bgroup\originalleft}
\renewcommand{\right}{\aftergroup\egroup\originalright}
\theoremstyle{definition}
\newtheorem{theorem}{Theorem}
\newtheorem{lemma}[theorem]{Lemma}
\newtheorem{corollary}[theorem]{Corollary}
\newtheorem{claim}[theorem]{Claim}
\newenvironment{subproof}{\begin{proof}}{\end{proof}}
\setlist[enumerate]{itemsep=0.2ex, topsep=0.2\topsep}
\setlist[description]{itemsep=0.2ex, topsep=0.2\topsep}
\setlist[itemize]{itemsep=0.2ex, topsep=0.2\topsep}
\def\thmt@refnamewithcomma #1#2#3,#4,#5\@nil{\@xa\def\csname\thmt@envname #1utorefname\endcsname{#3}\ifcsname #2refname\endcsname
\csname #2refname\expandafter\endcsname\expandafter{\thmt@envname}{#3}{#4}\fi
}
\crefname{observation}{observation}{observations}
\crefname{claim}{claim}{claims}
\newcommand{\borel}{\mathcal{B}}
\newcommand{\CC}{\mathcal{C}}
\newcommand{\CM}{\mathcal{M}}
\newcommand{\CN}{\mathcal{N}}
\newcommand{\CT}{\mathcal{T}}
\newcommand{\CW}{\mathcal{W}}
\newcommand{\DD}{\mathbb{D}}
\newcommand{\MM}{\mathbb{M}}
\newcommand{\NN}{\mathbb{N}}
\newcommand{\PP}{\mathbb{P}}
\newcommand{\RR}{\mathbb{R}}
\newcommand{\measures}{\mathscr{M}_{\le 1}}
\newcommand{\oneMeasures}{\mathscr{M}_{\le 1}}
\newcommand{\finMeasures}{\mathscr{M}}
\newcommand{\pMeasures}{\mathscr{P}}
\newcommand{\vertexNeural}{\CN}
\newcommand{\neural}{\CN\!\CN}
\newcommand{\treeDist}{\delta^{\mathcal{T}}_\square}
\newcommand{\cutDist}{\delta_\square}
\newcommand{\proDist}{d_{\textsf{P}}}
\newcommand{\proDistOf}[1]{d_{\textsf{P},#1}}
\newcommand{\proDeltaDist}{\delta_{\textsf{P}}}
\newcommand{\proDeltaDistOf}[1]{\delta_{\textsf{P},#1}}
\newcommand{\WDistOf}[1]{d_{\textsf{W},#1}}
\newcommand{\WDeltaDist}{\delta_{\textsf{W}}}
\newcommand{\WDeltaDistOf}[1]{\delta_{\textsf{W},#1}}
\newcommand{\BLMetric}{\mathsf{BL}}
\newcommand{\KMetric}{\mathsf{K}}
\newcommand{\PMetric}{\mathsf{P}}
\newcommand{\WMetric}{\mathsf{W}}
\newcommand{\wlone}{$1$\textrm{-}\textsf{WL}}
\newcommand{\kwl}{$k$\textrm{-}\textsf{WL}}
\newcommand{\new}[1]{\emph{#1}}
\renewcommand{\vec}[1]{\mathbf{#1}}
\newcommand{\oms}{\{\!\!\{}
\newcommand{\cms}{\}\!\!\}}
\newcommand{\tup}[1]{{(#1)}}
\newcommand{\UPD}{\mathsf{UPD}}
\newcommand{\AGG}{\mathsf{AGG}}
\newcommand{\RO}{\mathsf{READOUT}}
\newcommand{\REL}{\mathsf{RELABEL}}
\newcommand{\hb}{\mathbf{h}}
\newcommand{\phib}{\boldsymbol{\varphi}}
\newcommand{\blank}{-}
\title{\huge\normalfont\bfseries Fine-grained Expressivity of Graph Neural Networks}
\author[1]{Jan Böker}
\author[2]{Ron Levie}
\author[3]{Ningyuan Huang}
\author[3]{Soledad Villar}
\author[1]{Christopher Morris}
\affil[1]{RWTH Aachen University}
\affil[2]{Technion - Israel Institute of Technology}
\affil[3]{Johns Hopkins University}
\date{\vspace{-30pt}}
\begin{document}
\maketitle

\begin{abstract}
Numerous recent works have analyzed the expressive power of message-passing graph neural networks (MPNNs), primarily utilizing combinatorial techniques such as the $1$-dimensional Weisfeiler--Leman test (\wlone) for the graph isomorphism problem. However, the graph isomorphism objective is inherently binary, not giving insights into the degree of similarity between two given graphs. This work resolves this issue by considering continuous extensions of both \wlone{} and MPNNs to graphons. Concretely, we show that the continuous variant of \wlone{} delivers an accurate topological characterization of the expressive power of MPNNs on graphons, revealing which graphs these networks can distinguish and the level of difficulty in separating them. We identify the finest topology where MPNNs separate points and prove a universal approximation theorem. Consequently, we provide a theoretical framework for graph and graphon similarity combining various topological variants of classical characterizations of the \wlone{}. In particular, we characterize the expressive power of MPNNs in terms of the tree distance, which is a graph distance based on the concept of fractional isomorphisms, and substructure counts via tree homomorphisms, showing that these concepts have the same expressive power as the \wlone{} and MPNNs on graphons. Empirically, we validate our theoretical findings by showing that randomly initialized MPNNs, without training, exhibit competitive performance compared to their trained counterparts. Moreover, we evaluate different MPNN architectures based on their ability to preserve graph distances, highlighting the significance of our continuous \wlone{} test in understanding MPNNs' expressivity.
\end{abstract}

\section{Introduction}
Graph-structured data is widespread across several application domains, including chemo- and bioinformatics~\citep{Barabasi2004,reiser2022graph}, image analysis~\citep{Sim+2017}, and social-network analysis~\citep{Eas+2010}, explaining the recent growth in developing and analyzing machine learning methods tailored to graphs. In recent years, \new{message-passing graph neural networks} (MPNNs)~\citep{Cha+2020, Gil+2017, Mor+2022} emerged as the dominant paradigm, and alongside the growing prominence of MPNNs, numerous works~\citep{Mor+2019,Xu+2018b} analyzed MPNNs' expressivity. The analysis, typically based on combinatorial techniques such as the \new{$1$-dimensional Weisfeiler--Leman test} (\wlone) for the graph isomorphism problem~\citep{Wei+1968,Wei+1976}, provides explanations of MPNNs' limitations (see~\citep{Mor+2022} for a thorough survey). However, since the graph isomorphism problem only concerns whether the graphs are exactly the same, it only gives insights into MPNNs' ability to distinguish graphs. Hence, such approaches cannot quantify the graphs' degree of similarity. Nonetheless, understanding the similarities induced by MPNNs is crucial for precisely quantifying their generalization abilities~\citep{maskey2022generalization,Mor+2023}, stability~\citep{Gam+2020}, or robustness properties~\citep{Gue2022}.

\paragraph{Present work.}
To address these shortcomings, we show how to integrate MPNNs into the theory of \new{iterated degree measures} first developed by~\citet{grebikFractionalIsomorphismGraphons2021}, which generalizes the \wlone{} and its characterizations to graphons~\citep{Lovasz2012}. Integrating MPNNs into this theory allows us to identify the finest topology in which MPNNs separate points, allowing us to prove a universal approximation theorem for graphons. Inspired by the Weisfeiler--Leman distance~\citep{ChenLMWW22}, we show that metrics on measures also integrate beautifully into the theory of iterated degree measures. Concretely, we define metrics $\proDeltaDist$ via the \new{Prokhorov metric}~\citep{prokhorov_convergence_1956} and $\WDeltaDist$ via an \new{unbalanced Wasserstein metric}~\citep{pele_linear_2008} that metrize the compact topology of iterated degree measures. \emph{By leveraging this theory, we show that two graphons are close in these metrics if, and only if, the output of all possible MPNNs, up to a specific Lipschitz constant and number of layers, is
close as well.} This refines the result of~\citet{ChenLMWW22}, which shows only one direction of this equivalence, i.e., graphs similar in their metric produce similar MPNN outputs. 

We focus on graphons without node feature information to focus purely on MPNNs' ability to distinguish their structure. Our main result offers a topological generalization of classical characterizations of the \wlone, showing that the above metrics represent the optimal approach for defining a metric variant of the \wlone{}. Informally, the main result states the equivalence of our metrics $\proDeltaDist$ and $\WDeltaDist$, the tree distance $\treeDist$ of~\citet{boker_graph_2021a}, the Euclidean distance of MPNNs' output, and tree homomorphism densities. These metrics arise from the topological variants of the \wlone{} test, fractional isomorphisms, MPNNs, and tree homomorphism counts.
\begin{restatable}[informal]{theorem}{informalDistances}
    \label{th:informalDistances}
    The following are equivalent for all graphons $U$ and $W$:
    \begin{enumerate}[noitemsep]
        \item $U$ and $W$ are close in $\proDeltaDist$ (or alternatively $\WDeltaDist$).
            \label{th:informalDistances:proDeltaDist}
        \item $U$ and $W$ are close in $\treeDist$. 
            \label{th:informalDistances:treeDist}
        \item MPNN outputs on $U$ and $W$ are close for all MPNNs with Lipschitz constant $C$ and $L$ layers.
            \label{th:informalDistances:MPNNs}
        \item Homomorphism densities in $U$ and $W$ are close for all trees up to order $k$.
        \label{th:informalDistances:homs}
    \end{enumerate}
\end{restatable}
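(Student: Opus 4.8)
The plan is to route all four conditions through the single object underlying this paper: the distribution on iterated degree measures (DIDM) $\nu_W$ that the theory of \citet{grebikFractionalIsomorphismGraphons2021} associates with a graphon $W$ by pushing forward the uniform measure on $[0,1]$ along the map sending a point to its iterated degree measure. Since the space of DIDMs is compact and metrizable, any two metrics on it inducing the weak topology are automatically \emph{uniformly} equivalent, so the task reduces to showing that each of the four quantities in the statement — with its parameters $C$, $L$, $k$ — is a metric, or, for items 3 and 4, a filtered family of pseudometrics living on the depth-$L$ (resp.\ order-$k$) \emph{truncation} of the DIDM, compatible with this topology, and then to chain the resulting moduli of continuity; compactness is also what makes the truncated topologies increase to the full one as the parameter grows. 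For items 1 and 2: $\proDeltaDist$ and $\WDeltaDist$ are obtained by iterating over layers a Prokhorov resp.\ unbalanced Wasserstein metric on one-step degree measures, so an induction on depth together with a Portmanteau-type argument identifies convergence in either with weak convergence of DIDMs, whence uniform equivalence of the two. The tree distance $\treeDist$ of \citet{boker_graph_2021a} is an optimal-coupling distance between the tree unfoldings of the two graphons; using that fractional isomorphism of graphons coincides with equality of DIDMs, and its quantitative refinement, I would sandwich the depth-$L$ truncation of $\treeDist$ between constant multiples of the depth-$L$ truncations of $\proDeltaDist$ by transporting near-optimal couplings layer by layer, and then let $L\to\infty$.

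For item 3, the ``easy'' direction follows the stability estimate of \citet{ChenLMWW22}: tracking Lipschitz constants through the $\INI$, $\AGG$, $\UPD$, and $\RO$ maps shows that an MPNN with per-layer Lipschitz constant $C$ and $L$ layers is $\mathcal{O}(C^{L})$-Lipschitz as a function of the depth-$L$ truncated DIDM, so closeness in $\proDeltaDist$ forces closeness of all such MPNN outputs. The converse is the point where I expect the main difficulty to lie. Here one must show that the readout functions realizable by depth-$L$ MPNNs form a family rich enough to recover the depth-$L$ truncated topology: a Stone--Weierstrass / lattice argument shows these functions are dense in the continuous functions on the compact truncated DIDM space, but one needs a \emph{quantitative} version — uniformly approximating a fixed finite family of Lipschitz test functions that already metrize that topology, while keeping both the Lipschitz constant and the depth of the approximating networks under explicit control — so that closeness of all bounded-Lipschitz depth-$L$ MPNN outputs implies closeness in the depth-$L$ truncated metric. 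Turning a purely topological universality statement into a two-sided estimate with simultaneous resource bounds, rather than merely invoking universal approximation, is the crux of the argument.

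Finally, for item 4, the homomorphism density $t(T,W)$ of a tree $T$ of order at most $k$ depends only on the order-$k$ truncation of $\nu_W$ and is Lipschitz in the corresponding truncated metric, as one sees by expanding the density recursively along $T$ exactly as one unfolds the color-refinement / MPNN computation; this is one direction. Conversely, the graphon form of the classical equivalence between color refinement and tree homomorphism counts (Dvořák; Dell--Grohe--Rattan) says that matching all tree densities recovers the DIDM, and its quantitative version controls the order-$k$ truncated metric by the agreement of tree densities up to order $k$; letting $k\to\infty$ closes the loop. Assembling the four parts and composing the moduli of continuity — with the depth parameter $L$ of the MPNNs and the order parameter $k$ of the trees playing interchangeable roles — yields the claimed equivalence.
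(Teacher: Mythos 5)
Your first paragraph identifies the right skeleton: route everything through $\nu_W$ and exploit that on a compact metrizable space any two compatible metrics are automatically uniformly equivalent. But in the second and third paragraphs you abandon that non-constructive shortcut and instead propose to \emph{sandwich} the depth-$L$ truncation of $\treeDist$ between constant multiples of the truncated $\proDeltaDist$, and to turn the Stone--Weierstrass density of MPNN readouts into a ``quantitative'' approximation with simultaneous bounds on Lipschitz constant and depth. Neither of these is done in the paper, neither is needed, and the first is very likely not achievable: there is no known Lipschitz or polynomial comparison between $\treeDist$ and $\proDeltaDist$, and the paper explicitly only claims $\varepsilon$--$\delta$ statements whose constants are \emph{not} explicit.

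What the paper actually does is the following, and you should see why it sidesteps the difficulty you flagged as the ``crux.'' It first establishes two purely qualitative facts: (a) for fixed graphons, all five conditions (including $\nu_U=\nu_W$) are equivalent at the zero level, and (b) for sequences, convergence in $\proDeltaDist$, in $\treeDist$, of all MPNN outputs, of all tree homomorphism densities, and weak convergence of $\nu_{W_i}\to\nu_W$ are all equivalent; parts of (a)/(b) are inherited from \citet{grebikFractionalIsomorphismGraphons2021} for tree densities and from \citet{boker_graph_2021a} for the tree distance, and the MPNN parts are exactly your Stone--Weierstrass step (\Cref{th:universalApproximation}) used only \emph{topologically}. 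The $\varepsilon$--$\delta$ statement then follows by contradiction: if no $(L,C,\delta)$ works, take for each $k$ graphons $U_k,W_k$ that are $1/k$-close for all MPNNs of complexity $\le k$ yet $\proDeltaDist$-far; compactness of the graphon space in cut distance yields subsequential limits $\widetilde U,\widetilde W$, for which all MPNN outputs agree, hence by (a) their DIDMs agree, hence by (b) the original subsequences must converge to each other in $\proDeltaDist$ --- a contradiction. That is \Cref{th:DistMPNNConverse}, and the same scheme gives the tree-distance and tree-homomorphism directions. So the fix is: once you have your paragraph-one observation, do \emph{not} go looking for moduli of continuity to chain; prove the topological/zero-level characterizations, and let the compactness contradiction argument produce the (non-explicit) $\delta$ for you.
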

Up to now, except for the connection between the tree distance and tree homomorphism densities by
\citet{boker_graph_2021a},  these equivalences were only known to hold on a discrete level where graphs are either exactly isomorphic or not. The ``closeness'' statements in the above theorem are epsilon-delta statements, i.e., for every $\varepsilon > 0$, there is a $\delta > 0$ such that, if graphons are $\delta$-close in one distance measure, they are $\varepsilon$-close in the other distance measures, where the constants are independent of the actual graphons. In particular, for graphs, these constants are independent of their number of vertices. \Cref{th:informalDistances} is formally stated and proved in~\cref{sec:equivalence}. A key point in the proof is to consider compact operators (graphons) as limits of graphs.  Empirically, we verify our findings by demonstrating that untrained MPNNs yield competitive predictive performance on established graph-level prediction tasks. Further, we evaluate the usefulness of our derived metrics for studying different MPNN architectures. Our theoretical and empirical results also provide an efficient lower bound of the graph distances in \citet{ChenLMWW22, boker_graph_2021a} by using the Euclidean distance of MPNN outputs.

In summary, we quantify which distance MPNNs induce, leading to a more fine-grained understanding of their expressivity and separation capabilities. Our results provide a deeper understanding of MPNNs' capacity to capture graph structure,
precisely determining when they can and when they cannot assign similar and dissimilar vectorial representations to graphs.
\emph{Our work establishes the first rigorous connection between the similarity of graphs and their learned vectorial presentations, paving the way for a more detailed understanding of MPNNs' expressivity and their connection to graph structure.}

\subsection{Related work and motivation}
In the following, we discuss relevant related work and provide additional background and motivation.

\paragraph{MPNNs.} Following \citet{Gil+2017,Sca+2009}, MPNNs learn a vectorial representation, i.e., a $d$-dimensional real-valued vector, representing each vertex in a graph by iteratively aggregating information from neighboring vertices. Subsequently, MPNNs compute a single vectorial representation of a given graph by aggregating these vectorial vertex representations. Notable instances of this architecture include, e.g.,~\citet{Duv+2015,Ham+2017}, and~\citet{Vel+2018}, which can be subsumed under the message-passing framework introduced in~\citet{Gil+2017}. In parallel, approaches based on spectral information were introduced in, e.g.,~\citet{Bru+2014,Defferrard2016,Gam+2019,Kip+2017,Lev+2019}, and~\citet{Mon+2017}---all of which descend from early work in~\citet{bas+1997,Gol+1996,Kir+1995,Mer+2005,mic+2005,mic+2009,Sca+2009}, and~\citet{Spe+1997}. 
\begin{figure}[tb]
\begin{center}
\includegraphics[width=0.9\textwidth]{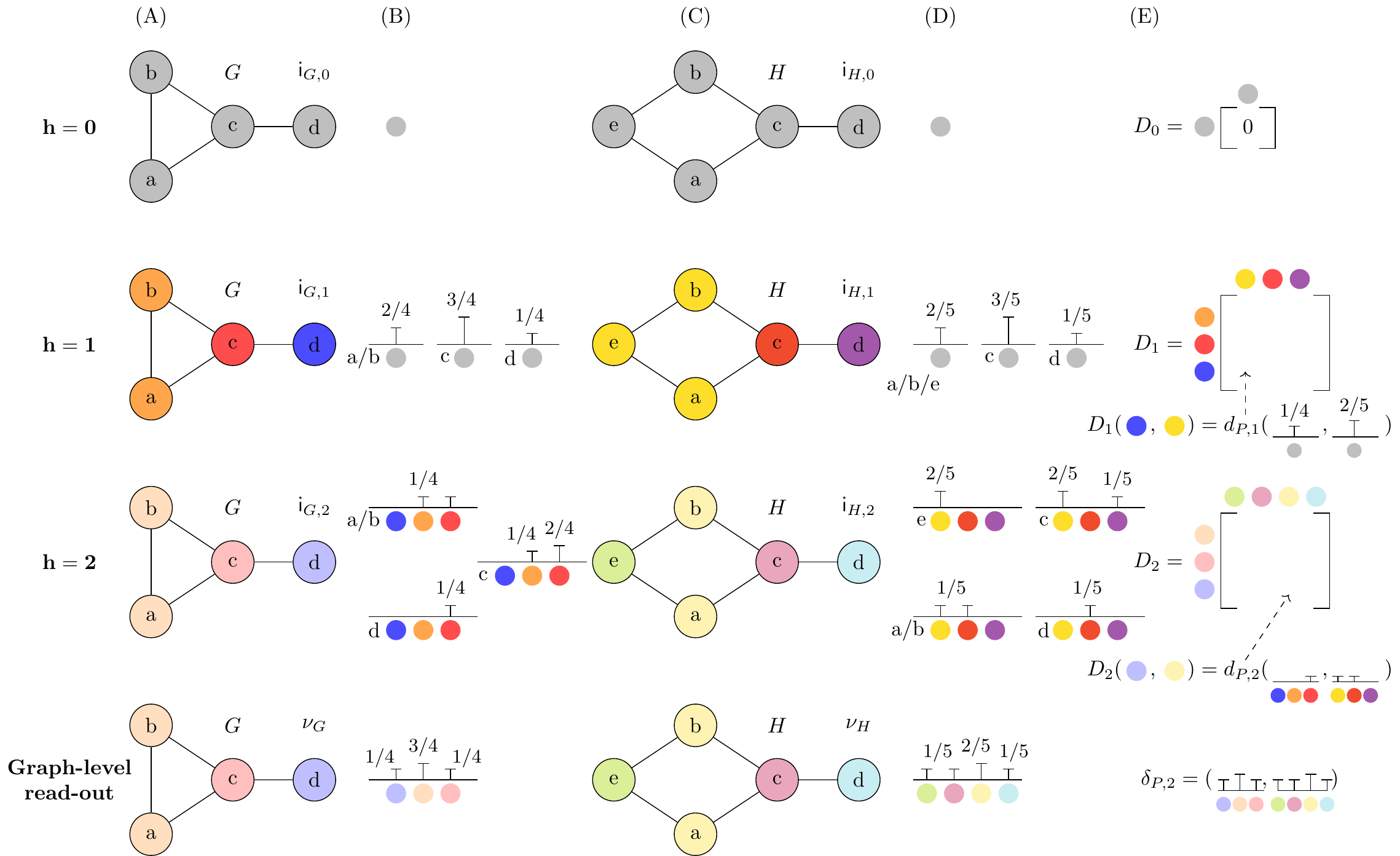}
\end{center}
\caption{Illustration of the procedure to compute the distance $\proDeltaDist$ between graphs $G$ and $H$. Columns A and C show the colors obtained by \wlone{} iterations on graphs $G$ and $H$, respectively. Columns B and D show the iterated degree measures (IDMs) $\mathsf{i}_{G,h}$ and $\mathsf{i}_{H,h}$ for iterations $h=1,2$ (see Eq. \eqref{eq.iterated}), and the output distributions of iterated degree measures (DIDMs) $\nu_G$ and $\nu_H$ (see Eq. \eqref{eq.nu}). Column E depicts the recursive construction to compute the distance $\proDeltaDist$ between the IDMs from columns B and D (outlined in Section \ref{sec:metrics}, detailed in Appendix \ref{app.prokhorov}).}
\end{figure}

\paragraph{Expressivity and limitations of MPNNs.} The \emph{expressivity} of an MPNN is the architecture's ability to express or approximate different functions over a domain, e.g., graphs. High expressivity means the neural network can represent many functions over this domain. In the literature, the expressivity of MPNNs is modeled mathematically based on two main approaches, algorithmic alignment with graph isomorphism test~\cite{Mor+2021} and universal approximation theorems~\cite{Azi+2020,geerts2022}. Works following the first approach study if an MPNN, by choosing appropriate weights, can distinguish the same pairs of non-isomorphic graphs as the~\wlone{} or its more powerful generalization the~\kwl. Here, an MPNN distinguishes two non-isomorphic graphs if it can compute different vectorial representations for the two graphs. Specifically,~\citet{Mor+2019} and~\citet{Xu+2018b} showed that the \wlone{} limits the expressive power of any possible MPNN architecture in distinguishing non-isomorphic graphs. In turn, these results have been generalized to the \kwl{}, see, e.g.,~\citet{Azi+2020,Gee+2020,Mar+2019,Mor+2019,Morris2020b,Mor+2022b}. Works following the second approach study, which functions over the domain of graphs, can be approximated arbitrarily close by an MPNN~\citep{Azi+2020,Che+2019,geerts2022,Mae+2019}; see also next paragraph. Further, see~\cref{sec:ext_rel} for an extended discussion of related works about MPNNs' expressivity. 

\paragraph{Limitations of current universal approximation theorems for MPNNs.} \emph{Universal approximation theorems} assume that the domain of the network is a compact metric space and show that a neural network can approximate any continuous function over this space. Current approaches studying MPNNs' universal approximation capabilities employ the (graph) edit distance to define the metric on the space of graphs, e.g., see~\cite{Azi+2020}. However, the edit distance is not a natural notion of similarity for practical machine learning on graphs. That is, any two graphs on a different number of vertices are far apart, not fully reflecting the similarity of real-world graphs. More generally, the same holds for any pair of non-isomorphic graphs. Hence, any rewiring of a graph leads to a far-away graph. However, we would like to interpret the rewiring of a small number of edges in a large graph as a small perturbation close to the original graph. Additionally, since the edit metric is not compact, the Stone--Weierstrass theorem, the primary tool in universal approximation analysis, cannot be applied directly to the whole space of graphs. For example, \citep{ChenLMWW22} uses other non-compact metrics to circumvent this, artificially choosing a compact subset of graphs from the whole space by uniformly limiting the size of the graphs and the edge weights. Alternatively, \citet{Che+2019} resorted to the graph signal viewpoint, allowing real-valued edge features, and showed that the algorithmic alignment of GNNs with graph isomorphism algorithms can be utilized to prove universal approximation theorems for MPNNs. In contrast, here we suggest using graph similarity measures from graphon analysis, for which simple graphs of arbitrarily different sizes can be close to each other and by which the space of all graphs is dense in the compact metric space of graphons, allowing us to use the Stone--Weierstrass theorem directly; see also~\cref{graph_metrics} for an extended discussion on graph metrics beyond the edit distance. 

\paragraph{Graphon theory.} The book of~\citet{Lovasz2012} provides a thorough treatment of \new{graphons}, which emerged as limit objects for sequences of graphs in the theory of dense graph limits developed by~\citet{lovasz_limits_2006, borgs_convergent_2008, borgs_convergent_2012}. These limit objects allow the completion of the space of all graphs to a compact metric space. When endowed with the \new{cut distance}, first introduced by \citet{frieze_quick_1999a}, the space of graphons is compact \citep[Theorem $9.23$]{Lovasz2012}). We can interpret graphons in two ways. First, as a weighted graph on the continuous vertex set $[0,1]$. Secondly, we can think of every point from $[0,1]$ as an infinite set of vertices and two points in $[0,1]$ as being connected by a random bipartite graph with edge density given by the graphon. This second point of view naturally leads to using graphons as generative models of graphs and the theory of graph limits. Our work follows the first point of view, and we emphasize that we \emph{do not} use graphons as a generative model. This also means that we do not need large graphs for the asymptotics to work since every graph---and, in particular, every small graph---is also a graphon. \citet{grebikFractionalIsomorphismGraphons2021} generalized the \wlone{} test and various of its characterizations to graphons, while \citet{boker_weisfeilerleman_2021} did this for the \kwl{} test.

\paragraph{Graphon theory in graph machine learning.} \citet{RuizI,keriven2020convergence, maskey2021transferability} use graphons to analyze graph signal processing and MPNNs. These papers assume a single fixed graphon generating the data, i.e., any graph from the dataset is randomly sampled from this graphon, and showed that spectral MPNNs on graphs converge to spectral MPNNs on graphons as the size of the sampled graphs increases. \citet{maskey2022generalization} developed a generalization analysis of MPNNs, assuming a pre-defined finite set of graphons. Further, \citet{Express_Keriven2021} compared the expressivity of two types of spectral MPNNs on spaces of graphons, assuming graphons are Lipschitz continuous kernels. To that, the metric on the graphon space is taken as the  $L_{\infty}$ distance between graphons as functions. However, the paper does not directly characterize the separation power of the studied classes of MPNNs, and it requires the choice of an arbitrary compact subset to perform the analysis. In contrast, in the current paper, we use graphon analysis to endow the domain of definition of MPNNs, the set of all graphs, with a ``well-behaved'' structure describing a notion of natural graph similarity and allowing us to analyze properties of MPNNs regardless of any model of the data distribution. 

\section{Background}\label{sec:factsOnMeasures}
Here, we provide the necessary background and define notation. 

\paragraph{Analysis.}
We denote the Lebesgue measure on $[0,1]$ by $\lambda$
and consider measurability w.r.t.\ the Borel $\sigma$-algebra on $[0,1]$.
Let $(X, \borel)$ be a standard Borel space, where we sometimes just write $X$ when $\borel$
is understood and then use $\borel(X)$ to explicitly denote $\borel$.
For a measure $\mu$ on $X$, we let $\lVert \mu \rVert \coloneqq \mu(X)$ denote its \new{total mass}, and for
a standard Borel space $(Y, \CC)$
and a measurable map $f \colon X \to Y$,
let the \new{push-forward $f_* \mu$ of $\mu$ via $f$}
be defined by $f_*\mu(A) \coloneqq \mu(f^{-1}(A))$
for every $A \in \CC$.
Let $\pMeasures(X)$ and $\measures(X)$ denote the spaces of
all probability measures on $X$ and all measures of total mass at most one on $X$, respectively.
Let $C_b(X)$ denote the set of all bounded continuous real-valued functions on $X$.
We endow $\pMeasures(X)$ and $\measures(X)$
with the topology generated by the maps $\mu \mapsto \int_X f d\mu$
for $f \in C_b(X)$,
the \textit{weak topology} (\emph{weak* topology} in functional analysis); see
\cite[Section~$17$.E]{kechris_classical_1995} or \cite[Chapter~8]{bogachev_measure_2007}.
Then, $\pMeasures(X)$ and $\measures(X)$ are again standard Borel spaces, and
if $K$ is a compact metric space, then $\pMeasures(K)$ and $\measures(K)$ are compact metrizable; see~\citep[Theorem $17.22$]{kechris_classical_1995}.
For a sequence $(\mu_i)_i$ of measures and a measure $\mu$,
we have $\mu_i \rightarrow \mu$ if and only if
$\int_X f d\mu_i \rightarrow \int_X f d\mu$
for every $f \in C_b(X)$, and
for measures $\mu, \nu$, we have $\mu = \nu$ if and only if
$\int_X f d\mu = \int_X f d\nu$
for every $f \in C_b(X)$.
In both statements, we may replace $C_b(X)$ by a dense
(w.r.t.\ the sup norm) subset.
See \Cref{sec:topology} for basics on topology. 
We denote a function $f \colon A\rightarrow B$ also by $f(\blank)$ or $f_{\blank}$
in case the evaluation of $f$ on a point $a\in A$ is denoted by $f(a)$ or $f_a$, respectively.

\paragraph{Graphs and graphons.} A \new{graph} $G$ is a pair $(V(G),E(G))$ with \emph{finite} sets of
\new{vertices} or \new{nodes} $V(G)$ and \new{edges} $E(G) \subseteq \{ \{u,v\} \subseteq V(G) \mid u \neq v \}$. If not otherwise stated, we set $n \coloneqq |V(G)|$, and the graph is of \new{order} $n$. We also call the graph $G$ an $n$-order graph. For ease of notation, we denote the edge $\{u,v\}$ in $E(G)$ by $uv$ or $vu$. The \new{neighborhood} of $v$ in $V(G)$ is denoted by $N(v) \coloneqq  \{ u \in V(G) \mid vu \in E(G) \}$ and the \new{degree} of a vertex $v$ is  $|N(v)|$. Two graphs $G$ and $H$ are \new{isomorphic} and we write $G \simeq H$ if there exists a bijection $\varphi \colon V(G) \to V(H)$ preserving the adjacency relation, i.e., $uv$ is in $E(G)$ if and only if $\varphi(u)\varphi(v)$ is in $E(H)$. Then $\varphi$ is an \new{isomorphism} between
$G$ and $H$. 

A \new{kernel} is a measurable function $U \colon [0,1]^2 \to \RR$,
and a symmetric measurable function $W \colon [0,1]^2 \to [0,1]$ is called a \new{graphon}.
The set of all graphons is denoted by $\CW$.
Graphons generalize graphs in the following way. Every graph $G$ can be viewed as a graphon $W_G$ by partitioning $[0,1]$
into $n$ intervals $(I_v)_{v \in V(G)}$, each of mass $1/n$, and letting
$W_G(x,y)$ for $x \in I_u, y \in I_v$ be one or zero depending on
whether $uv$ is an edge in $G$ or not.
The \new{homomorphism density} $t(F, W)$ of a graph $F$ in a graphon $W$ is $t(F, W) \coloneqq \int_{[0,1]^{V(F)}} \prod_{ij \in E(F)} W(x_i, x_j) \, d(\bar{x})$, where $\bar{x}$ is the tuple of variables $x_v$ for $v \in V(G)$.

\paragraph{The tree distance.} A graphon $W$ defines an operator $T_W \colon L^2([0,1]) \to L^2([0,1])$
on the space $L^2([0,1])$ of square-integrable functions modulo
equality almost everywhere
by setting $(T_W f)(x) \coloneqq \int_{[0,1]} W(x, y) f(y) \, d \lambda(y)$
for every $x \in X$ and every $f \in L^2([0,1])$.
\citet{boker_graph_2021a} defined the \new{tree distance} of two graphons $U$ and $W$ by
$\treeDist(U, W) \coloneqq \inf_S \sup_{f,g} \lvert \langle f, (T_U \circ S - S \circ T_W)g \rangle \rvert$,
where the supremum is taken over all measurable functions $f,g \colon [0,1] \to [0,1]$ and
the infimum is taken over all \new{Markov operators} $S$, i.e.,
operators $S \colon L^2([0,1]) \to L^2([0,1])$ such that $S(f) \ge 0$
for every $f \ge 0$,
$S(\bm{1}_{[0,1]}) = \bm{1}_{[0,1]}$,
and $S^*(\bm{1}_{[0,1]}) = \bm{1}_{[0,1]}$ also  for the \new{Hilbert adjoint} $S^*$.
Markov operators are the infinite-dimensional analog to
doubly stochastic matrices, see \citep{eisner_operator_2015} for a thorough treatment of Markov operators. One can verify that the tree distance is a lower bound for
the \emph{cut distance}~\citep{boker_graph_2021a}.

\paragraph{Iterated measures.}\label{sec:iteratedDegreeMeasures}
Here,  we define \emph{iterated degree measures (IDMs)}, which is basically a sequence of measures, by adapting the definition of~\citet{grebikFractionalIsomorphismGraphons2021}. Let $\MM_0 \coloneqq \{1\}$ and inductively define $\MM_{h+1} \coloneqq \measures(\MM_h)$
for every $h \ge 0$. 
Then, the spaces $\MM_0, \MM_1, \dots$ are all compact metrizable.
For $0 \le h < \infty$, inductively define the \textit{projection} $p_{h+1,h} \colon \MM_{h+1} \to \MM_h$ by letting
$p_{1,0}$ be the trivial map and, for $h > 0$, letting $p_{h+1,h}(\alpha) \coloneqq (p_{h,h-1})_* \alpha$
for every $\alpha \in \MM_{h+1} = \measures(\MM_h)$.
This extends to $p_{h, \ell} \colon \MM_h \to \MM_\ell$
for $0 \le \ell \le h < \infty$
by composition in the obvious way.
Let
\begin{equation*}
    \MM \coloneqq \MM_\infty \coloneqq
    \bigl\{ (\alpha_h)_h \in \prod_{h \in \NN} \MM_h \mid p_{h+1,h}(\alpha_{h+1}) = \alpha_h \text{ for every $h \in \NN$} \bigr\}
\end{equation*}
be the \new{inverse limit} of $\MM_0, \MM_1, \dots$;
see the Kolmogorov Consistency Theorem~\citep[Theorem~$17.20$]{kechris_classical_1995}).
Then, $\MM$ is compact metrizable~\citep[Claim $6.2$]{grebikFractionalIsomorphismGraphons2021}.
For $0 \le h < \infty$, let $p_{\infty, h} \colon \MM \to \MM_h$ denote the projection to the $h$th component.
We remark that we slightly simplified the definition of \citet{grebikFractionalIsomorphismGraphons2021}
by not including previous IDMs from $\MM_{h'}$ for $h' \le h$ in $\MM_{h+1}$ and directly defining $\MM$ as the inverse limit;
corresponding to the definition of the space $\PP$ in \citep{grebikFractionalIsomorphismGraphons2021}.
These changes yield equivalent definitions that simplify the exposition.

\paragraph{The \texorpdfstring{\wlone{}}{1-WL} for graphons.} See~\cref{wlone} for the standard definition of \wlone{} on graphs. \citet{grebikFractionalIsomorphismGraphons2021} generalized \wlone{}
to graphons by defining a map $\mathsf{i}_{W} \colon [0,1] \to \MM$, mapping every point of the graphon $W \in \mathcal{W}$ to an iterated degree measure as follows.
First, inductively define the map $\mathsf{i}_{W,h} \colon [0,1] \to \MM_h$ by
setting $\mathsf{i}_{W,0}(x) \coloneqq 1$ for every $x \in [0,1]$
and
\begin{equation} \label{eq.iterated}
    \mathsf{i}_{W,h+1}(x) \coloneqq A \mapsto \int_{\mathsf{i}_{W,h}^{-1}(A)} W(x,y) d\lambda(y),
\end{equation}
for all $x \in [0,1], A \in \borel(\MM_h)$, and $h \ge 0$.
Intuitively, $\mathsf{i}_{W,h}(x)$ is the color assigned to point $x$ after $h$ iterations of \wlone. Observe that $\mathsf{i}_{W,1}(x)$ encodes the degree of point $x$, and $\mathsf{i}_{W,h}(x)$ for $h>1$ represents the iterated degree sequence information. Then, we define $\mathsf{i}_{W} \coloneqq \mathsf{i}_{W, \infty} \colon [0,1] \to \MM$ by
$\mathsf{i}_{W}(x) \coloneqq \mathsf{i}_{W, \infty}(x) \coloneqq \prod_{h \in \NN} \mathsf{i}_{W,h}(x)$
and let
\begin{equation} \label{eq.nu}
    \nu_{W} \coloneqq \nu_{W,\infty} \coloneqq (\mathsf{i}_{W})_* \lambda \in \pMeasures(\MM)
\end{equation}
be the \new{distribution of iterated degree measures (DIDM) of $W$}. In other words, $\nu_{W}(A)$ is the volume that the colors in $A$ occupy in the graphon domain $[0,1]$.
Then, the \emph{\wlone{} test on graphons} is the mapping that takes a graphon $W\in\mathcal{W}$ and returns $\nu_W$.
In addition to $\nu_W$,
we also define
$\nu_{W,h} \coloneqq (\mathsf{i}_{W,h})_* \lambda \in \pMeasures(\MM_h)$ for $0 \le h < \infty$, corresponding
to running \wlone{} for $h$ rounds.

While every DIDM of a graphon is a measure from the compact space $\pMeasures(\MM)$, not every measure
in $\pMeasures(\MM)$ is the DIDM of a graphon.
\citet{grebikFractionalIsomorphismGraphons2021}
address this by giving a definition of a DIDM that is independent of a specific graphon.
For us, it suffices to remark that the set
$\DD_h \coloneqq \{ \nu_{W,h} \mid W \text{ graphon} \} \subseteq \pMeasures(\MM_h)$
is compact as it
is the image of the compact space of graphons \citep[Theorem $9.23$]{Lovasz2012}
under a continuous function \citep{grebikFractionalIsomorphismGraphons2021}.
For us, this means that $\DD_h$ and $\pMeasures(\MM_h)$
can be used interchangeably in our arguments, and we do not have
to be overly careful with distinguishing them.
For simplicity, we simply stick to $\pMeasures(\MM_h)$ and
refer to all elements of $\pMeasures(\MM_h)$ as \emph{DIDMs}.

\paragraph{Message-passing graph neural networks.} 
MPNNs learn a $d$-dimensional real-valued vector for each vertex in a graph by aggregating information from neighboring vertices; see~\cref{gnn_app} for more details. 
Here, we consider MPNNs where both the update functions and the readout functions are Lipschitz continuous and use sum aggregation normalized by the order of the graph.
Formally, we first let
$\phib = (\varphi_i)_{i = 0}^L$ denote a tuple of continuous functions
$\varphi_0 \colon \mathbb{R}^0 \to \mathbb{R}^{d_0}$ and
$\varphi_t \colon \mathbb{R}^{d_{t-1}} \to \mathbb{R}^{d_{t}}$ for $t \in [L]$,
where we simply view $\varphi_0$ as an element of $\RR^{d_0}$.
Furthermore, let $\psi$ denote a continuous function
$\psi \colon \RR^{d_L} \to \RR^{d}$.
For a graph $G$, an MPNN initializes a feature $\hb_{v}^\tup{0} \coloneqq \varphi_0 \in \mathbb{R}^{d_0}$.
Then, for $t \in [L]$,
we compute $\hb_{\blank}^\tup{t} \colon V(G) \to \RR^{d_t}$
and the single graph-level feature $\hb_G \in \RR^d$ after $L$ layers by
\begin{align*}
    &\hb_{v}^\tup{t} \coloneqq \varphi_t \biggl(
    \frac{1}{\lvert V(G) \rvert} \sum_{u \in N(v)} \hb_{u}^\tup{t-1} \biggr)&
    &\text{and}&
	&\hb_G \coloneqq
    \psi\biggl(
    \frac{1}{\lvert V(G) \rvert} \sum_{v \in V(G)}
    \hb_{v}^{\tup{L}} \biggr).&
\end{align*}
For a graphon $W \in \CW$, an MPNN 
initializes a feature $\hb_{x}^\tup{0} \coloneqq \varphi_0 \in \mathbb{R}^{d_0}$
for $x \in [0,1]$.
Then, for $t \in [L]$,
we compute $\hb_{\blank}^\tup{t} \colon [0,1] \to \RR^{d_t}$
and the single graphon-level feature $\hb_W \in \RR^d$ after $L$ layers by
\begin{align*}
    &\hb_{x}^\tup{t} \coloneqq
    \varphi_t \biggl(
    \int_{[0,1]} W(x,y) \hb_{y}^\tup{t-1} \, d\lambda(y) \biggr)&
    &\text{and}&
	&\hb_W \coloneqq
    \psi\biggl(
    \int_{[0,1]}
    \hb_{x}^{\tup{L}} \,d\lambda(x) \biggr).&
\end{align*}
This generalizes the previous definition, i.e.,
for a graph $G$ and its (induced) graphon $W_G$, we have
$\hb_{G} = \hb_{W_G}$ and $\hb_{v}^{\tup{t}} = \hb_{x}^{\tup{t}}$
for all $t \in [L]$, $v \in V(G)$, and $x \in I_v$; see~\Cref{sec:definitionsMPNNs}.

We now extend the definition of MPNNs to IDMs.
While the above definition of $\hb_{\blank}^\tup{t}$
depends on a specific graphon $W$,
an IDM already carries the aggregated information
of its neighborhood.
Hence, the initial feature
$\hb_{\alpha}^\tup{0} \coloneqq \varphi_0 \in \mathbb{R}^{d_0}$
for $\alpha \in \MM_0$ and
$\hb_{\blank}^\tup{t} \colon \MM_t \to \RR^{d_t}$
\emph{are defined for all IDMs at once}.
The intuition is that MPNNs cannot assign at layer $t$ different feature values
to nodes that have the same color at step $t$ of the graphon
\wlone{} algorithm. Hence, it is enough to only consider an assignment
between colors and feature values,
and not consider the graph/graphon structure directly.
Then, for a DIDM $\nu \in \pMeasures(\MM_L)$,
we define the single DIDM-level feature $\hb_\nu \in \RR^d$.
Formally, we let
\begin{align*}
    &\hb_{\alpha}^{\tup{t}} \coloneqq \varphi_t \biggl(\int_{\MM_{t-1}} \hb_{\blank}^{\tup{t-1}} \, d \alpha \biggr)&
    &\text{and}&
    &\hb_{\nu} \coloneqq \psi\biggl(\int_{\MM_L} \hb_{\blank}^{\tup{L}} \, d\nu \biggr).&
\end{align*}
That is,
messages are aggregated via the IDM itself.
In addition to
$\hb_{\blank}^{\tup{t}} \colon \MM_t \to \RR^{d_t}$,
and
$\hb_{\blank} \colon \pMeasures(\MM_L) \to \RR^{d}$,
we define
$\hb_{\blank}^{\tup{t}} \colon \MM \to \RR^{d_t}$
and
$\hb_{\blank} \colon \pMeasures(\MM) \to \RR^{d}$
by setting
$\hb_{\alpha}^{\tup{t}} \coloneqq \hb_{p_{\infty, t}(\alpha)}^{\tup{t}}$
for every $\alpha \in \MM$
and
$\hb_{\nu} \coloneqq \hb_{(p_{\infty, L})_* \nu}$
for every $\nu \in \pMeasures(\MM)$;
it will always be clear from the context which of these functions we mean.
These definitions of MPNNs on IDMs  extend the previous
definitions of MPNNs on graphons via the following identities. For a graphon
$W\in\mathcal{W}$, we have
$\hb_{W} = \hb_{\nu_{W,L}} = \hb_{\nu_{W}}$
and
$\hb_{x}^{\tup{t}}
= \hb_{\mathsf{i}_{W,t}(x)}^{\tup{t}}
= \hb_{\mathsf{i}_{W}(x)}^{\tup{t}}$
for almost every \ $x\in[0,1]$; see~\Cref{sec:definitionsMPNNs}.
That is, the feature values of an MPNN on a graphon $W$
are equal to the feature values of that MPNN on the (D)IDM computed by the
\wlone{} on $W$.
We call a tuple $\phib$ as defined above an \new{($L$-layer) MPNN model}
if $\varphi_t$ is Lipschitz continuous on
$\{\int_{\MM_{t-1}} \hb_{\blank}^{\tup{t-1}} \, d \alpha \mid \alpha \in \MM_t \}$
for every $t \in [L]$, 
and we call $\psi$ as defined above \new{Lipschitz} if it is Lipschitz continuous on
$\{\int_{\MM_L} \hb_{\blank}^{\tup{L}} \, d\nu \mid \nu \in \pMeasures(\MM_L) \}$.
We use $\lVert \blank \rVert_L$ to denote the Lipschitz constants on these sets.
In this paper, $\phib$ and $\psi$ always denote an MPNN model and a Lipschitz function, respectively.
We use the term $\infty$-layer MPNN model to refer
to an $L$-layer MPNN model for an arbitrary $L$.

\section{Metrics on iterated degree measures}
\label{sec:metrics}

\citet{ChenLMWW22} recently introduced the \new{Weisfeiler--Leman distance}, a polynomial-time computable pseudometric on graphs combining the \wlone{} test with the well-known \new{Wasserstein metric} from optimal transport~\citep{villani_optimal_2008}, where their approach resembles that of iterated degree measures as introduced by~\citet{grebikFractionalIsomorphismGraphons2021}. To use metrics from optimal transport,~\citet{ChenLMWW22} resorted to mean aggregation instead of sum aggregation to obtain probability measures instead of finite measures with total mass at most one. Using mean aggregation, however, is different from the \wlone{} test, which relies on sum aggregation. That is, sum aggregation allowed the algorithm to start with a constant coloring, something impossible with mean aggregation, potentially leading to a constant coloring. \citet{ChenLMWW22} circumvented this problem by encoding vertex degrees and the total number of vertices in the initial coloring.

Here, we show that the \new{Prokhorov metric} \citep{prokhorov_convergence_1956} and an unbalanced variant of the Wasserstein metric can be beautifully integrated into the theory of iterated degree measures, eliminating the need to work around the limits of mean aggregation. Both metrics metrize the weak topology,
which is precisely the topology of the space $\MM_h$ of IDMs
is endowed with; see~\cref{sec:iteratedDegreeMeasures}.
In modern-day literature, the Prokhorov metric is usually only defined for probability measures \citep[Section $11.3$]{dudley_2002},
yet the original definition by \citet{prokhorov_convergence_1956}
already was for finite measures. That is, let $(S, d)$ be a complete separable metric space with Borel $\sigma$-algebra $\borel$.
For a subset $A \subseteq S$ and $\varepsilon \ge 0$, let
$A^\varepsilon \coloneqq \{ y \in S \mid d(x,y) < \varepsilon \text{ for some } x \in A\}$,
and define the \textit{Prokhorov metric} $\PMetric$
on $\measures(S)$ by
\begin{equation*}
    \PMetric(\mu, \nu) \coloneqq \inf \{ \varepsilon > 0 \mid
        \mu(A) \le \nu(A^\varepsilon) + \varepsilon \text{ and }
        \nu(A) \le \mu(A^\varepsilon) + \varepsilon
        \text{ for every } A \in \borel \}.
\end{equation*}
As the name suggests, $\PMetric$ is a metric on $\oneMeasures(S)$ \cite[Section $1.4$]{prokhorov_convergence_1956},
and moreover,
convergence in $\PMetric$ is equivalent to convergence in the weak topology \cite[Theorem $1.11$]{prokhorov_convergence_1956}.
For the \emph{unbalanced Wasserstein metric}
let $\mu, \nu \in \oneMeasures(S)$,
where we assume $\lVert \mu \rVert \ge \lVert \nu \rVert$ without loss of generality,
and define
\begin{equation*}
    \WMetric(\mu, \nu) \coloneqq \lVert \mu \rVert - \lVert \nu \rVert
    + \inf_{\gamma \in \CM(\mu, \nu)} \int_{S \times S} d(x, y) \, d \gamma(x, y),
\end{equation*}
where $\CM(\mu, \nu)$ is the set of all measures $\gamma \in \oneMeasures(S \times S)$
such that $(p_1)_* \gamma \le \mu$ and $(p_2)_* \gamma = \nu$.
Here, $p_1$ and $p_2$ are the projections from $S \times S$ upon $S$
to the first and the second component, respectively,
i.e., $(p_1)_* \gamma(A) = \gamma(A \times S)$
and $(p_2)_* \gamma(A) = \gamma(S \times A )$.
We prove that $\WMetric$ is a well-defined metric on $\measures(S, \borel)$
that coincides with the Wasserstein distance \citep[Section 11.8]{dudley_2002}
on probability measures.
Furthermore, it satisfies
$\WMetric(\mu, \nu) \le 2 \PMetric(\mu, \nu) \le 4 \sqrt{\WMetric(\mu, \nu)}$
for all $\mu, \nu \in \oneMeasures(S)$,
which implies that it metrizes the weak topology; see~\Cref{sec:missingProofsMetrics}.

The metric $\PMetric$ is used to define the metric $\proDistOf{h}$ on $\MM_h$
for $0 \le h \le \infty$ as follows. Let $\proDistOf{0}$ be the trivial metric on the one-point space $\MM_0$ and,
for $h \ge 0$, inductively let
$\proDistOf{h + 1}$ be the Prokhorov metric on $(\MM_{h+1}, \proDistOf{h})$.
Then, define $\proDistOf{\infty}$ on $\MM_\infty$ by setting
$\proDist(\alpha, \beta) \coloneqq \proDistOf{\infty}(\alpha, \beta) \coloneqq \sup_{h \in \NN} \frac{1}{h} \cdot \proDistOf{h}(\alpha_h, \beta_h)$
for $\alpha, \beta \in \MM_\infty$.
The factor of $1/h$
is included
in this definition on purpose to ensure that $\proDistOf{\infty}$
metrizes the product topology and not the uniform topology.
The metric $\WDistOf{h}$ on $\MM_h$ is defined completely analogously via $\WMetric$ instead of $\PMetric$.

The metrics $\proDistOf{h}$ and $\WDistOf{h}$ on $\MM_h$
allow us, for example, to compare
the IDM of a point in a graphon
to the IDM of a point in another graphon.
To compare two graphons' distributions on iterated degree measures,
we let $\proDeltaDistOf{h}$ be the Prokhorov metric
on $(\pMeasures(\MM_h),\proDistOf{h})$
for $0 \le h \le \infty$
and again define $\WDeltaDistOf{h}$ analogously via the distance $\WMetric$.
We note that these metrics directly apply to graphons $U, W \in \CW$
by simply comparing their DIDMs $\nu_{U,h}$ and $\nu_{W,h}$.
\begin{restatable}{theorem}{metricsTheorem}
    \label{th:metrics}
    Let $0 \le h \le \infty$.
    The metrics $\proDistOf{h}$ and $\WDistOf{h}$ are well-defined  and
    metrize the topology of $\MM_h$.
    The metrics $\proDeltaDistOf{h}$ and $\WDeltaDistOf{h}$ are well-defined and
    metrize the topology of $\pMeasures(\MM_h)$.
    Moreover, these metrics are computable on graphs
    in time polynomial in the size of the input graphs and $h$,
    up to an additive error of $\varepsilon$ in the case of $\WDistOf{\infty}$ and $\WDeltaDistOf{\infty}$.
\end{restatable}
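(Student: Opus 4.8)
The plan is to establish the three claims in turn. The metrization statements — for $\MM_h$ and for $\pMeasures(\MM_h)$ — follow from a single induction on $h$ resting on the two properties recalled above: on a complete separable metric space $S$, the Prokhorov metric $\PMetric$ is a genuine metric on $\measures(S)$ whose convergence coincides with the weak topology, and the unbalanced Wasserstein metric $\WMetric$ has the same property by virtue of the sandwich inequality $\WMetric \le 2\PMetric \le 4\sqrt{\WMetric}$. The complexity claim is a separate algorithmic argument exploiting that on a graph every iterated degree measure is a finitely supported measure with rational masses.

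For $\MM_h$ with $0\le h<\infty$ I would induct. The base case $\MM_0$ is a one-point space; in the inductive step, assuming $\proDistOf{h}$ is a metric inducing the (compact, metrizable) topology of $\MM_h$, the space $(\MM_h,\proDistOf{h})$ is compact, hence complete and separable, so $\PMetric$ built from it is a genuine metric on $\measures(\MM_h)=\MM_{h+1}$ metrizing the weak topology, which is precisely the topology $\MM_{h+1}$ carries; the argument for $\WDistOf{h+1}$ is identical via the sandwich inequality. For $h=\infty$ I would check by hand that $\proDistOf{\infty}=\sup_{h\ge 1}\tfrac1h\,\proDistOf{h}(\alpha_h,\beta_h)$ is a finite metric — finite-valued since each $\proDistOf{h}\le 1$, positive-definite since the projections $p_{\infty,h}$ separate the points of $\MM_\infty$ — and that it induces the product topology: because the weights $1/h$ vanish, $\proDistOf{\infty}$ is a uniform limit of the continuous maps $\max_{h\le H}\tfrac1h\,\proDistOf{h}\circ(p_{\infty,h}\times p_{\infty,h})$ and hence continuous on $\MM_\infty\times\MM_\infty$, so the identity from $\MM_\infty$ with the product topology to $\MM_\infty$ with the $\proDistOf{\infty}$-topology is a continuous bijection from a compact space into a Hausdorff space, i.e.\ a homeomorphism; the same works for $\WDistOf{\infty}$. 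Once every $(\MM_h,\proDistOf{h})$, $0\le h\le\infty$, is known to be a compact metric space, the statement about $\pMeasures(\MM_h)$ is immediate from the same two facts, noting that $\proDeltaDistOf{h}$ and $\WDeltaDistOf{h}$ restrict to the classical Prokhorov and Wasserstein metrics on probability measures.

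For computability, the starting point is that for a graph $G$ on $n$ vertices and $x\in I_v$ one has $\mathsf{i}_{W_G,h}(x)=\tfrac1n\sum_{u\in N(v)}\delta_{\mathsf{i}_{W_G,h-1}(u)}$, so $h$ rounds of \wlone{} produce, in time polynomial in $n$ and $h$, explicit descriptions of the at most $n$ distinct IDMs $\mathsf{i}_{W_G,h}(v)$ as finitely supported measures with masses in $\tfrac1n\NN$ and of $\nu_{W_G,h}$ as a finitely supported measure over them. Given $G$ and $H$, I would compute the ground distances $\proDistOf{\ell}$ (respectively $\WDistOf{\ell}$) between all $O((n+m)^2)$ relevant pairs of level-$\ell$ IDMs in increasing order of $\ell$: at level $\ell+1$, the distance between two explicitly given finitely supported measures, with the level-$\ell$ distance matrix as input, is computed by a polynomial-time subroutine — a parametric feasibility/max-flow search over a polynomially-sized set of candidate thresholds in the $\PMetric$ case, and a transportation linear program in the $\WMetric$ case — and one last call of this subroutine yields $\proDeltaDistOf{h}$ (respectively $\WDeltaDistOf{h}$) on $\nu_{W_G,h}$ and $\nu_{W_H,h}$. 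With $h$ levels and $O((n+m)^2)$ pairs per level this is polynomial in $n+m$ and $h$. For $h=\infty$ I would invoke that \wlone{} on a graph stabilizes after fewer than $n$ refinement rounds and deduce that the ground distances, and hence $\proDeltaDistOf{\infty}$, stabilize too, so the finite procedure computes them exactly; for $\WDistOf{\infty}$ and $\WDeltaDistOf{\infty}$ I only claim an additive error $\varepsilon$, since there the tail of the supremum $\sup_h\tfrac1h(\,\cdot\,)$ together with the linear programs is handled only approximately.

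The parts that I expect to require real care all lie in the complexity claim: showing that the Prokhorov distance between two explicitly given finitely supported measures is \emph{exactly} polynomial-time computable — isolating the finite set of candidate values of $\varepsilon$ and reducing feasibility of a fixed $\varepsilon$ to a bipartite flow problem — and establishing the stabilization needed to reduce the $h=\infty$ distances to a finite computation. The metrization statements, by contrast, amount to bookkeeping on top of the already-cited behaviour of $\PMetric$ and $\WMetric$ and the standard fact that a weighted supremum of uniformly bounded metrics with vanishing weights metrizes a product.
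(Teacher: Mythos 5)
Your metrization argument follows the same two-level induction the paper uses, and your treatment of $h=\infty$ via a weighted product metric on a compact space is a standard equivalent; no issues there. The one genuine gap is in your claim that $\proDeltaDistOf{\infty}$ is computed exactly because ``\wlone{} on a graph stabilizes after fewer than $n$ refinement rounds, hence the ground distances stabilize too.'' That implication does not hold. Once the color partition stabilizes at some round $T$, the combinatorial structure of the level-$(h+1)$ IDMs (how much mass each vertex puts on each color class) is indeed fixed for $h\ge T$, but the Prokhorov distance between two level-$(h+1)$ IDMs is computed \emph{with respect to the level-$h$ distance matrix as ground metric}, and that matrix can keep changing. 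By the monotonicity lemma (the analogue of the paper's \Cref{le:projectionDistance}), the entries of the distance matrix are non-decreasing in $h$, so they may keep strictly increasing well past color stabilization; nothing in color stabilization caps this. The paper flags exactly this concern and closes the gap differently: it observes that on graphs with $n$ and $m$ vertices every entry of every distance matrix is an integer multiple of $1/(nm)$ (this requires checking that the transportation LP has integral optima after scaling), and then invokes monotonicity to conclude each of the $nm$ entries can jump at most $nm$ times, so the matrix is constant after $(nm)^2$ rounds. Without a quantitative argument of this kind you have not bounded the number of rounds your finite procedure must run, and hence have not established that $\proDeltaDistOf{\infty}$ is computable exactly in polynomial time. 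Your $\WDistOf{\infty}$/$\WDeltaDistOf{\infty}$ treatment via an additive-$\varepsilon$ truncation is fine and matches the paper.
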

While these metrics are polynomial-time computable,
in \Cref{sec:missingProofsMetrics},
we derive the same impractical upper bound of $\mathcal{O}(h \cdot n^5 \cdot \log n)$
for $\WDeltaDistOf{h}$ as \citet{ChenLMWW22} get for their Weisfeiler--Leman distance
and the even worse bound of $\mathcal{O}(h \cdot n^7)$ for $\proDeltaDistOf{h}$.
This means that these metrics are not suitable as a computational tool in practice.
\Cref{th:informalDistances} hints that we can instead use easy-to-compute MPNNs
to lower-bound these metrics,
which leads to our experiments in \Cref{sec:experimentalEvaluation}.

MPNNs are Lipschitz in the metrics we defined,
where the Lipschitz constant only depends on
basic properties of the MPNN model.
That is, if two graphons are close in our metrics,
then MPNNs outputs for \emph{all} MPNN models
up to a specific Lipschitz constant are close.
Formally, let $\boldsymbol{\varphi} = (\varphi_i )_{i = 0}^L$ be an MPNN model with $L$ layers,
and for $t \in \{0, \dots, L\}$, let $\phib_t \coloneqq (\varphi_i)_{i = 0}^t$.
Then, we inductively define the \emph{Lipschitz constant} $C_{\boldsymbol{\varphi}} \ge 0$ of $\phib$
by $C_{\boldsymbol{\varphi}_0} \coloneqq 0$ for $t = 0$ and
    $C_{\boldsymbol{\varphi}_t}
    \coloneqq \lVert \varphi_t \rVert_L \cdot (\lVert \hb_{\blank}^{\tup{t-1}} \rVert_\infty
        + C_{\boldsymbol{\varphi}_{t-1}})$ for $t > 0$.
This essentially depends on the product of the Lipschitz constants of the functions in $\boldsymbol{\varphi}$,
and the bounds for the MPNN output values, which are finite
since a continuous function on a compact set attains its maximum.
Including these bounds in the constant is necessary since we consider sum aggregation. That is, a constant function mapping all inputs to some $c \in \RR$ has Lipschitz constant zero, but when integrated
with measures of total mass zero and one, for example, the difference of the outputs
is $c$.
We define $C_{(\phib, \psi)}$ for Lipschitz $\psi$ analogously by essentially viewing
$(\phib, \psi)$ as an MPNN model.

\begin{restatable}{lemma}{continuityGNNs}
    \label{le:NNsLipschitzIDM}
    \label{le:NNsLipschitz}
    Let $\boldsymbol{\varphi}$ be an $L$-layer MPNN model for $L \in \NN$
    and $\psi$ be Lipschitz.
    Then,
    \begin{align*}
        &\lVert \hb_{\alpha}^{\tup{L}} - \hb_{\beta}^{\tup{L}} \rVert_2
            \le C_{\boldsymbol{\varphi}} \cdot \WDistOf{L}(\alpha, \beta)&
        &\text{and}&
        &\lVert \hb_{\mu} - \hb_{\nu} \rVert_2
            \le C_{(\boldsymbol{\varphi}, \psi)} \cdot \WDeltaDistOf{L}(\mu, \nu)&
    \end{align*}
    for all $\alpha, \beta \in \MM_L$ and all
    $\mu, \nu \in \pMeasures(\MM_L)$, respectively.
    These inequalities also hold for $\WDistOf{\infty}$ and $\WDeltaDistOf{\infty}$
    with an additional factor of $L$ in the Lipschitz constant.
\end{restatable}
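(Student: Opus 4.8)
The plan is to prove both inequalities simultaneously by induction on the layer index, maintaining the invariant that $\hb_{\blank}^{\tup{t}} \colon \MM_t \to \RR^{d_t}$ is $C_{\phib_t}$-Lipschitz with respect to $\WDistOf{t}$; boundedness of $\hb_{\blank}^{\tup{t}}$, hence finiteness of $\lVert \hb_{\blank}^{\tup{t}} \rVert_\infty$, then follows automatically because $\WDistOf{t}$ metrizes the compact topology of $\MM_t$ by \Cref{th:metrics}. Both pieces are genuinely needed because of the normalized sum aggregation: a constant map has Lipschitz constant $0$, yet its integral against two measures of different total mass differs by the value of the constant, which is exactly why the term $\lVert \hb_{\blank}^{\tup{t-1}} \rVert_\infty$ enters $C_{\phib_t}$. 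The base case $t = 0$ is trivial, since $\hb_{\blank}^{\tup{0}} \equiv \varphi_0$ and $C_{\phib_0} = 0$.

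The heart of the argument is an elementary estimate I would isolate as a sublemma: for a compact metric space $(S,d)$, a bounded measurable $K$-Lipschitz map $g \colon S \to \RR^m$ with $\lVert g \rVert_\infty \le B$, and $\mu, \nu \in \measures(S)$, one has $\lVert \int_S g \, d\mu - \int_S g \, d\nu \rVert_2 \le (B + K)\, \WMetric(\mu, \nu)$. To prove it, assume $\lVert \mu \rVert \ge \lVert \nu \rVert$, choose an $\varepsilon$-optimal $\gamma \in \CM(\mu, \nu)$, and set $\mu' \coloneqq (p_1)_* \gamma$, so $\mu' \le \mu$ and $\lVert \mu' \rVert = \lVert \gamma \rVert = \lVert (p_2)_* \gamma \rVert = \lVert \nu \rVert$. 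Split $\int g \, d\mu - \int g \, d\nu = \int g \, d(\mu - \mu') + \bigl( \int g \, d(p_1)_* \gamma - \int g \, d(p_2)_* \gamma \bigr)$; the first summand has $\lVert \blank \rVert_2$ at most $B(\lVert \mu \rVert - \lVert \nu \rVert)$ since $\mu - \mu' \ge 0$, while the second equals $\int_{S \times S} \bigl( g(x) - g(y) \bigr) \, d\gamma(x,y)$ and so has norm at most $K \int_{S \times S} d(x,y) \, d\gamma(x,y)$. Adding these, using $Ba + Kb \le (B+K)(a+b)$ for $a, b \ge 0$, and letting $\varepsilon \to 0$ (or invoking weak compactness of $\CM(\mu,\nu)$ and continuity of $\gamma \mapsto \int d \, d\gamma$) gives the claim.

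With the sublemma, the inductive step and the readout step are short. For $\alpha, \beta \in \MM_t = \measures(\MM_{t-1})$, recall that $\WDistOf{t}(\alpha,\beta)$ is exactly $\WMetric(\alpha,\beta)$ computed with ground metric $\WDistOf{t-1}$; apply the sublemma to $g = \hb_{\blank}^{\tup{t-1}}$ with $B = \lVert \hb_{\blank}^{\tup{t-1}} \rVert_\infty$ and $K = C_{\phib_{t-1}}$ (the induction hypothesis), and then use that $\varphi_t$ is $\lVert \varphi_t \rVert_L$-Lipschitz on $\{ \int_{\MM_{t-1}} \hb_{\blank}^{\tup{t-1}} \, d\alpha \mid \alpha \in \MM_t \}$, to get $\lVert \hb_{\alpha}^{\tup{t}} - \hb_{\beta}^{\tup{t}} \rVert_2 \le \lVert \varphi_t \rVert_L \bigl( \lVert \hb_{\blank}^{\tup{t-1}} \rVert_\infty + C_{\phib_{t-1}} \bigr) \WMetric(\alpha,\beta) = C_{\phib_t} \, \WDistOf{t}(\alpha,\beta)$, which is at once the asserted $L$-layer bound (for $t = L$) and the induction hypothesis for the next layer. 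Since $\mu, \nu \in \pMeasures(\MM_L)$ have equal total mass, one more application of the sublemma to $\hb_{\blank}^{\tup{L}}$ followed by the Lipschitz function $\psi$ gives $\lVert \hb_{\mu} - \hb_{\nu} \rVert_2 \le \lVert \psi \rVert_L \bigl( \lVert \hb_{\blank}^{\tup{L}} \rVert_\infty + C_{\phib_L} \bigr) \WMetric(\mu,\nu) = C_{(\phib, \psi)} \, \WDeltaDistOf{L}(\mu, \nu)$.

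For the $\WDistOf{\infty}$ and $\WDeltaDistOf{\infty}$ versions, recall that $\hb_{\alpha}^{\tup{L}} = \hb_{p_{\infty,L}(\alpha)}^{\tup{L}}$ on $\MM$ and $\hb_{\nu} = \hb_{(p_{\infty,L})_* \nu}$ on $\pMeasures(\MM)$. From $\WDistOf{\infty}(\alpha,\beta) = \sup_h \tfrac{1}{h} \WDistOf{h}(\alpha_h, \beta_h)$ one sees immediately that $p_{\infty,L} \colon (\MM, \WDistOf{\infty}) \to (\MM_L, \WDistOf{L})$ is $L$-Lipschitz, and composing with the finite-layer bound already proved yields the factor-$L$ estimate for $\WDistOf{\infty}$; for $\WDeltaDistOf{\infty}$ I would add the routine observation that any $c$-Lipschitz map $f$ with $c \ge 1$ induces a $c$-Lipschitz pushforward $f_*$ between the corresponding unbalanced-Wasserstein spaces --- since $(f \times f)_*$ sends $\CM(\mu,\nu)$ into $\CM(f_*\mu, f_*\nu)$, preserves total masses, and scales transport cost by at most $c$ --- applied with $f = p_{\infty,L}$, $c = L$. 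The only point that needs real care is the bookkeeping: keeping the boundedness and Lipschitz invariants coupled along the induction, and remembering that $\gamma \in \CM(\mu,\nu)$ only guarantees $(p_1)_*\gamma \le \mu$ rather than equality, which is precisely what forces the mass-defect term and the two-part split in the sublemma. I do not anticipate any genuine conceptual obstacle beyond this.
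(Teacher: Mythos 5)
Your proof is correct and follows essentially the same path as the paper's: you isolate the same key estimate (bounding $\lVert\int g\,d\mu - \int g\,d\nu\rVert_2$ by $(\lVert g\rVert_\infty + \lVert g\rVert_L)\,\WMetric(\mu,\nu)$ via the mass-defect/transport split of an (approximately) optimal $\gamma \in \CM(\mu,\nu)$ — this is the paper's Claim~\ref{cl:Lipschitz}), run the same induction on layers, and handle the $\WDistOf{\infty}$ and $\WDeltaDistOf{\infty}$ cases via the $L$-Lipschitzness of $p_{\infty,L}$ and its pushforward, exactly as in the paper. The only difference is expository polish: you spell out the pushforward-Lipschitz fact and the $\varepsilon$-optimal-coupling limit, which the paper leaves implicit.
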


\section{Universality of message-passing graph neural networks}

In this section, we prove a universal approximation theorem for MPNNs on IDMs
and DIDMs, deriving our main result from it.
For $0 \le L \le \infty$, let $\vertexNeural_L^n \subseteq C(\MM_L, \RR^n)$ 
denote the set of all functions $\hb_{\blank}^{\tup{L}} \colon \MM_L \to \RR^{n}$
for an $L$-layer MPNN model $\phib$ with $d_L = n$.
Similarly, let
\begin{equation*}
    \neural_L^n \coloneqq \{ \hb_{\blank} \mid
        \boldsymbol{\varphi} \text{ $L$-layer MPNN model, }
        \psi \colon \RR^{d_L} \to \RR^{n} \text{ Lipschitz}\} \subseteq C(\pMeasures(\MM_L), \RR^n)
\end{equation*}
be the set of all functions computed by an MPNN
after a global readout.
Our universal approximation theorem, \Cref{th:universalApproximation},
shows that all continuous functions on IDMs and DIDMs,
i.e., functions on graphons that are invariant w.r.t.\ the (colors of) the \wlone{} test,
can be approximated by MPNNs.
Hence, our result extends the universal approximation result of
\citet{ChenLMWW22} for \new{measure Markov chains} in two ways.
First, measure Markov chains are restricted to finite spaces by definition,
which is not the case for graphons and our universal approximation theorem.
Secondly, the spaces $\MM_L$ and $\pMeasures(\MM_L)$ are compact, which means we obtain a universal approximation theorem for the whole space of graphons, including all graphs, not restricted to an artificially chosen compact subset.
\begin{restatable}{theorem}{universalApproximationTheorem}
    \label{th:universalApproximation}
    Let $0 \le L \le \infty$.
    Then, $\vertexNeural_L^1$ is dense in $C(\MM_L, \RR)$ and
    $\neural_L^1$ is dense in $C(\pMeasures(\MM_L), \RR)$.
\end{restatable}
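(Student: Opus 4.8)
The plan is to apply the Stone--Weierstrass theorem on the compact metrizable spaces $\MM_L$ and $\pMeasures(\MM_L)$, arguing by induction on $L$ for finite $L$ and then passing to the limit for $L = \infty$. The base case $L = 0$ is immediate, since $\MM_0$ and $\pMeasures(\MM_0)$ are one-point spaces and the constant features $\hb_{\blank}^{\tup{0}} = \varphi_0$ already fill $C(\MM_0, \RR) = \RR$.

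For the inductive step, fix a finite $L \ge 1$ and assume $\vertexNeural_{L-1}^1$ is dense in $C(\MM_{L-1}, \RR)$. The key structural observation is that finitely many $L$-layer MPNN models can be run in parallel: stacking feature dimensions layerwise (so that $\varphi_t$ of the combined model is the product of the individual $\varphi_t$'s, which stays Lipschitz on the relevant aggregation sets because these shrink into products under stacking) produces an $L$-layer MPNN model whose layer-$L$ feature is the stacking $g = (g_1, \dots, g_k) \colon \MM_L \to \RR^k$ of the individual scalar features $g_1, \dots, g_k \in \vertexNeural_L^1$. Absorbing a map $G \colon \RR^k \to \RR$ into the final function $\varphi_L$ then shows $G(g_1, \dots, g_k) \in \vertexNeural_L^1$ whenever $G$ is Lipschitz on the (compact) range of $(g_1,\dots,g_k)$; since every continuous function on a compact subset of $\RR^k$ is a uniform limit of Lipschitz ones, the uniform closure $\overline{\vertexNeural_L^1}$ is a closed subalgebra of $C(\MM_L, \RR)$ containing the constants. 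For point separation, given $\alpha \ne \beta$ in $\MM_L = \measures(\MM_{L-1})$, choose $f \in C(\MM_{L-1}, \RR)$ with $\int f \, d\alpha \ne \int f \, d\beta$ --- measures on a compact metric space are separated by integrals against continuous functions --- approximate $f$ closely enough by a scalar $(L-1)$-layer feature $g \in \vertexNeural_{L-1}^1$ (inductive hypothesis) so that still $\int g \, d\alpha \ne \int g \, d\beta$, and append $\varphi_L = \mathrm{id}$; the resulting $L$-layer feature lies in $\vertexNeural_L^1$ and separates $\alpha$ from $\beta$. Stone--Weierstrass now yields $\overline{\vertexNeural_L^1} = C(\MM_L, \RR)$. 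The statement for $\neural_L^1$ on $\pMeasures(\MM_L)$ is proved the same way: parallel composition together with absorbing the combining map into the readout function $\psi$ gives the closed-subalgebra-with-constants property, and for point separation of $\mu \ne \nu$ one picks a scalar $f \in C(\MM_L, \RR)$ with $\int f \, d\mu \ne \int f \, d\nu$, approximates it by an element of the now-established dense class $\vertexNeural_L^1$, and reads it out via $\nu \mapsto \int_{\MM_L} g \, d\nu$ followed by a suitable affine $\psi$.

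For $L = \infty$, recall that $\MM_\infty$ is the inverse limit of the $\MM_\ell$ with projections $p_{\infty, \ell}$ and that, by definition of MPNN features on $\MM_\infty$, one has $\vertexNeural_\infty^1 = \bigcup_{\ell \in \NN} \{ f \circ p_{\infty, \ell} \mid f \in \vertexNeural_\ell^1 \}$ and likewise $\neural_\infty^1 = \bigcup_{\ell \in \NN} \{ F \circ (p_{\infty, \ell})_* \mid F \in \neural_\ell^1 \}$. A further application of Stone--Weierstrass shows that $\bigcup_\ell \{ f \circ p_{\infty, \ell} \mid f \in C(\MM_\ell, \RR) \}$ is dense in $C(\MM_\infty, \RR)$ (it is a subalgebra containing the constants, and $p_{\infty, h}$ separates any two points of $\MM_\infty$ differing in coordinate $h$), and analogously for $\bigcup_\ell \{ F \circ (p_{\infty, \ell})_* \mid F \in C(\pMeasures(\MM_\ell), \RR) \}$ in $C(\pMeasures(\MM_\infty), \RR)$. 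Combining this with the finite-$L$ density statements and the fact that precomposition with the fixed maps $p_{\infty, \ell}$ and $(p_{\infty, \ell})_*$ does not increase the sup-distance, one concludes density of $\vertexNeural_\infty^1$ and $\neural_\infty^1$.

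I expect the main obstacle to be the bookkeeping around the precise definition of an ``MPNN model'' rather than any conceptual difficulty: one has to verify carefully that parallel composition really produces an MPNN model --- in particular that each $\varphi_t$ remains Lipschitz on the set $\{\int_{\MM_{t-1}} \hb_{\blank}^{\tup{t-1}} \, d\alpha \mid \alpha \in \MM_t\}$ after stacking --- and that the repeated replacement of continuous functions by Lipschitz functions on compact sets is harmless at every step. Everything else is a routine combination of Stone--Weierstrass with the standard fact that Borel measures on compact metric spaces are both determined and separated by their integrals against continuous functions.
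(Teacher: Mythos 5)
Your proposal is correct and follows the same overall strategy as the paper: inductive application of Stone--Weierstrass on the compact spaces $\MM_L$, showing that $\vertexNeural_L^1$ is an algebra containing constants via parallel composition of MPNN models, and establishing point separation at level $L$ from density at level $L-1$ using the fact that measures on a compact metric space are separated by integrals against continuous functions. Two minor cosmetic deviations: you detour through the uniform closure $\overline{\vertexNeural_L^1}$ after absorbing a general Lipschitz $G$ into $\varphi_L$, whereas the paper observes directly that taking $G$ to be $(x,y)\mapsto xy$ or $(x,y)\mapsto x+cy$ (Lipschitz on compact sets) already makes $\vertexNeural_L^1$ itself a subalgebra; and for $L=\infty$ you factor the argument as ``pullbacks of all of $C(\MM_\ell,\RR)$ under $p_{\infty,\ell}$ are dense by Stone--Weierstrass, and finite-$L$ density handles the rest,'' which neatly sidesteps the paper's auxiliary claim that an $\ell$-layer feature map composed with $p_{\ell+1,\ell}$ can always be realized as an $(\ell{+}1)$-layer feature map. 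Both variants are sound; yours is arguably a shade cleaner at the $L=\infty$ step.
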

The proof of \Cref{th:universalApproximation} is elegant
and does not rely on
encoding the \wlone{} test as an MPNN. That is,
it follows by inductive applications
of the Stone--Weierstrass theorem \citep[Theorem $2.4.11$]{dudley_2002} combined with the definition of IDMs.
It is strikingly similar to the proof of \citet{grebikFractionalIsomorphismGraphons2021} for a similar result concerning tree homomorphism densities; see~\Cref{sec:trees}.

While the second statement of \Cref{th:universalApproximation}, i.e.,
the graphon-level approximation,
is interesting in its own right,
the crux of \Cref{th:universalApproximation} lies in its first statement, namely,
that $\vertexNeural_L^1$ is dense in $C(\MM_L, \RR)$, immediately implying that the topology induced by MPNNs
on $\pMeasures(\MM_L)$ is the weak topology, i.e.,
the topology we endowed this space within~\Cref{sec:iteratedDegreeMeasures}.
\begin{restatable}{corollary}{neuralConvergenceCorollary}
    \label{le:neuralConvergence}
    Let $0 \le L \le \infty$ and $n > 0$.
    Let $\nu \in \pMeasures(\MM_L)$
    and 
    $(\nu_i)_i$ be a sequence with $\nu_i \in \pMeasures(\MM_L)$.
    Then, $\nu_i \rightarrow \nu$ if and only if
    $\hb_{\nu_i} \rightarrow \hb_{\nu}$
    for all $L$-layer MPNN models $\phib$
    and Lipschitz $\psi \colon \RR^{d_L} \to \RR^n$.
\end{restatable}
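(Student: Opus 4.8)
The plan is to deduce the corollary from the universal approximation theorem (\Cref{th:universalApproximation}) together with the standard description of the weak topology recalled in \Cref{sec:factsOnMeasures}; essentially no new ideas beyond \Cref{th:universalApproximation} are required, only a packaging argument. The forward implication is immediate: if $\nu_i \to \nu$ in $\pMeasures(\MM_L)$, then since every readout function lies in $\neural_L^n \subseteq C(\pMeasures(\MM_L),\RR^n)$ — indeed it is even Lipschitz by \Cref{le:NNsLipschitz} — we get $\hb_{\nu_i} \to \hb_\nu$ for all $L$-layer MPNN models $\phib$ and Lipschitz $\psi$, with no appeal to compactness.

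For the converse, I would first reduce to the scalar case. A one-dimensional readout $\hb_{\blank} \in \neural_L^1$, with readout map $\psi \colon \RR^{d_L} \to \RR$, can be turned into an $n$-dimensional one by post-composing $\psi$ with the Lipschitz embedding $x \mapsto (x, 0, \dots, 0)$; convergence of the resulting vectors in $\RR^n$ is equivalent to convergence of their first coordinate, so the hypothesis ``$\hb_{\nu_i} \to \hb_\nu$ for all $n$-dimensional readouts'' is equivalent to ``$g(\nu_i) \to g(\nu)$ for all $g \in \neural_L^1$''. Now $\pMeasures(\MM_L)$ is a compact metrizable space (it is $\pMeasures$ of the compact metrizable space $\MM_L$ for $L < \infty$, resp. of $\MM_\infty$ for $L = \infty$), and by \Cref{th:universalApproximation} the family $\neural_L^1$ is dense in $C(\pMeasures(\MM_L),\RR)$ in the sup norm. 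Hence it suffices to invoke the following elementary fact: on a compact metric space $X$, if $\mathcal F \subseteq C(X)$ is dense and $g(x_i) \to g(x)$ for every $g \in \mathcal F$, then $x_i \to x$; applying it with $X = \pMeasures(\MM_L)$ and $\mathcal F = \neural_L^1$ finishes the proof.

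Proving this fact is the only genuine — and entirely routine — work. A $3\varepsilon$-argument first upgrades the hypothesis from $g \in \mathcal F$ to all $g \in C(X)$, using density together with boundedness of the functions on the compact space $X$. Then, assuming $x_i \not\to x$, compactness and metrizability of $X$ yield a subsequence converging to some point $y$ that lies outside a fixed open neighborhood of $x$; passing to the limit in $g(x_{i_j}) \to g(x)$ along this subsequence gives $g(y) = g(x)$ for every $g \in C(X)$, whence $y = x$ by Urysohn's lemma (since $C(X)$ separates points on a compact Hausdorff space), a contradiction. The subsequence extraction is the one step where compactness of $\pMeasures(\MM_L)$ is genuinely used and is the main obstacle, but it is standard; everything else is bookkeeping layered on top of \Cref{th:universalApproximation}.
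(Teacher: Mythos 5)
Your argument is correct, but it routes through a different part of \Cref{th:universalApproximation} than the paper does, and as a result carries a slightly heavier topological load. The paper works directly at the vertex level: it takes $\psi = \mathrm{id}$, so that $\hb_\nu = \int_{\MM_L} \hb^{\tup{L}}_{\blank}\, d\nu$, and then invokes the \emph{first} statement of \Cref{th:universalApproximation} — density of $\vertexNeural_L^1$ in $C(\MM_L,\RR)$ — together with the fact, recalled in \Cref{sec:factsOnMeasures}, that weak convergence of measures on a space $X$ is equivalent to convergence of integrals against all $f$ in a \emph{dense} subset of $C_b(X)$. This is a one-line reduction: since $\MM_L$ is compact, $C_b(\MM_L) = C(\MM_L)$, so $\nu_i \to \nu$ iff $\int g\, d\nu_i \to \int g\, d\nu$ for all $g \in \vertexNeural_L^1$, which is exactly the set of readouts with $\psi = \mathrm{id}$; continuity of general $\psi$ and coordinate-wise projection then handle arbitrary Lipschitz readouts and $n > 1$. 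You instead invoke the \emph{second} statement — density of $\neural_L^1$ in $C(\pMeasures(\MM_L),\RR)$ — and then run a general topological argument on the compact metric space $\pMeasures(\MM_L)$: a $3\varepsilon$ upgrade from $\mathcal F$ to $C(X)$, followed by a subsequence-extraction-and-separation-of-points contradiction. This is perfectly valid (all the pieces you use are available — $\pMeasures(\MM_L)$ is compact metrizable, $C(X)$ separates points on a compact Hausdorff space), but note that the second statement of \Cref{th:universalApproximation} is itself derived from the first via an extra Stone–Weierstrass application, so your proof is one layer more indirect, and the subsequence/compactness machinery you invoke is doing work that the paper's direct appeal to the definition of the weak topology avoids. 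What your version buys is modularity: it only needs the DIDM-level density statement and generic facts about compact metric spaces, so it would survive even if the characterization of weak convergence via dense classes were not already set up.
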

By combining standard compactness arguments
with~\Cref{th:metrics} and~\Cref{le:neuralConvergence},
we can now prove that two graphons are close in our metrics if
and only if the output of all possible MPNNs, up to a specific constant and number of layers, is close.
Formally, the forward direction of this equivalence is just
\Cref{le:NNsLipschitzIDM}, while the backward direction reads as follows.
\begin{restatable}{theorem}{DistMPNNConverse}
    \label{th:DistMPNNConverse}
    Let $n > 0$ be fixed.
    For every $\varepsilon > 0$,
    there are $L \in \NN, C > 0$, and $\delta > 0$
    such that, for all graphons $U$ and $W$,
    if $\lVert \hb_U - \hb_W \rVert_2 \le \delta$
    for every $L'$-layer MPNN model $\phib$
    and Lipschitz $\psi \colon \RR^{d_{L'}} \to \RR^n$
    with $L' \le L$ and $C_{(\phib, \psi)} \le C$,
    then $\proDeltaDist(U, W) \le \varepsilon$.
\end{restatable}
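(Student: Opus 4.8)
The plan is to run a compactness-and-contradiction argument that upgrades the qualitative facts already established --- \Cref{th:metrics} (our metrics induce the weak topology on $\pMeasures(\MM)$), \Cref{le:NNsLipschitzIDM} (MPNNs are Lipschitz in them), and above all \Cref{le:neuralConvergence} (the MPNN outputs determine convergence of DIDMs) --- into the desired uniform $\varepsilon$-$\delta$ statement. Fix $n$ and assume, toward a contradiction, that the claim fails for some $\varepsilon > 0$. Then, instantiating $L \coloneqq i$, $C \coloneqq i$, and $\delta \coloneqq 1/i$ for each $i \in \NN$, I obtain graphons $U_i, W_i$ with $\proDeltaDist(U_i, W_i) > \varepsilon$ while $\lVert \hb_{U_i} - \hb_{W_i} \rVert_2 \le 1/i$ for every $L'$-layer MPNN model $\phib$ with $L' \le i$ and every Lipschitz $\psi \colon \RR^{d_{L'}} \to \RR^n$ with $C_{(\phib, \psi)} \le i$.

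Next I would pass to a convergent subsequence. Since $\MM$ is compact metrizable, so is $\pMeasures(\MM)$, so I may assume $\nu_{U_i} \to \nu$ and $\nu_{W_i} \to \nu'$ in $\pMeasures(\MM)$ for some $\nu, \nu'$; note the argument never needs $\nu$ and $\nu'$ to be DIDMs of actual graphons, only elements of $\pMeasures(\MM)$, which is exactly the generality of \Cref{th:metrics} and \Cref{le:neuralConvergence}. As $\proDeltaDist = \proDeltaDistOf{\infty}$ metrizes the topology of $\pMeasures(\MM)$, it is jointly continuous, so $\proDeltaDist(\nu, \nu') = \lim_i \proDeltaDist(U_i, W_i) \ge \varepsilon > 0$ and in particular $\nu \ne \nu'$.

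It then remains to extract the opposite conclusion from the MPNN bounds. Fix an arbitrary MPNN model $\phib$ with finitely many layers $L$ and an arbitrary Lipschitz readout $\psi \colon \RR^{d_L} \to \RR^n$, and set $C^\ast \coloneqq C_{(\phib, \psi)} < \infty$. For every $i \ge \max\{L, C^\ast\}$, the hypothesis from the first paragraph applies to this particular $(\phib, \psi)$, so $\lVert \hb_{U_i} - \hb_{W_i} \rVert_2 \le 1/i \to 0$. Using $\hb_{U_i} = \hb_{\nu_{U_i}}$, $\hb_{W_i} = \hb_{\nu_{W_i}}$, and the continuity of $\hb_{\blank} \colon \pMeasures(\MM) \to \RR^n$ (which is the composition of the continuous push-forward $(p_{\infty, L})_\ast$ with the continuous map in $\neural_L^n$), the convergences $\nu_{U_i} \to \nu$, $\nu_{W_i} \to \nu'$ give $\hb_{\nu_{U_i}} \to \hb_\nu$ and $\hb_{\nu_{W_i}} \to \hb_{\nu'}$, forcing $\hb_\nu = \hb_{\nu'}$. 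Since $(\phib, \psi)$ was arbitrary, \Cref{le:neuralConvergence} applied to the constant sequence $\nu_i \equiv \nu'$ gives $\nu' = \nu$, the desired contradiction.

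The routine part is the metrizability and continuity bookkeeping pulled from \Cref{sec:factsOnMeasures} and \Cref{th:metrics}. The one place needing a little care is the diagonalization: each fixed finite MPNN model must eventually be captured by both thresholds $L' \le i$ and $C_{(\phib, \psi)} \le i$, which is precisely why the counterexample sequence must send the layer bound and the Lipschitz bound to infinity simultaneously. The actual engine --- and the reason a soft argument suffices --- is the compactness of $\pMeasures(\MM)$, which descends from the compactness of the graphon space under the cut distance, together with \Cref{le:neuralConvergence}, itself a consequence of the Stone--Weierstrass-based \Cref{th:universalApproximation}.
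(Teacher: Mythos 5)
Your proof is correct, and it follows the same high-level strategy as the paper (compactness plus contradiction via a subsequence), but it takes a genuinely lighter route. The paper extracts subsequences converging in the \emph{cut distance} using the compactness of the graphon space, passes from cut-distance convergence to tree-distance convergence, and then leans on the whole cascade of equivalences in \Cref{co:equivalencesGraphons} and \Cref{th:equivalencesZeroGraphons} (which in turn import the tree-homomorphism results of \citet{grebikFractionalIsomorphismGraphons2021} and the tree-distance results of \citet{boker_graph_2021a}) both to get $\hb_{U_{i_k}}\to\hb_{\widetilde U}$ and to conclude $\proDeltaDist(\widetilde U,\widetilde W)=0$ and $\proDeltaDist(U_{i_k},W_{i_k})\to 0$. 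You instead compactify directly in $\pMeasures(\MM)$, use joint continuity of the metric for the lower bound $\proDeltaDist(\nu,\nu')\ge\varepsilon$, use continuity of $\hb_{\blank}\colon\pMeasures(\MM)\to\RR^n$ (push-forward composed with integration and $\psi$) to get $\hb_\nu=\hb_{\nu'}$, and close with \Cref{le:neuralConvergence} alone. This buys a self-contained argument that never needs the limit points to be DIDMs of actual graphons and never touches the cut or tree distances; the paper's version, by contrast, keeps all intermediate objects graphons and reuses already-stated equivalences, which is arguably tidier expositionally but depends on more machinery. Both are sound; yours is the more minimal.
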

We stress that the constants $L$, $C$, and $\delta$ in the theorem statement are independent of the graphons $U$ and $W$.
The proof of \Cref{th:DistMPNNConverse} is simple:
one assumes that the statement does not hold to obtain two sequences of counterexamples,
which have to have convergent subsequences by compactness,
and the limit graphons allow us to derive a contradiction.
This establishes the equivalence between our metrics and MPNNs
stated in \Cref{th:informalDistances}.
Analogous reasoning
together with the universality of tree homomorphism densities~\citep{grebikFractionalIsomorphismGraphons2021},
cf.\ \Cref{sec:trees},
yields the equivalence between our metrics and
tree homomorphism densities.
Then, the missing equivalence to the tree distance
follows from the result of \citet{boker_graph_2021a},
which connects the tree distance to tree homomorphism densities.
See~\Cref{sec:equivalence} for the formal statements of
all these equivalences as epsilon-delta-statements and their proofs.

\section{Extension to graphons with signals}
\label{sec:graphonsWithSignals}
Our focus in this work lies on MPNNs' ability to distinguish
the structure of graphons. However, the definitions of
IDMs, MPNNs, and our metrics
can be adapted to graphons with signals, i.e., graphons $W$ equipped
with a measurable \new{signal function} $\ell \colon [0,1] \to K$, where $(K, d)$
is some fixed compact metric space.
In the following, we briefly sketch how to do this.
First, replace the one-point space $\MM_0$ by $(K,d)$ and
modify the \wlone{} for graphons by setting $\mathsf{i}_{(W, \ell),0} \coloneqq \ell$,
i.e., use the signal function as the initial coloring.
For $h > 0$, adapt the definition of
the IDM spaces $\MM_h$ and of the refinement rounds $\mathsf{i}_{(W, \ell), h} $of
the \wlone{} for graphons to include the previous IDM of a point
in its new IDM, like in the original definition
of \citet{grebikFractionalIsomorphismGraphons2021}.
Omitting the previous IDM, as we have done before in our definition, is only reasonable if
the initial coloring is constant.
Then, since $\MM_0 = (K,d)$ is a compact metric space,
the spaces $\MM_h$ are compact metrizable
as before.
Modify the definition of an MPNN model $\phib$ such that
$\varphi_0$ is a Lipschitz function $K \to \RR^{d_0}$.
Finally, adapt the definition of the metrics $\proDistOf{h}$ and $\WDistOf{h}$ by
letting $\proDistOf{0} \coloneqq \WDistOf{0} \coloneqq d$ be the metric of
the compact metric space $(K,d)$
and then adapting $\proDistOf{h}$ and $\WDistOf{h}$
for $h > 0$ to the modified definition of $\MM_h$ by using the product metric.
Then, the proofs of our results can be adapted to this more general setting.
In particular, one can prove a universal approximation theorem
since Lipschitz functions on $K$ are dense in $C(K)$.

\section{Experimental evaluation}
\label{sec:experimentalEvaluation}

In the following, we investigate the applicability of our theory on real-world prediction tasks. Specifically, we answer the following questions.
\begin{description}[noitemsep]
	\item[Q1] To what extent do our graph metrics $\proDeltaDist$ and $\WDeltaDist$ act as a proxy for distances between MPNNs' vectorial representations?
	\item[Q2] Our theoretical results imply that untrained MPNNs can be as effective as their trained counterparts when using \emph{enough} of them. Can untrained MPNNs remain competitive when only using a finite number of them (measured by the hidden dimensionality)?
\end{description}

The source code of all methods and evaluation protocols are available at \url{https://github.com/nhuang37/finegrain_expressivity_GNN}.
We conducted all experiments on a server with 256 GB RAM and four NVIDIA RTX A5000 GPU cards.

\paragraph{Fine-grained expressivity comparisons of MPNNs.} To answer \textbf{Q1}, we construct a graph sequence that converges in our graph metrics. Given an MPNN, we compute the sequence of its embeddings on such graphs and the corresponding embedding distance using the $\ell_2$-norm. Hence, comparing different MPNNs amounts to comparing the convergence rate of their Euclidean embedding distances concerning the graph distances. Concretely, we simulate a sequence of $50$ random graphs $\{G_i\}$ for $i\in [50]$ with $30$ vertices using the stochastic block model,  where $G_i \sim \text{SBM}(p, q_i)$, with $p = 0.5$ and $q_i \in [0.1, 0.5]$ increases equidistantly. Let $G$ denote the last graph in the sequence, and observe that $G$ is sampled from an Erdős–Rényi model, i.e., $G \sim \text{ER}(p)$. For $i \in [50]$, we compute the Wasserstein distance $\WDeltaDistOf{h}(G_i, G)$ and the Euclidean distance $\lVert \vec{h}_{G_i} - \vec{h}_{G} \rVert_2$.\footnote{We compute $\WDeltaDistOf{h}$ via a min-cost-flow algorithm and terminate at most $3$ iterations after the Weisfeiler--Leman colors stabilize.}  For demonstration purposes, we compare two common MPNN layers, GIN~\citep{Xu+2018b} and GraphConv~\citep{Mor+2019}, using sum aggregation normalized by the graph's order, varying the hidden dimensions and the number of layers.

\Cref{fig:finegrain_compare_SBM1} visualizes their normalized embedding distance and normalized graph distance, with an increasing number of hidden dimensions, from left to right. GIN, top-row, and GraphConv, bottom-row, produce more discriminative embeddings as the number of hidden dimensions increases, supporting Theorem \ref{th:universalApproximation}.
Each point corresponds to a different (untrained) MPNN. Note that we interpret the hidden dimension $w$ (i.e., width) as concatenating $w$ random MPNNs. We observe similar behavior when increasing the number of layers; see~\Cref{fig:finegrain_compare_SBM2} in the appendix.
Untrained GraphConv embeddings are more robust than untrained GIN embeddings regarding the choice of hidden dimensions and number of layers. \Cref{fig:finegrain_compare_MUTAG_hid} in the appendix shows the same experiments on the real-world dataset \textsc{Mutag}, part of the TUDataset~\citep{Mor+2020}. We observe that increasing the number of hidden dimensions improves performance. Nonetheless, increasing the number of layers seems to first improve and then degrade performance. This observation coincides with the downstream graph classification performance, as discussed in the next section. 

\paragraph{The surprising effectiveness of untrained MPNNs.} To answer \textbf{Q2}, we compare popular MPNN architectures, i.e., GIN and GraphConv, with their untrained counterparts. For untrained MPNNs, we freeze their input and hidden layer weights that are randomly initialized and only optimize for the output layer(s) used for the final prediction. We benchmark on a subset of the established TUDataset~\citep{Mor+2020}. For each dataset, we run \textit{paired} experiments of trained and untrained MPNNs on the same ten random splits (train/test) and 10-fold cross-validation splits, using the evaluation protocol outlined in~\citet{Mor+2020}. We report the test accuracy with 10-run standard deviation in~\Cref{tab:experiments-untrain} and the mean training time per epoch with standard deviation in~\Cref{tab:experiments-untrain-time} in the appendix. \Cref{tab:experiments-untrain} and~\Cref{tab:experiments-untrain-time} show that untrained MPNNs with sufficient hidden dimensionality perform competitively as trained MPNNs while being significantly faster, with 20\%-46\% time savings.
As shown in \Cref{fig:hid_dim} in the appendix, increasing the hidden dimension (i.e., the number of MPNN models) improves the performance of untrained MPNNs. Our theory states that it is necessary to use all MPNNs to preserve graph distance, which is impossible to compute in practice. Nonetheless, our experiment shows that using enough of them suffices for graph classification tasks.

\begin{table*}
\scriptsize
\caption{Untrained MPNNs show competitive performance as trained MPNNs given sufficiently large hidden dimensionality ($3$-layer, $512$-hidden-dimension). To be consistent with our theory, we use standard architectures with sum aggregation, layer-wise $1/V(G)$ normalization, and mean pooling, denoted by ``MPNN-m.'' We report the mean accuracy $\pm$ std over ten data splits.} \label{tab:experiments-untrain}
\centering

\resizebox{.8\textwidth}{!}{ 	\renewcommand{\arraystretch}{1.05}
\begin{tabular}{lcccccc}
\toprule
\textbf{Accuracy} $\uparrow$	& \textsc{Mutag} &	\textsc{Imdb-Binary}	&	\textsc{Imdb-Multi}	&	\textsc{NCI1}	&	\textsc{Proteins}	& \textsc{Reddit-Binary}  \\ 	
 \midrule
 GIN-m (trained) & 79.01 {\tiny $\pm$ 2.24} &	69.96   {\tiny$\pm$ 1.43}&	46.29  {\tiny$\pm$ 0.76}&	\textbf{78.61  {\tiny$\pm$ 0.34}} &	\textbf{73.51  {\tiny$\pm$ 0.47}} &	\textbf{89.73  {\tiny$\pm$ 0.37}} \\
 GIN-m (untrained) & \textbf{82.56  {\tiny$\pm$ 3.12}} &	\textbf{70.70  {\tiny$\pm$ 0.60}} &	\textbf{47.59  {\tiny$\pm$ 0.95}} &	77.82  {\tiny$\pm$ 0.55}&	73.45  {\tiny$\pm$ 0.30}&	82.32  {\tiny$\pm$ 0.45} \\
\midrule
 GraphConv-m (trained) & \textbf{81.62  {\tiny$\pm$ 2.08}} &	59.14  {\tiny$\pm$ 1.93}&	38.75  {\tiny$\pm$ 1.62}&	\textbf{63.28  {\tiny$\pm$ 0.6}} &	71.49  {\tiny$\pm$ 0.67}&	\textbf{82.4  {\tiny$\pm$ 0.19}} \\
 GraphConv-m (untrained) & 78.03  {\tiny$\pm$ 1.57}&	\textbf{65.77  {\tiny$\pm$ 1.32}} &	\textbf{43.29  {\tiny$\pm$ 0.96}} &	62.36  {\tiny$\pm$ 0.45}&	\textbf{71.83  {\tiny$\pm$ 0.42}} &	77.15  {\tiny$\pm$ 0.29} \\
\bottomrule
\end{tabular}}
\end{table*}

\begin{figure}[t!]
  \centering
\includegraphics[width=0.55\textwidth]{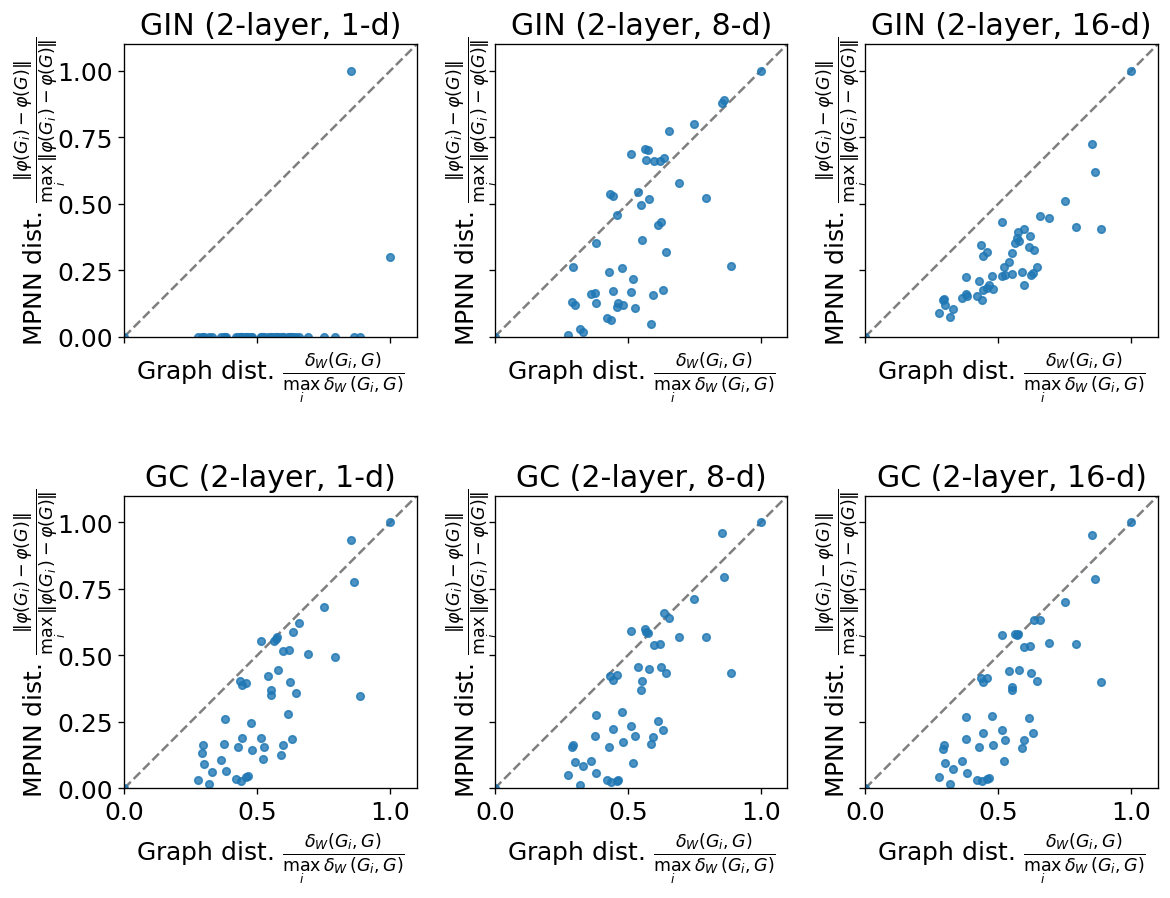}
  \caption{MPNNs preserve graph distance better when increasing the number of hidden dimensions. Comparatively, untrained GIN embeddings are more sensitive than untrained GraphConv to changes in the number of hidden dimensions.}
  \label{fig:finegrain_compare_SBM1}
\end{figure}

We also investigate the effect of the number of layers, i.e., the \wlone{} iteration in IDM.  As shown in~\cref{fig:layer} in the appendix, increasing the number of layers first improves and then degrades untrained MPNNs' performance, which is likely due to the changes in MPNNs' ability to preserve graph distance observed as in Figure \ref{fig:finegrain_compare_MUTAG_layer} in the appendix.

\section{Conclusion}
This work devised a deeper understanding of MPNNs' capacity to capture graph structure, precisely determining when they learn similar vectorial representations. To that, we developed a comprehensive theory of graph metrics on graphons, demonstrating that two graphons are close in our metrics if, and only if, the outputs of all possible MPNNs are close, offering a more nuanced understanding of their ability to capture graph structure similarity. In addition, we established a connection between the continuous extensions of \wlone{} and MPNNs to graphons, tree distance, and tree homomorphism densities. Our experimental study confirmed the validity of our theory in real-world prediction tasks. \emph{In summary, our work establishes the first rigorous connection between the similarity of graphs and their learned vectorial presentations, paving the way for a more nuanced understanding of MPNNs' expressivity and robustness abilities and their connection to graph structure.}

Looking forward, future research could focus on extending all characterizations of
\Cref{th:informalDistances} to graphons with signals.
While we briefly sketched in \Cref{sec:graphonsWithSignals} how to do this for
MPNNs and the metrics we defined,
this still presents a challenge
as tree distance and tree homomorphism densities do not readily generalize to graphons with signals.
A different direction could be an extension of our theory to different
aggregation functions like max- or mean-aggregation.
Since the \wlone{} paradigm of summing over neighbors is crucial in our proofs, it is not clear
how one would approach this.
Additionally, further quantitative versions of equivalences in \Cref{th:informalDistances}, not resorting to epsilon-variants statements, and generalizing our results to the~\kwl{} are interesting avenues for future exploration. 

\clearpage

\section*{Acknowledgments and disclosure of funding}
This research project was started at the BIRS 2022 Workshop ``Deep Exploration of non-Euclidean Data with Geometric and Topological Representation Learning'' held at the UBC Okanagan campus in Kelowna, B.C. Jan Böker is funded by the European Union (ERC, SymSim, 101054974). Views and opinions expressed are, however
those of the author only and do not necessarily reflect those of the European Union or the European Research
Council. Neither the European Union nor the granting authority can be held responsible for them. Ron Levie is partially funded by ISF (Israel Science Foundation) grant \# 1937/23. Ningyuan Huang is partially supported by the MINDS Data Science Fellowship from Johns Hopkins University. Soledad Villar is partially funded by the NSF–Simons Research Collaboration on the Mathematical and Scientific Foundations of Deep Learning (MoDL) (NSF DMS
2031985), NSF CISE 2212457, ONR N00014-22-1-2126 and an Amazon AI2AI Faculty Research Award.
Christopher Morris is partially funded by a DFG Emmy Noether grant (468502433) and  RWTH Junior Principal Investigator Fellowship under Germany's Excellence Strategy. 

\bibliographystyle{abbrvnat}

\clearpage

\appendix

\section{Additional background}\label{wl}

Here, we provide additional background.

\subsection{The Weisfeiler--Leman algorithm and related topics}\label{wlone}
The \wlone{} or color refinement is a well-studied heuristic for the graph isomorphism problem, originally proposed by~\citet{Wei+1968}.\footnote{Strictly speaking, the \wlone{} and color refinement are two different algorithms. That is, the \wlone{} considers neighbors and non-neighbors to update the coloring, resulting in a slightly higher expressive power when distinguishing vertices in a given graph; see~\cite {Gro+2021} for details. For brevity, we consider both algorithms to be equivalent.}
Intuitively, the algorithm determines if two graphs are non-isomorphic by iteratively coloring or labeling vertices. Given an initial coloring or labeling of the vertices of both graphs, e.g., their degree or application-specific information, in
each iteration, two vertices with the same label get different labels if the number of identically labeled neighbors is unequal. These labels induce a vertex partition, and the algorithm terminates when at some iteration, the algorithm does not refine the current partition, i.e., when a \new{stable coloring} or \new{stable partition} is obtained. Then, if the number of vertices annotated with a specific label is different in both graphs, we can conclude that the two graphs are not isomorphic. It is easy to see that the algorithm cannot distinguish all non-isomorphic graphs~\citep{Cai+1992}. Nonetheless, it is a powerful heuristic that can successfully test isomorphism for a broad class of graphs~\citep{Bab+1979}.

Formally, let $G = (V(G),E(G),\ell)$ be a labeled graph, i.e., a graph equipped with a
(vertex-)label function $\ell \colon V(G) \to \NN$.
In each iteration, $t > 0$, the \wlone{} computes a vertex coloring $C^1_t \colon V(G) \to \NN$, depending on the coloring of the neighbors. That is, in iteration $t>0$, we set
\begin{equation*}
	C^1_t(v) \coloneqq \REL\Big(\!\big(C^1_{t-1}(v),\oms C^1_{t-1}(u) \mid u \in N(v)  \cms \big)\! \Big),
\end{equation*}
for all vertices $v$ in $V(G)$,
where $\REL$ injectively maps the above pair to a unique natural number, which has not been used in previous iterations. In iteration $0$, the coloring $C^1_{0}\coloneqq \ell$. To test if two graphs $G$ and $H$ are non-isomorphic, we run the above algorithm in ``parallel'' on both graphs. If the two graphs have a different number of vertices colored $c$ in $\NN$ at some iteration, the \wlone{} \new{distinguishes} the graphs as non-isomorphic. Moreover, if the number of colors between two iterations, $t$ and $(t+1)$, does not change, i.e., the cardinalities of the images of $C^1_{t}$ and $C^1_{i+t}$ are equal, or, equivalently,
\begin{equation*}
	C^1_{t}(v) = C^1_{t}(w) \iff C^1_{t+1}(v) = C^1_{t+1}(w),
\end{equation*}
for all vertices $v$ and $w$ in $V(G)$, the algorithm terminates. For such $t$, we define the \new{stable coloring}
$C^1_{\infty}(v) = C^1_t(v)$, for $v$ in $V(G)$. The stable coloring is reached after at most $\max \{ |V(G)|,|V(H)| \}$ iterations~\citep{Gro2017}. The algorithm can be generalized to the \kwl, leading to a strict boost in expressive power in distinguishing non-isomorphic graphs. 

\paragraph{Properties of the Weisfeiler--Leman algorithm} The Weisfeiler--Leman algorithm constitutes one of the earliest and most natural approaches to isomorphism testing~\citep{Wei+1976,Wei+1968}, and the theory community has heavily investigated it over the last few decades~\citep{Gro2017}. Moreover, the fundamental nature of the \kwl{} is evident from various connections to other fields such as logic, optimization, counting complexity, and quantum computing. The power and limitations of the \kwl{} can be neatly characterized in terms of logic and descriptive complexity~\citep{Bab1979,Imm+1990}, Sherali-Adams relaxations of the natural integer linear optimization problem for the graph isomorphism problem~\citep{Ast+2013,GroheO15,Mal2014}, homomorphism counts~\citep{Del+2018}, quantum isomorphism games~\citep{Ats+2019}, graph decompositions~\citep{Kie+2023}. In their seminal paper,~\citet{Cai+1992} showed that, for each $k$, a pair of non-isomorphic graphs of size $\mathcal{O}(k)$ exists not distinguished by the \kwl. \citet{Kie2020a} thoroughly surveys more background and related results concerning the expressive power of the \kwl. For $k=1$, the power of the algorithm has been completely characterized~\citep{Arv+2015,Kie+2015}. Moreover, upper bounds on the running time~\citep{Ber+2017a} and the number of iterations for $k=1$~\citep{Kie+2020}, for the folklore $k=2$~\citep{Kie+2016,Lic+2019}, and general $k$~\citep{Gro+2023b} have been shown. For $k$ in $\{1,2\}$,~\citet{Arv+2019} studied the abilities of the folklore \kwl{} to detect and count fixed subgraphs, extending the work of~\citet{Fue+2017}. The former was refined in~\citep{Che+2020a}. \citet{Kie+2019} showed that the folklore $3$-$\mathsf{WL}$ completely captures the structure of planar graphs. The algorithm for logarithmic $k$ plays a prominent role in the recent result of \cite{Bab+2016} improving the best-known running time for the graph isomorphism problem. Recently,~\citet{Gro+2020a} introduced the framework of Deep Weisfeiler--Leman algorithms, which allow the design of a more powerful graph isomorphism test than Weisfeiler--Leman type algorithms. Finally, the emerging connections between the Weisfeiler--Leman paradigm and graph learning are described in two recent surveys~\citep{Gro+2020,Mor+2022}.

\paragraph{Fractional isomorphisms}
A matrix $X \in \RR^{I \times J}$ is called \new{doubly stochastic}
if all entries are nonnegative and we have
$\sum_{i \in I} X_{ij} = 1$ for every $j \in J$ and
$\sum_{j \in J} X_{ij} = 1$ for every $i \in I$, i.e.,
all columns and rows sum to $1$.
Note that such a matrix is necessarily square.
A \new{fractional isomorphism} between graphs $G$ and $H$ is a doubly stochastic matrix $X \in \RR^{V(G) \times V(H)}$  
such that $A_G X = X A_H$, where $A_G$ and $A_H$ are the adjacency matrices of $G$ and $H$, respectively.
$G$ and $H$ are called \new{fractionally isomorphic} if there is a fractional isomorphism between $G$ and $H$.
A result due to \citet{tinhofer_graph_1986, tinhofer_note_1991} states that
the \wlone{} test does not distinguish $G$ and $H$
if and only if they are fractionally isomorphic.

\paragraph{Graph homomorphisms}
Given two graphs $F$ and $G$, a \new{homomorphism} from $F$ to $G$ is a mapping $h \colon V(F) \to V(G)$
such that $h(u)h(v) \in E(G)$ for every $uv \in E(F)$.
Let $\hom(F,G)$ denote the number of homomorphisms from $F$ to $G$.
Then, $t(F, G) \coloneqq \hom(F, G) / {\lvert V(G) \rvert}^{\lvert V(F) \rvert}$
is called the \new{homomorphism density of $F$ in $G$.}
We have $t(F, G) = t(F, W_G)$ for the graphon $W_G$ induced by $G$.
A result due to \citet{dvorak_recognizing_2010} and \citet{Del+2018} states that
the \wlone{} test does not distinguish $G$ and $H$
if and only if we have $t(T, G) = t(T, H)$ for every tree $T$.

\subsection{Message-passing graph neural networks}\label{gnn_app}
Intuitively, MPNNs learn a vectorial representation, i.e., a $d$-dimensional real-valued vector, representing each vertex in a graph by aggregating information from neighboring vertices. Formally, let $G$ be graph with initial vertex features $\hb_{v}^\tup{0}$ in $\RR^{d}$ for $v\in V(G)$. An MPNN architecture consists of a stack of neural network layers, i.e., a composition of permutation-invariant parameterized functions. Similarly to the \wlone, each layer aggregates local neighborhood information, i.e., the neighbors' features, around each vertex and then passes this aggregated information on to the next layer. Following \citet{Gil+2017} and \citet{Sca+2009}, in each layer, $t > 0$,  we compute vertex features 
\begin{equation*}\label{def:gnn}
	\hb_{v}^\tup{t} \coloneqq
	\UPD^\tup{t}\Bigl(\hb_{v}^\tup{t-1},\AGG^\tup{t} \bigl(\oms \hb_{u}^\tup{t-1}
	\mid u\in N(v) \cms \bigr)\Bigr) \in \RR^{d},
\end{equation*}
where  $\UPD^\tup{t}$ and $\AGG^\tup{t}$ may be differentiable parameterized functions, e.g., neural networks.\footnote{Strictly speaking, \citet{Gil+2017} consider a slightly more general setting in which vertex features are computed by $\hb_{v}^\tup{t} \coloneqq
		\UPD^\tup{t}\Bigl(\hb_{v}^\tup{t-1},\AGG^\tup{t} \bigl(\oms (\hb_v^\tup{t-1},\hb_{u}^\tup{t-1},l(v,u))
		\mid u\in N_G(v) \cms \bigr)\Bigr)$.}
In the case of graph-level tasks, e.g., graph classification, one uses
\begin{equation*}\label{readout}
	\hb_G \coloneqq \RO\bigl( \oms \hb_{v}^{\tup{L}}\mid v\in V(G) \cms \bigr) \in \RR^{d},
\end{equation*}
to compute a single vectorial representation based on learned vertex features after iteration $L$.
Again, $\RO$  may be a differentiable parameterized function. To adapt the parameters of the above three functions, they are optimized end-to-end, usually through a variant of stochastic gradient descent, e.g.,~\citep{Kin+2015}, together with the parameters of a neural network used for classification or regression. 

\subsection{Topology}
\label{sec:topology}
We recall some basics from \citet{dudley_2002}.
Given a set $X$, a \new{topology} on $X$ is a collection $\mathcal{T} \subseteq 2^X$ of subsets of $X$
with $\varnothing, X \in \mathcal{T}$ such that $\mathcal{T}$ is closed under finite intersections and arbitrary unions.
Then, the elements of $\mathcal{T}$ are called \new{open sets}
and $(X, \mathcal{T})$ is called a \new{topological space}.
Given two topological spaces, $(X, \mathcal{T})$ and $(Y, \mathcal{U})$,
a function $f \colon X \to Y$ is called \new{continuous} if $f^{-1}(U) \in \mathcal{T}$ for every $U \in \mathcal{U}$.
If $S$ and $I$ are any sets and $f_i \colon S \to X_i$ a function,
where $(X_i, \mathcal{T}_i)$ is a topological space,
for every $i \in I$, then there is a smallest topology $\mathcal{T}$ on $S$ for which every $f_i$ is continuous, called \new{the topology generated by the $f_i$}.
Given a topological space $(X, \mathcal{T})$,
one can define \new{nets} and the notion of \new{convergence} of a net;
nets do in general topological spaces what sequences do in metric spaces.

Given a pseudometric space $(X, d)$, the collection $\mathcal{T}$ of all unions of
open balls $B(x, r) \coloneqq \{y \in X \mid d(x,y) < r\}$,
where $x \in X$ and $r > 0$, is a topology, and $\mathcal{T}$ is called \new{metrizable}
and \new{metrized} by $d$.
If $\mathcal{T}$ is metrizable, then a set $U \subseteq S$ is open if and only if for every $x \in U$ and sequence $x_i  \rightarrow x$, there is some $j$ with $x_i \in U$ for every $i \ge j$.
Hence, two metrizable topologies are equal if they have the same convergent sequences.
A topological space $(X, \mathcal{T})$ is called \new{Hausdorff} if,
for all $x \neq y$ in $X$, there are open sets $U$ and $V$
with $x \in U$, $y \in V$, and $U \cap V = \varnothing$.
A pseudometric space $(X, d)$ is Hausdorff if and only if $d$ is a metric.
A topological space $(X, \mathcal{T})$ is called \new{separable}
if $X$ has a countable dense subset,
where a subset of $X$ is called \new{dense} if its closure is $X$.

A topological space $(X, \mathcal{T})$ is called \new{compact} if whenever $\mathcal{U} \subseteq \mathcal{T}$ and $X = \bigcup \mathcal{U}$, there is a finite $\mathcal{V} \subseteq \mathcal{U}$ such that $X = \bigcup \mathcal{V}$.
If $(X, \mathcal{T})$ is a compact topological space and $f \colon X \to Y$
a surjective continuous function from $X$ to another topological space $(Y, \mathcal{U})$,
then $(Y, \mathcal{U})$ is compact.
By Tychonoff's theorem, the product of any collection of compact topological spaces
is compact w.r.t.\ the product topology.
A metric space $(X, d)$ is compact if and only if $(X, d)$ is complete and totally bounded.
Every compact metric space is separable.

Given a set $X$, a \new{$\sigma$-algebra} is a collection $\mathcal{A} \subseteq 2^X$ of subsets of $X$
with $\varnothing, X \in \mathcal{A}$ such that $\mathcal{A}$ is closed under complements 
and countable unions.
Then, $(X, \mathcal{A})$ is called a \new{measurable space}.
For every collection $\mathcal{C} \subseteq 2^X$, there is a smallest $\sigma$-algebra including $\mathcal{C}$,
called \new{the $\sigma$-algebra generated by $\mathcal{C}$}.
A $\sigma$-algebra that is generated by a topology is called a \new{Borel $\sigma$-algebra};
its elements are called \new{Borel sets}.
A measurable space $(X, \borel)$ is called a \new{standard Borel space}
if there is a separable completely metrizable topology $\mathcal{T}$ on $X$
such that $\borel$ is the $\sigma$-algebra generated by $\mathcal{T}$.
Here, a topology is called completely metrizable if there is a metric $d$ such that $(X,d)$ is a complete metric space
and $d$ metrizes $\mathcal{T}$.
Hence, every compact metric space with its Borel $\sigma$-algebra is a standard Borel space.

Let $(X, \mathcal{A})$ be a measurable space.
A function $\mu \colon \mathcal{A} \to [0,\infty]$ is called a \new{measure} if
$\mu(\varnothing) = 0$ and it is countably additive, i.e., if whenever $A_n \in \mathcal{A}$ for $n = 1, 2, \dots$  are pairwise disjoint,
then $\mu( \bigcup_{n \ge 1} A_n) = \sum_{n \ge 1} \mu(A_n)$.
The measure $\mu$ is called \new{finite} if $\mu(X) < \infty$.

\subsection{The Stone--Weierstrass theorem}
\label{sec:stoneWeierstrass}

The Stone--Weierstrass theorem is the primary tool we use for proving
our universal approximation theorem.
Here, we state the formulation from \cite[Theorem $2.4.11$]{dudley_2002}.
An \new{algebra} is a vector space that is additionally closed under multiplication.
Recall that, for a compact Hausdorff space $K$, we denote by $C(K, \RR)$
the set of all real-valued continuous functions on $K$ and
$d_\text{sup}(f,g) \coloneqq \sup_{x \in K} \lvert f(x) - g(x) \rvert$
for $f,g \in C(K, \RR)$.
\begin{theorem}[Stone--Weierstrass theorem]
    Let $K$ be a compact Hausdorff space and let $\mathcal{F}$ be an algebra included
    in $C(K, \RR)$ such that $\mathcal{F}$ separates points and contains the constants.
    Then, $\mathcal{F}$ is  dense in $C(K, \RR)$ for $d_\text{sup}$.
\end{theorem}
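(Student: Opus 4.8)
The plan is to follow the classical lattice argument. Write $\mathcal{A} \coloneqq \overline{\mathcal{F}}$ for the closure of $\mathcal{F}$ in $(C(K,\RR), d_\text{sup})$. Since addition, scalar multiplication, and pointwise multiplication are continuous for the sup norm, $\mathcal{A}$ is again an algebra; it still separates points and still contains the constants. As $\mathcal{A}$ is closed, it suffices to prove $\mathcal{A} = C(K,\RR)$, i.e.\ that every $f \in C(K,\RR)$ and every $\varepsilon > 0$ admit some $h \in \mathcal{A}$ with $d_\text{sup}(h,f) \le \varepsilon$.

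First I would show $\mathcal{A}$ is closed under $f \mapsto \lvert f \rvert$ and hence is a sublattice of $C(K,\RR)$. Given $f \in \mathcal{A}$ with $\lVert f \rVert_\infty \le M$, the map $t \mapsto \lvert t \rvert$ on $[-M,M]$ is a uniform limit of polynomials; to avoid circularity (this statement is a special case of what we are proving) I would not cite the one-dimensional Weierstrass theorem but instead approximate $\sqrt{s}$ on $[0,1]$ via the recursion $q_0 \coloneqq 0$, $q_{n+1}(s) \coloneqq q_n(s) + \tfrac{1}{2}(s - q_n(s)^2)$, which increases to $\sqrt{s}$ pointwise and hence uniformly by Dini's theorem, and set $p_n(t) \coloneqq M\, q_n(t^2/M^2)$. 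Since $\mathcal{A}$ is an algebra containing constants, $p_n(f) \in \mathcal{A}$, and $p_n(f) \to \lvert f \rvert$ uniformly, so $\lvert f \rvert \in \mathcal{A}$. Consequently $\max(f,g) = \tfrac{1}{2}(f + g + \lvert f-g\rvert)$ and $\min(f,g) = \tfrac{1}{2}(f + g - \lvert f-g\rvert)$ lie in $\mathcal{A}$ whenever $f,g$ do.

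Next I would record two-point interpolation: for distinct $x,y \in K$ and reals $a,b$ there is $h \in \mathcal{A}$ with $h(x) = a$, $h(y) = b$; indeed, choosing $g \in \mathcal{A}$ with $g(x) \ne g(y)$ by point separation, the affine combination $h \coloneqq a + (b-a)\frac{g - g(x)}{g(y) - g(x)}$ works and lies in $\mathcal{A}$. Then comes the compactness squeeze. Fix $f$ and $\varepsilon > 0$ and pick $h_{x,y} \in \mathcal{A}$ with $h_{x,y}(x) = f(x)$, $h_{x,y}(y) = f(y)$ for every pair. Fixing $x$: the sets $\{z : h_{x,y}(z) < f(z) + \varepsilon\}$ are open, contain $y$, and finitely many cover the compact $K$; the minimum $h_x$ of the corresponding $h_{x,y}$ lies in $\mathcal{A}$, satisfies $h_x < f + \varepsilon$ on $K$, and $h_x(x) = f(x)$. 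Now the sets $\{z : h_x(z) > f(z) - \varepsilon\}$ are open, contain $x$, and finitely many cover $K$; the maximum $h$ of the corresponding $h_x$ lies in $\mathcal{A}$ and satisfies $f - \varepsilon < h < f + \varepsilon$ on $K$. Hence $d_\text{sup}(h,f) \le \varepsilon$, and since $\mathcal{A}$ is closed, $f \in \mathcal{A}$, completing the proof.

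The main obstacle is the lattice step: one must produce the polynomial approximations to $\lvert t \rvert$ by a genuinely self-contained device (the $\sqrt{s}$ recursion above, or an explicit error estimate) so that the argument does not silently invoke the interval version of the theorem it is meant to generalize; once $\mathcal{A}$ is known to be a sublattice, everything else is a routine double application of compactness and the two-point interpolation trick.
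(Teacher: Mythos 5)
The paper does not prove this theorem; it simply quotes it as a background result from Dudley~\cite[Theorem~2.4.11]{dudley_2002}, so there is no proof of the paper's to compare against. What you have written is the standard lattice proof, and it is correct: the $\sqrt{s}$ recursion $q_{n+1}(s) = q_n(s) + \tfrac12(s - q_n(s)^2)$ increases to $\sqrt{s}$ on $[0,1]$ (one checks $0 \le q_n(s) \le \sqrt{s}$ by induction using $\sqrt{s} - q_{n+1}(s) = (\sqrt{s}-q_n(s))(1 - \tfrac12(\sqrt{s}+q_n(s)))$), Dini upgrades this to uniform convergence, so $\lvert f\rvert$ and hence $\max,\min$ stay in the closed algebra $\mathcal{A}$; the two-point interpolation exploits that $\mathcal{A}$ contains constants; and the two compactness passes (first $\min$ over $y$, then $\max$ over $x$) squeeze any $f$ within $\varepsilon$. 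The only cosmetic points worth adding: when $K$ is a singleton the claim is trivial since $\mathcal{F}$ contains the constants, and in the first compactness step the diagonal pair $y=x$ can be ignored since $x$ already lies in $U_y$ for every $y$. This is essentially the same argument one finds in Dudley, so there is no methodological divergence to report.
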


\subsection{The cut distance}
The \new{cut norm} of a kernel $U$ is
$\lVert U \rVert_\square \coloneqq \sup_{S, T}
    \lvert \int_{S \times T} W(x, y) \, dx dy \rvert$,
where the supremum is taken over all measurable sets $S, T \subseteq [0,1]$.
The cut distance of two graphons $U$ and $W$ is defined by
$\cutDist(U, W) \coloneqq \inf_{\varphi} \lVert U^\varphi - W \rVert_\square$,
where the infimum is taken over all invertible measure-preserving maps
$\varphi \colon [0,1] \to [0,1]$ and
$U^\varphi(x,y) \coloneqq U(\varphi(x), \varphi(y))$ for all $x,y \in X$.
See the book of~\citet{Lovasz2012} for a thorough exposition.

\subsection{Distributions on iterated degree measures}
\citet{grebikFractionalIsomorphismGraphons2021} give the following definition of a DIDM, which we slightly adapt to our definitions.
The Kolmogorov Consistency Theorem yields that, for every $\alpha \in \MM$,
there is a unique $\mu_\alpha \in \measures(\MM)$ such that
$(p_{\infty, h})_* \mu_\alpha = \alpha_{h+1}$ for every $h \in \NN$. 
Then, a probability measure $\nu \in \pMeasures(\MM)$ is called a \textit{distribution on iterated degree measures} (DIDM)
if $\mu_\alpha$ is absolutely continuous w.r.t.\ $\nu$
with the Radon-Nikodym derivative satisfying $0 \le \frac{d\mu_\alpha}{d\nu} \le 1$
for $\nu$-almost every $\alpha$;
intuitively, this ensures that, if we have some mass $m$ of points of color $c$,
then every point in the graph may have at most $m$ neighbors of color $c$.
Then, $\nu_W$ for a graphon $W$ satisfies this definition,
and conversely, that every DIDM defines a kernel on $\MM \times \MM$
that is bounded by one almost everywhere.

\section{Extended related work}\label{sec:ext_rel}

Here, we provided an extended discussion on related work.

\subsection{Limitations of MPNNs and the \texorpdfstring{\wlone{}}{1-WL}} 
Recently, connections between GNNs and Weisfeiler--Leman type algorithms have been shown~\citep{Bar+2020,Gee+2020a,Mor+2019,Xu+2018b}. Specifically,~\citet{Mor+2019} and~\citet{Xu+2018b} showed that the \wlone{} limits the expressive power of any possible GNN architecture in distinguishing non-isomorphic graphs. In turn, these results have been generalized to the \kwl{}, see, e.g.,~\citet{Azi+2020,Gee+2020,Mar+2019,Mor+2019,Morris2020b,Mor+2022b}, and connected to permutation-equivariant functions approximation over graphs, see, e.g.,~\citet{Che+2019,Mae+2019,Azi+2020,geerts2022}. Further,~\citet{Aam+2022} devised an improved analysis using randomization. Recent works have extended the expressive power of GNNs, e.g, by using random features~\citep{Abb+2020,Das+2020,Sat+2020}, equivariant graph polynomials~\citep{Pun+2023}, or homomorphism and subgraph counts~\citep{Bar+2021,botsas2020improving,Hoa+2020,chen2020can},  see~\citet{Mor+2022} for a thorough survey. Recently,~\citet{Gro+2023} showed tight connections between GNNs' expressivity and circuit complexity. Moreover,~\citet{Ros+2023} investigated the expressive power of different aggregation functions beyond sum aggregation. 

\subsection{Graph metrics beyond edit distance}\label{graph_metrics}

As discussed before, the edit distance does not capture the structural similarity of graphs well. A metric that arguably captures the similarity of graphs well is the \new{cut distance}, which was introduced by
\citet{frieze_quick_1999a} and provides the central notion
of convergence in the theory of dense graph limits.
Graph similarity can also be defined via graph homomorphism densities, and as part of the aforementioned theory,
\citet{lovasz_limits_2006, borgs_convergent_2008} proved that this is equivalent to the cut distance.
\citet{boker_graph_2021a} introduced a variant of the cut distance based on the well-known system of linear equations
defining fractional isomorphism, the \new{tree distance}, which is
equivalent to similarity defined via tree homomorphism densities.
To study MPNNs' stability properties,
\citet{chuang2022tree} introduced \new{Tree Mover's Distance} to compare two graphs via
hierarchical optimal transport between their computation trees.
Similarly, \citet{ChenLMWW22} introduced the \new{Weisfeiler--Leman distance},
a polynomial-time computable pseudometric on graphs by combining techniques
from optimal transport with the \wlone, and gave an interpretation in terms of
stochastic processes \citep{chen2023weisfeilerlehman}.

\clearpage{}\section{Proofs}
\label{sec:missingProofs}

Here, we collect proofs that we omitted from the main body of the paper.

\subsection{Equivalence of message-passing graph neural networks}
\label{sec:definitionsMPNNs}

Here, we show that, first, for a graph $G$,
the output of an MPNN on $G$ equals
the output of the MPNN on the corresponding graphon $W_G$ and, second,
the output of an MPNN on a graphon $W$ equals
the output of the corresponding MPNN on the corresponding
DIDM $\nu_W$ of $W$.
Hence, it suffices to consider MPNNs on DIDMs.
We note that, while $\phib$ is an MPNN model and $\psi$ is Lipschitz
in the following theorems, we actually do not need Lipschitz continuity
and continuous functions would suffice.

\begin{theorem}
\label{claim:MPNN_equivalence_graphs}
    Let $G$ be a graph and $\boldsymbol{\varphi}$ be an $L$-layer MPNN model.
    Let $(I_v)_{v \in V(G)}$ be the partition of $[0,1]$ used in the construction
    of $W_G$ from $G$.
    Then, for all $t \in [L]$, $v \in V(G)$, and $x \in I_v$,
    \begin{equation*}
        \hb_{v}^{\tup{t}}
        = \hb_{x}^{\tup{t}}.
    \end{equation*}
\end{theorem}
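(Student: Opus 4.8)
The plan is a straightforward induction on $t$, exploiting two facts about the induced graphon: $W_G$ is constant (equal to $0$ or $1$) on each block $I_u \times I_w$, and $\lambda(I_w) = 1/\lvert V(G) \rvert$ for every $w \in V(G)$. The point is that the only way $x$ enters the definition of $\hb_x^{\tup t}$ is through the function $y \mapsto W_G(x,y)$, which depends on $x$ solely through the block $I_v$ containing it; so the graphon-side recurrence collapses, block by block, to the graph-side recurrence.

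For the base case $t = 0$, both $\hb_v^{\tup 0}$ and $\hb_x^{\tup 0}$ equal the constant $\varphi_0 \in \RR^{d_0}$ by definition, so there is nothing to prove. For the inductive step, I would assume $\hb_u^{\tup{t-1}} = \hb_y^{\tup{t-1}}$ for every $u \in V(G)$ and every $y \in I_u$; in particular $y \mapsto \hb_y^{\tup{t-1}}$ is the step function with value $\hb_u^{\tup{t-1}}$ on $I_u$, hence measurable and bounded, so the integral below is well-defined. Fix $v \in V(G)$ and $x \in I_v$. Since $W_G(x,y) = 1$ exactly when $y \in I_w$ for some $w \in N(v)$ and $W_G(x,y) = 0$ otherwise, splitting the integral over the blocks $I_w$ and using the inductive hypothesis gives
\[
\int_{[0,1]} W_G(x,y)\, \hb_y^{\tup{t-1}} \, d\lambda(y)
= \sum_{w \in N(v)} \int_{I_w} \hb_y^{\tup{t-1}} \, d\lambda(y)
= \sum_{w \in N(v)} \lambda(I_w)\, \hb_w^{\tup{t-1}}
= \frac{1}{\lvert V(G) \rvert} \sum_{w \in N(v)} \hb_w^{\tup{t-1}}.
\]
This is exactly the argument of $\varphi_t$ in the graph-side recurrence for $\hb_v^{\tup t}$, so applying $\varphi_t$ to both sides yields $\hb_v^{\tup t} = \hb_x^{\tup t}$ and closes the induction.

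There is no genuine obstacle here; the statement is essentially bookkeeping. The only points that merit a word of care are that the step function $y \mapsto \hb_y^{\tup{t-1}}$ is measurable, so the defining integral makes sense (immediate, being a finite combination of indicators of intervals with constant vector coefficients), and that the identity $\hb_v^{\tup t} = \hb_x^{\tup t}$ holds for \emph{every} $x \in I_v$ rather than merely for almost every $x$ — which is automatic, since $W_G(x,\cdot)$ depends on $x$ only through the block $I_v \ni x$. Finally, although it is not part of this statement, the graphon-level identity $\hb_G = \hb_{W_G}$ follows by the same block decomposition applied to the readout integral, using $\hb_x^{\tup L} = \hb_v^{\tup L}$ for $x \in I_v$ just established.
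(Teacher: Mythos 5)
Your proof is correct and follows essentially the same route as the paper: induction on $t$, splitting the integral over the blocks $I_u$, using that $W_G$ is block-constant with $\lambda(I_u) = 1/\lvert V(G)\rvert$, and applying the induction hypothesis before $\varphi_t$. The only cosmetic difference is that you restrict the sum to $N(v)$ at once while the paper first sums over all vertices and then drops the non-neighbors, which changes nothing of substance.
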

\begin{proof}
    We prove the claim by induction on $t$.

    \emph{Base of the induction.}
    The first space of colors is $\MM_0=\{1\}$.
    Then, for all $v \in V(G)$ and $x \in I_v$,
    \begin{equation*}
        \hb_{v}^{\tup{0}}
        = \varphi_0
        = \hb_{x}^{\tup{0}}.
    \end{equation*}

    \emph{Induction step.}
    The induction assumption is that
    \( \hb_{v}^{\tup{t-1}} = \hb_{x}^{\tup{t-1}} \)
    for all $v \in V(G)$ and $x \in I_v$.
    Then, for all $v \in V(G)$ and $x \in I_v$, we have
    \begin{align*}
        \hb_{x}^{\tup{t}}
        = \varphi_t \biggl(\int_{[0,1]} W_G(x,y) \hb_{y}^\tup{t-1} \, d\lambda(y) \biggr)
        &= \varphi_t \biggl(\sum_{u \in V(G)} \int_{I_u} W_G(x,y) \hb_{y}^\tup{t-1} \, d\lambda(y) \biggr)\\
        &= \varphi_t \biggl(\sum_{u \in V(G)} \int_{I_u} W_G(x,y) \hb_{u}^\tup{t-1} \, d\lambda(y) \biggr)\\
        &= \varphi_t \biggl( \frac{1}{\lvert V(G) \rvert} \sum_{u \in N(v)} \hb_{u}^\tup{t-1} \biggr)
        = \hb_v^{\tup{t}}.
    \end{align*}
\end{proof}

\begin{theorem}
    Let $G$ be a graph, let $\boldsymbol{\varphi}$ be an $L$-layer MPNN model,
    and let $\psi$ be Lipschitz.
    Let $(I_v)_{v \in V(G)}$ be the partition of $[0,1]$ used in the construction $W_G$ from $G$.
    Then,
    \begin{equation*}
        \hb_{G} = \hb_{W_G}.
    \end{equation*}
\end{theorem}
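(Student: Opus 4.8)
The plan is to reduce the claim directly to the vertex-level identity of \Cref{claim:MPNN_equivalence_graphs} by unfolding the two readout definitions. Recall that $\hb_{G} = \psi\bigl(\frac{1}{\lvert V(G)\rvert}\sum_{v \in V(G)} \hb_{v}^{\tup{L}}\bigr)$ and $\hb_{W_G} = \psi\bigl(\int_{[0,1]} \hb_{x}^{\tup{L}}\,d\lambda(x)\bigr)$, so since $\psi$ is a well-defined function it suffices to show that the two arguments of $\psi$ coincide, i.e.\ that $\int_{[0,1]} \hb_{x}^{\tup{L}}\,d\lambda(x) = \frac{1}{\lvert V(G)\rvert}\sum_{v \in V(G)} \hb_{v}^{\tup{L}}$.

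First I would split the integral over the partition $(I_v)_{v \in V(G)}$ of $[0,1]$ used in the construction of $W_G$, writing $\int_{[0,1]} \hb_{x}^{\tup{L}}\,d\lambda(x) = \sum_{v \in V(G)} \int_{I_v} \hb_{x}^{\tup{L}}\,d\lambda(x)$. Then I would invoke \Cref{claim:MPNN_equivalence_graphs} with $t = L$ (for $L = 0$ one instead uses the base identity $\hb_{v}^{\tup{0}} = \varphi_0 = \hb_{x}^{\tup{0}}$ directly), which gives $\hb_{x}^{\tup{L}} = \hb_{v}^{\tup{L}}$ for every $v \in V(G)$ and every $x \in I_v$. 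Hence the integrand is constant on each $I_v$, so $\int_{I_v} \hb_{x}^{\tup{L}}\,d\lambda(x) = \lambda(I_v)\,\hb_{v}^{\tup{L}} = \frac{1}{\lvert V(G)\rvert}\,\hb_{v}^{\tup{L}}$, using that each interval $I_v$ has Lebesgue measure $1/\lvert V(G)\rvert$ by construction of $W_G$. Summing over $v \in V(G)$ yields the equality of the two arguments of $\psi$, and applying $\psi$ to both sides finishes the proof.

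There is essentially no obstacle here: the statement is an immediate corollary of \Cref{claim:MPNN_equivalence_graphs}, and beyond it the argument uses only the definitions of the graph-level and graphon-level readouts together with the fact that all partition intervals carry equal mass $1/\lvert V(G)\rvert$.
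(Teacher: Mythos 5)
Your proof is correct and is essentially the paper's own argument: both split the graphon-level integral over the partition $(I_v)_{v\in V(G)}$, apply \Cref{claim:MPNN_equivalence_graphs} to replace $\hb_{x}^{\tup{L}}$ by $\hb_{v}^{\tup{L}}$ on each block $I_v$, and use $\lambda(I_v)=1/\lvert V(G)\rvert$ to recover the graph-level readout before applying $\psi$. Your side remark about the $L=0$ case is a minor but fair observation, since the cited lemma is stated only for $t\in[L]$, though the paper handles it implicitly via the same base identity $\hb_{v}^{\tup{0}}=\varphi_0=\hb_{x}^{\tup{0}}$.
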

\begin{proof}
    With \Cref{claim:MPNN_equivalence_graphs}, we get
    \begin{align*}
        \hb_{W_G}
        = \psi\biggl(\int_{[0,1]} \hb_{x}^{\tup{L}} \,d\lambda(x) \biggr)
        &= \psi\biggl(\sum_{v \in V(G)} \int_{I_v} \hb_{x}^{\tup{L}} \,d\lambda(x) \biggr)\\
        &= \psi\biggl(\sum_{v \in V(G)} \int_{I_v} \hb_{v}^{\tup{L}} \,d\lambda(x) \biggr)\\
        &= \psi\biggl(\frac{1}{\lvert V(G) \rvert} \sum_{v \in V(G)} \hb_{v}^{\tup{L}}\biggr)
        = \hb_{G}.
    \end{align*}
\end{proof}

To prove the equivalence of an MPNN on a graphon $W$ and the corresponding
DIDM $\nu_W$, we need the following definition.
Let $\delta \colon [0,1]\rightarrow \RR$ be a nonnegative measurable function.
Then, define the measure $\nu_\delta$ by
\[\nu_\delta (A) \coloneqq \int_{A} \delta \, d\lambda\]
for measurable $A \subseteq [0,1]$.
The following lemma,
which can be interpreted as the ``well-definedness of the change of variable $z=\delta(y)$'',
is taken from
\citep[Theorem $16.11$]{billingsley_probability_1995}.
\begin{lemma}
    \label{claim:repeated_int}
    Let $\delta:[0,1]\rightarrow \RR$ be a nonnegative measurable function.
    Then, a measurable function $f \colon [0,1] \to \RR$ is integrable with respect to $\nu_\delta$
    if and only if $f \delta$ is integrable with respect to $\lambda$, in which case
    \[\int_{A} f \, d \nu_\delta = \int_{A} f \delta \, d \lambda\]
    holds for every measurable $A$.
\end{lemma}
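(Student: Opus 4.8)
The plan is to prove the statement by the standard measure-theoretic approximation argument (the ``standard machine''): establish the identity $\int_A f\,d\nu_\delta = \int_A f\delta\,d\lambda$ first for indicator functions, then for nonnegative simple functions by linearity, then for arbitrary nonnegative measurable functions by the monotone convergence theorem, and finally deduce the integrability equivalence and the identity for general measurable $f$ by splitting into positive and negative parts. As a preliminary step I would check that $\nu_\delta$ is a well-defined measure on the Borel $\sigma$-algebra of $[0,1]$: $\nu_\delta(\varnothing) = 0$ is immediate, and countable additivity follows by applying monotone convergence to the partial sums $\sum_{k=1}^n \mathbf{1}_{A_k}\delta$ for pairwise disjoint Borel sets $(A_k)_k$. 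Note that $\nu_\delta$ need not be finite (it is finite exactly when $\delta \in L^1(\lambda)$), but this is harmless since monotone convergence holds for arbitrary measures.

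Next, for an indicator $f = \mathbf{1}_B$ with $B$ Borel, the identity is immediate from the definition of $\nu_\delta$, since $\int_A \mathbf{1}_B\,d\nu_\delta = \nu_\delta(A\cap B) = \int_{A\cap B}\delta\,d\lambda = \int_A \mathbf{1}_B\delta\,d\lambda$; by linearity this extends to every nonnegative simple function. For a general nonnegative measurable $f$, I would pick simple functions $0 \le s_1 \le s_2 \le \cdots$ with $s_n \uparrow f$ pointwise; since $\delta \ge 0$, also $s_n\delta \uparrow f\delta$ pointwise, and applying monotone convergence to $(s_n\mathbf{1}_A)_n$ with respect to $\nu_\delta$ and to $(s_n\delta\mathbf{1}_A)_n$ with respect to $\lambda$ yields
\[
\int_A f\,d\nu_\delta = \lim_{n\to\infty}\int_A s_n\,d\nu_\delta = \lim_{n\to\infty}\int_A s_n\delta\,d\lambda = \int_A f\delta\,d\lambda,
\]
where both sides are allowed to equal $+\infty$.

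Finally, for arbitrary measurable $f \colon [0,1] \to \RR$, applying the nonnegative case to $|f|$ and using $|f\delta| = |f|\delta$ gives $\int |f|\,d\nu_\delta = \int |f\delta|\,d\lambda$, so $f$ is $\nu_\delta$-integrable if and only if $f\delta$ is $\lambda$-integrable. Assuming this holds, I would write $f = f^+ - f^-$, apply the nonnegative identity to $f^+$ and $f^-$ (both with finite integrals), and subtract to obtain $\int_A f\,d\nu_\delta = \int_A f\delta\,d\lambda$ for every measurable $A$. There is no real obstacle here: the only points needing a little care are the well-definedness of $\nu_\delta$ as a possibly infinite measure and the bookkeeping with positive and negative parts; everything else is a direct invocation of monotone convergence and linearity. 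Alternatively, one may simply cite \citep[Theorem~$16.11$]{billingsley_probability_1995}.
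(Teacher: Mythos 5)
Your proof is correct. The paper itself does not prove this lemma at all—it simply cites it as \citep[Theorem~$16.11$]{billingsley_probability_1995}—and your standard-machine argument (indicators, simple functions, monotone convergence, then splitting into positive and negative parts) is precisely the textbook proof of that cited result, as you yourself note in your closing sentence.
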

\begin{theorem}
\label{claim:MPNN_equivalence}
    Let $W\in\mathcal{W}$ and $\boldsymbol{\varphi}$ be an $L$-layer MPNN model.
    Then, for every $t \in [L]$ and almost every $x\in[0,1]$,
    \begin{equation*}
        \hb_{x}^{\tup{t}}
        = \hb_{\mathsf{i}_{W,t}(x)}^{\tup{t}}
        = \hb_{\mathsf{i}_{W}(x)}^{\tup{t}}.
    \end{equation*}
\end{theorem}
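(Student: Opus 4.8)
The plan is to prove the claim by induction on $t$, with $t = 0$ as the base case. First note that the second equality, $\hb_{\mathsf{i}_{W,t}(x)}^{\tup{t}} = \hb_{\mathsf{i}_{W}(x)}^{\tup{t}}$, is immediate from the definitions and holds for \emph{every} $x$: by construction $\hb_{\alpha}^{\tup{t}} \coloneqq \hb_{p_{\infty,t}(\alpha)}^{\tup{t}}$ for $\alpha \in \MM$, and $p_{\infty,t}(\mathsf{i}_W(x)) = \mathsf{i}_{W,t}(x)$ since $\mathsf{i}_W(x) = \prod_{h \in \NN} \mathsf{i}_{W,h}(x)$ and $p_{\infty,t}$ is the projection onto the $t$-th coordinate. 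So it only remains to prove the first equality, $\hb_x^{\tup{t}} = \hb_{\mathsf{i}_{W,t}(x)}^{\tup{t}}$, for almost every $x$. For $t = 0$ this is clear since $\hb_x^{\tup{0}} = \varphi_0 = \hb_1^{\tup{0}} = \hb_{\mathsf{i}_{W,0}(x)}^{\tup{0}}$, using $\mathsf{i}_{W,0}(x) = 1$ for all $x$.

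For the inductive step, assume $\hb_y^{\tup{t-1}} = \hb_{\mathsf{i}_{W,t-1}(y)}^{\tup{t-1}}$ for almost every $y$, and fix $x$ such that $\delta \coloneqq W(x,\blank)$ is a nonnegative measurable function on $[0,1]$ (this holds for almost every $x$). The key observation is that the iterated degree measure $\mathsf{i}_{W,t}(x)$ is precisely a push-forward of a weighted Lebesgue measure: with the measure $\nu_\delta$ defined just before \Cref{claim:repeated_int}, the defining equation~\eqref{eq.iterated} reads
\[
    \mathsf{i}_{W,t}(x)(A) = \int_{\mathsf{i}_{W,t-1}^{-1}(A)} W(x,y) \, d\lambda(y) = \nu_\delta\bigl(\mathsf{i}_{W,t-1}^{-1}(A)\bigr) = \bigl((\mathsf{i}_{W,t-1})_* \nu_\delta\bigr)(A)
\]
for every $A \in \borel(\MM_{t-1})$, that is, $\mathsf{i}_{W,t}(x) = (\mathsf{i}_{W,t-1})_* \nu_\delta$. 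Since $\hb_\blank^{\tup{t-1}}$ is continuous on the compact space $\MM_{t-1}$, it is bounded and measurable, so the change-of-variables formula for push-forward measures followed by \Cref{claim:repeated_int} yields
\[
    \int_{\MM_{t-1}} \hb_\blank^{\tup{t-1}} \, d\mathsf{i}_{W,t}(x)
    = \int_{[0,1]} \hb_{\mathsf{i}_{W,t-1}(y)}^{\tup{t-1}} \, d\nu_\delta(y)
    = \int_{[0,1]} W(x,y) \, \hb_{\mathsf{i}_{W,t-1}(y)}^{\tup{t-1}} \, d\lambda(y).
\]

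On the other hand, by the inductive hypothesis the integrands $W(x,y)\,\hb_y^{\tup{t-1}}$ and $W(x,y)\,\hb_{\mathsf{i}_{W,t-1}(y)}^{\tup{t-1}}$ agree for almost every $y$ and are bounded, so their Lebesgue integrals coincide; combined with the previous display this gives $\int_{[0,1]} W(x,y)\,\hb_y^{\tup{t-1}} \, d\lambda(y) = \int_{\MM_{t-1}} \hb_\blank^{\tup{t-1}} \, d\mathsf{i}_{W,t}(x)$. Applying the continuous function $\varphi_t$ to both sides and comparing with the definition of $\hb_x^{\tup{t}}$ on graphons and of $\hb_{\mathsf{i}_{W,t}(x)}^{\tup{t}}$ on IDMs finishes the induction.

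The main obstacle is bookkeeping rather than ideas: one must check that $\hb_\blank^{\tup{t-1}} \colon \MM_{t-1} \to \RR^{d_{t-1}}$ and $y \mapsto \hb_{\mathsf{i}_{W,t-1}(y)}^{\tup{t-1}}$ are measurable and bounded, so that the integrals against the finite measures $\mathsf{i}_{W,t}(x)$ and $\nu_\delta$ are well-defined, and that the null set coming from the inductive hypothesis does not affect the Lebesgue integrals. The one genuinely structural step is recognizing~\eqref{eq.iterated} as the push-forward identity $\mathsf{i}_{W,t}(x) = (\mathsf{i}_{W,t-1})_* \nu_{W(x,\blank)}$, which is exactly what makes the change of variables between an integral over $\MM_{t-1}$ and one over $[0,1]$ applicable.
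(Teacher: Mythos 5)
Your proof is correct and follows essentially the same route as the paper's: induction on $t$, with the key step being to recognize $\mathsf{i}_{W,t}(x) = (\mathsf{i}_{W,t-1})_* \nu_{W(x,\blank)}$ and then apply the change-of-variables lemma (\Cref{claim:repeated_int}) to convert the integral over $\MM_{t-1}$ into one over $[0,1]$. The only differences are cosmetic: you handle the second equality separately at the outset (the paper simply says it follows from the definitions) and you spell out the measurability/boundedness bookkeeping a bit more explicitly.
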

\begin{proof}
    We only have to show the first equality. Then,
    the second follows directly from the definitions of $\hb_{\blank}^{\tup{t}} \colon \MM \to \RR^{d_t}$
    and $\mathsf{i}_{W}$.

    \emph{Base of the induction.}
    The first space of colors is $\MM_0=\{1\}$.
    Then,
    \begin{equation*}
        \hb_{x}^{\tup{0}}
        = \varphi_0
        = \hb_{\mathsf{i}_{W,t}(x)}^{\tup{0}}.
    \end{equation*}

    \emph{Induction step.}
    The induction assumption is that
     \( \hb_{x}^{\tup{t-1}} = \hb_{\mathsf{i}_{W,t-1}(x)}^{\tup{t-1}} \)
    for almost every $x \in [0,1]$.
    Let us prove the induction step for $t$. We have
    \begin{align*}
        \hb_{x}^{\tup{t}}
        &= \varphi_t\biggl(\int_{[0,1]} W(x,y) \hb_{y}^{\tup{t-1}} \, d\lambda(y) \biggr).
    \end{align*}
    Recall that for any $A \in \borel(\MM_{h-1})$,
    \[\big(\mathsf{i}_{W,t}(x)\big)(A) = \int_{\mathsf{i}_{W,t-1}^{-1}(A)} W(x,y) \,d\lambda(y),\]
    so, by the notation $\nu_{W(x, \blank)}$ introduced before \Cref{claim:repeated_int},
    \[\mathsf{i}_{W,t}(x)  = (\mathsf{i}_{W, t-1})_*\nu_{W(x, \blank)}.\]
    Hence, by \Cref{claim:repeated_int} with $\delta = W(x, \blank)$ and
    $f = \hb_{\blank}^{\tup{t-1}}$, we have
    \begin{align*}
        \hb_{\mathsf{i}_{W,t}(x)}^{\tup{t}}
        = \varphi_t \biggl( \int_{\MM_{t-1}} \hb_{\blank}^{\tup{t-1}} \, d\mathsf{i}_{W,t}(x) \biggr)
        &= \varphi_t \biggl( \int_{\MM_{t-1}} \hb_{\blank}^{\tup{t-1}} \,d(\mathsf{i}_{W,t-1})_*\nu_{W(x,\blank)} \biggr)\\
        &= \varphi_t \biggl( \int_{[0,1]} \hb_{\blank}^{\tup{t-1}} \circ \mathsf{i}_{W,t-1} \,d\nu_{W(x,\blank)} \biggr)\\
        &= \varphi_t \biggl( \int_{[0,1]} W(x,y) \hb_{\mathsf{i}_{W,t-1}(y)}^{\tup{t-1}} \,d\lambda(y) \biggr).
    \end{align*}
    Hence, by the induction assumption,
    \(\hb_{x}^{\tup{t}} = \hb_{\mathsf{i}_{W,t}(x)}^{\tup{t}}.\)
\end{proof}

\begin{theorem}
    Let $W\in\mathcal{W}$, let $\boldsymbol{\varphi}$ be an $L$-layer MPNN model,
    and let $\psi$ be Lipschitz.
    Then,
    \begin{equation*}
        \hb_{W} = \hb_{\nu_{W,L}} = \hb_{\nu_{W}}.
    \end{equation*}
\end{theorem}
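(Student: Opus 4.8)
The plan is to deduce both equalities directly from \Cref{claim:MPNN_equivalence}, which already identifies the vertex-level features $\hb_x^{\tup{L}}$ with the IDM-level features $\hb_{\mathsf{i}_{W,L}(x)}^{\tup{L}}$ for almost every $x \in [0,1]$, together with the standard change-of-variables formula for push-forward measures. So the theorem is essentially a bookkeeping corollary of the results proved just above it.

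First I would handle the equality $\hb_{\nu_{W,L}} = \hb_{\nu_W}$, which is purely formal. By definition $\hb_{\nu_W} = \hb_{(p_{\infty, L})_* \nu_W}$, so it suffices to check that $(p_{\infty, L})_* \nu_W = \nu_{W,L}$. Since $\mathsf{i}_W = \prod_{h \in \NN} \mathsf{i}_{W,h}$, the projection to the $L$th component satisfies $p_{\infty, L} \circ \mathsf{i}_W = \mathsf{i}_{W,L}$, and hence $(p_{\infty, L})_* \nu_W = (p_{\infty, L} \circ \mathsf{i}_W)_* \lambda = (\mathsf{i}_{W,L})_* \lambda = \nu_{W,L}$, using $\nu_W = (\mathsf{i}_W)_* \lambda$ and composition of push-forwards.

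For the remaining equality $\hb_W = \hb_{\nu_{W,L}}$, I would first note that $\hb_\blank^{\tup{L}} \colon \MM_L \to \RR^{d_L}$ is continuous on the compact space $\MM_L$, hence bounded and measurable, so $\hb_\blank^{\tup{L}} \circ \mathsf{i}_{W,L}$ is $\lambda$-integrable. Using \Cref{claim:MPNN_equivalence} to replace $\hb_x^{\tup{L}}$ by $\hb_{\mathsf{i}_{W,L}(x)}^{\tup{L}}$ under the integral (the two integrands differ only on a $\lambda$-null set) and then applying the push-forward change-of-variables formula with $\nu_{W,L} = (\mathsf{i}_{W,L})_* \lambda$, one gets $\int_{[0,1]} \hb_x^{\tup{L}} \, d\lambda(x) = \int_{\MM_L} \hb_\blank^{\tup{L}} \, d\nu_{W,L}$; applying the continuous readout $\psi$ to both sides and unfolding the definitions of $\hb_W$ and $\hb_{\nu_{W,L}}$ then finishes the argument. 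I do not expect a genuine obstacle: the only points requiring a moment's care are the justification that the almost-everywhere identity of \Cref{claim:MPNN_equivalence} may be integrated against $\lambda$ (immediate from boundedness and measurability of both integrands) and, if one wants to be fully explicit, the measurability of $\mathsf{i}_{W,L}$, which is part of the setup of the \wlone{} for graphons.
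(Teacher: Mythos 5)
Your proof is correct and takes essentially the same approach as the paper's: both equalities are deduced from \Cref{claim:MPNN_equivalence} together with the push-forward change-of-variables formula $\int \hb_\blank^{\tup{L}}\,d(\mathsf{i}_{W,L})_*\lambda = \int \hb_{\mathsf{i}_{W,L}(x)}^{\tup{L}}\,d\lambda(x)$. The paper handles $\hb_W = \hb_{\nu_{W,L}}$ first and dispatches $\hb_{\nu_{W,L}} = \hb_{\nu_W}$ with a one-line appeal to the definition of $\nu_W$, whereas you spell out the latter via $(p_{\infty,L})_*\nu_W = \nu_{W,L}$; this is just a difference in ordering and explicitness, not in substance.
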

\begin{proof}
    We only have to prove the first inequality;
    the second follows from the first and the definition of $\nu_{W}$.
    The first inequality follows from \Cref{claim:MPNN_equivalence}:
    \begin{align*}
        \hb_{\nu_{W,L}}
        = \psi\Big(\int_{\MM_L} \hb_{\blank}^{\tup{L}} \, d\nu_{W, L} \Big)
        &= \psi\Big(\int_{\MM_L} \hb_{\blank}^{\tup{L}} \, d(\mathsf{i}_{W,L})_*\lambda \Big)\\
        &= \psi\Big(\int_{[0,1]} \hb_{\mathsf{i}_{W,L}(x)}^{\tup{L}}  \, d\lambda(x) \Big)\\
        &= \psi\Big( \int_{[0,1]} \hb_{x}^{\tup{L}} \,d\lambda(x) \Big) \\
        &= \hb_{W}.
    \end{align*}
\end{proof}

\subsection{Metrics on iterated degree measures}
\label{sec:missingProofsMetrics}

We first prove \Cref{th:metrics} for the metrics based on the Prokhorov metric.
Then, we prove the inequalities between the Prokhorov metric $\PMetric$ and
the unbalanced Wasserstein metric $\WMetric$
from \Cref{sec:metrics}
and deduce \Cref{th:metrics} for the unbalanced Wasserstein metric from it.
After that, we prove \Cref{le:NNsLipschitz}.

\subsubsection{The Prokhorov metric}
To begin, let us restate \Cref{th:metrics} for better readability.
\metricsTheorem*
Let $(S, d)$ be a complete separable metric space with Borel $\sigma$-algebra $\borel$.
Recall that, for a subset $A \subseteq S$ and $\varepsilon \ge 0$,
one defines
$A^\varepsilon \coloneqq \{ y \in S \mid d(x,y) < \varepsilon \text{ for some } x \in A\}$.
Then, the Prokhorov metric $\PMetric$
on $\measures(S, \borel)$ is given by
\begin{equation*}
    \PMetric(\mu, \nu) \coloneqq \inf \{ \varepsilon > 0 \mid
        \mu(A) \le \nu(A^\varepsilon) + \varepsilon \text{ and }
        \nu(A) \le \mu(A^\varepsilon) + \varepsilon
        \text{ for every } A \in \borel \}.
\end{equation*}
It is not hard to see that we can replace
$A^\varepsilon$ by
$A^{\varepsilon]} \coloneqq \{ y \in S \mid d(x,y) \le \varepsilon \text{ for some } x \in A\}$
in the above definition without changing the value $\PMetric(\mu, \nu)$.
Moreover,
if $\lVert \mu \rVert \ge \lVert \nu \rVert$,
then the definition simplifies to
\begin{equation*}
    \PMetric(\mu, \nu) = \inf \{ \varepsilon > 0 \mid
        \mu(A) \le \nu(A^\varepsilon) + \varepsilon
        \text{ for every } A \in \borel \}
\end{equation*}
since,
if $\mu(A) \le \nu(A^\varepsilon) + \varepsilon$
for every $A \in \borel$, then
\(
    \nu(B)
    \le \mu(B^\varepsilon) + \varepsilon + \lVert \nu \rVert - \lVert \mu \rVert
    \le \mu(B^\varepsilon) + \varepsilon
\)
for every $B \in \borel$ \citep[Lemma $4.30$]{elstrodt_mass_2011}.

As the name suggests, $\PMetric$ is a metric on $\finMeasures(S)$ \citep[Section $1.4$]{prokhorov_convergence_1956},
and Prokhorov also proved that
convergence in $\PMetric$ is equivalent to convergence in the weak topology.
\begin{theorem}[{\citep[Theorem $1.11$]{prokhorov_convergence_1956}}]
    \label{th:prokhorovMetric}
    Let $(S, d)$ be a complete separable metric space.
    Then, $(\finMeasures(S), \PMetric)$ is a complete separable metric space, and convergence in $\PMetric$ is equivalent to weak convergence of measures.
\end{theorem}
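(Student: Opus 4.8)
The plan is to establish the three assertions --- that $\PMetric$ is a metric, that it metrizes the weak topology, and that $(\finMeasures(S), \PMetric)$ is complete and separable --- in an order in which each feeds the next, with the implication \emph{weak convergence $\Rightarrow$ $\PMetric$-convergence} carrying essentially all of the analytic content. First I would check the metric axioms: symmetry is built into the definition and finiteness holds because $\varepsilon = \max(\lVert\mu\rVert, \lVert\nu\rVert)$ is always admissible; the triangle inequality rests on the inclusion $(A^a)^b \subseteq A^{a+b}$, which gives $\mu(A) \le \nu(A^a) + a \le \rho((A^a)^b) + a + b \le \rho(A^{a+b}) + (a+b)$ whenever $\PMetric(\mu,\nu) < a$ and $\PMetric(\nu,\rho) < b$; and for $\PMetric(\mu,\nu) = 0 \Rightarrow \mu = \nu$ I would test against closed sets $F$, where $\mu(F) \le \nu(F^\varepsilon) + \varepsilon$ for all $\varepsilon$ together with $F^\varepsilon \downarrow F$ and continuity from above (finiteness of $\nu$) yields $\mu(F) \le \nu(F)$, hence $\mu = \nu$ by symmetry and a $\pi$-system argument (taking also $F = S$).

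For the metrization statement, the direction $\PMetric(\mu_n,\mu) \to 0 \Rightarrow \mu_n \to \mu$ weakly is routine: for closed $F$ one gets $\limsup_n \mu_n(F) \le \mu(F^\varepsilon) + \varepsilon \to \mu(F)$ and, taking $A = S$, $\mu_n(S) \to \mu(S)$, so the portmanteau theorem for finite measures applies. The converse is the crux and is the only place where completeness and separability of $S$ are used, via Ulam tightness. Given $\mu_n \to \mu$ weakly and $\varepsilon > 0$, I would choose a compact \emph{continuity} set $K$ with $\mu(S \setminus K) < \varepsilon$ (enlarging a tight compact set to a boundary-null neighbourhood and taking closures, using total boundedness to keep it compact), cover $K$ by finitely many open balls of radius $< \varepsilon$ with $\mu$-null boundaries, and pass to the finite algebra $\mathcal{A}$ they generate; weak convergence makes $\mu_n(E) \to \mu(E)$ for every $E$ in this finite collection of continuity sets, and since an arbitrary Borel $A$ admits $A \cap K \subseteq A_0 \subseteq A^{2\varepsilon}$ with $A_0 \in \mathcal{A}$, a short computation using $\mu_n(S \setminus K) \to \mu(S \setminus K) < \varepsilon$ gives both $\mu_n(A) \le \mu(A^{2\varepsilon}) + 2\varepsilon$ and $\mu(A) \le \mu_n(A^{2\varepsilon}) + 2\varepsilon$ for all large $n$, uniformly in $A$, whence $\PMetric(\mu_n,\mu) \le 2\varepsilon$ eventually.

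For separability I would fix a countable dense $D \subseteq S$ and show that finitely supported measures with rational weights on points of $D$ are dense: using tightness, partition a compact set into finitely many Borel pieces of diameter $< \varepsilon$, move the mass of each piece to a nearby point of $D$ with a rationalized weight, and verify directly from the definition that this perturbs $\mu$ by $O(\varepsilon)$ in $\PMetric$. For completeness I would take a $\PMetric$-Cauchy sequence $(\mu_n)$ and show it is tight: for each $k$ choose $N_k$ with $\PMetric(\mu_n,\mu_m) < 2^{-k}$ beyond $N_k$ and a compact $K_k$ with $\mu_i(S \setminus K_k) < 2^{-k}$ for $i \le N_k$; then $(S \setminus K_k^{2^{-k}})^{2^{-k}} \subseteq S \setminus K_k$ gives $\sup_n \mu_n\bigl(S \setminus \overline{K_k^{2^{-k}}}\bigr) < 2^{-k+1}$ with $\overline{K_k^{2^{-k}}}$ compact by total boundedness, and intersecting the tail of these sets yields, for each $\delta > 0$, a single compact set capturing all $\mu_n$ up to $\delta$ (total masses being uniformly bounded by Cauchyness). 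Prokhorov's relative compactness theorem then gives a weakly convergent subsequence, which by the implication already proved converges in $\PMetric$, and a Cauchy sequence with a convergent subsequence converges. I expect the passage from weak convergence to $\PMetric$-convergence to be the genuine obstacle, since it is the only step where tightness, boundary-null coverings and the outer set approximation are unavoidable; everything else is either formal manipulation of the definition or a black-box appeal to Prokhorov's compactness theorem.
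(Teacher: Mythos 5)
The paper does not prove this statement at all: it is quoted verbatim from Prokhorov's 1956 paper (Theorem~1.11) as a black box, with no proof anywhere in the text. So there is no ``paper proof'' to compare against, and your attempt has to be judged on its own.

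What you have written is the standard classical proof (the one in, e.g., Billingsley or Dudley), and it is essentially correct. The metric axioms are handled by formal manipulation of the definition, the identity $(A^{a})^{b}\subseteq A^{a+b}$ gives the triangle inequality, and the $\pi$-system argument on closed sets together with $F^{\varepsilon}\downarrow F$ gives identity of indiscernibles. The implication ``$\PMetric$-convergence $\Rightarrow$ weak convergence'' via closed-set upper bounds plus $\mu_n(S)\to\mu(S)$ is routine, and you correctly identify the hard direction as the converse, where you invoke Ulam tightness, replace a tight compact set by a compact $\mu$-continuity set, cover by $\mu$-continuity balls, and wedge an algebra element $A_0$ with $A\cap K\subseteq A_0\subseteq A^{2\varepsilon}$ so that the estimate becomes uniform in $A$ over a finite algebra; that reduction is the genuine content, and you carry it out correctly. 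The separability argument using rational-coefficient finite discrete measures on a countable dense set is standard and works. For completeness, your tightness-propagation step using $(S\setminus K_k^{2^{-k}})^{2^{-k}}\subseteq S\setminus K_k$ is right, as is the observation that $\overline{K_k^{2^{-k}}}$ remains compact by total boundedness in the complete space $S$, and a Cauchy sequence is uniformly bounded in total mass (take $A=S$ in the defining inequality), so Prokhorov's relative compactness theorem for finite measures applies and yields the limit. One small point worth noting explicitly: in the paper's notation $\finMeasures(S)$ means \emph{all} finite measures, not just those of total mass at most one, and your argument handles this generality correctly since you never assume $\lVert\mu_n\rVert\le 1$; the uniform bound is deduced, not assumed.
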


The well-definedness of $\proDistOf{h}$ and $\proDeltaDistOf{h}$
follows from an inductive application of this theorem.
\begin{lemma}
    \label{le:prokhorovWellDefined}
    Let $0 \le h \le \infty$.
    The metric $\proDistOf{h}$ is well-defined and
    metrizes the topology of $\MM_h$.
    The metric $\proDeltaDistOf{h}$ is well-defined and
    metrizes the topology of $\pMeasures(\MM_h)$.
\end{lemma}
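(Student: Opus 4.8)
The plan is to prove both assertions simultaneously by induction on $h$ for finite $h$, to handle $h = \infty$ afterwards by a standard weighted-supremum argument, and in every case to deduce the statement about $\proDeltaDistOf{h}$ from the one about $\proDistOf{h}$ by a single application of Prokhorov's theorem (\Cref{th:prokhorovMetric}). Throughout I use that each $\MM_h$ and $\MM_\infty$ is compact metrizable, as recorded in \Cref{sec:iteratedDegreeMeasures}, and that a compact metric space is automatically complete and separable.

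\textbf{Finite $h$.} The base case $h = 0$ is immediate, since $\MM_0$ is a single point, so the trivial metric metrizes its topology, and likewise $\proDeltaDistOf{0}$ on the one-point space $\pMeasures(\MM_0)$. For the inductive step, assume $\proDistOf{h}$ is well-defined and metrizes the topology of $\MM_h$; then $(\MM_h, \proDistOf{h})$ is a complete separable metric space, so \Cref{th:prokhorovMetric} applies to it. It yields that $\PMetric$ is a genuine metric on $\finMeasures(\MM_h)$ whose convergence is weak convergence; restricting to the subspace $\measures(\MM_h) = \MM_{h+1}$, whose weak topology is the subspace topology, we conclude that $\proDistOf{h+1} = \PMetric$ is well-defined and metrizes the topology of $\MM_{h+1}$, which by definition is this weak topology. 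The same invocation of \Cref{th:prokhorovMetric} for $(\MM_h, \proDistOf{h})$, now restricted to the subspace $\pMeasures(\MM_h) \subseteq \finMeasures(\MM_h)$, shows $\proDeltaDistOf{h}$ is well-defined and metrizes the topology of $\pMeasures(\MM_h)$. This closes the induction.

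\textbf{The case $h = \infty$.} First note that $\PMetric(\mu,\nu) \le 1$ for all $\mu,\nu \in \measures(S)$: taking $\varepsilon = 1$ in the definition gives $\mu(A) \le 1 \le \nu(A^{1}) + 1$ for every $A$, and symmetrically. Hence $\proDistOf{h} \le 1$ for every $h \ge 1$, so $\proDistOf{\infty}(\alpha,\beta) = \sup_{h \ge 1} \tfrac1h \proDistOf{h}(\alpha_h,\beta_h)$ is finite and is readily checked to be a metric. It remains to see it metrizes the topology of $\MM_\infty$, which is the subspace topology inherited from $\prod_{h} \MM_h$; indeed $\MM_\infty$ is a closed subset of this product, since each constraint $p_{h+1,h}(\alpha_{h+1}) = \alpha_h$ is a closed condition (the $p_{h+1,h}$ are continuous and the $\MM_h$ Hausdorff). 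Because the weights $1/h$ tend to $0$ and the $\proDistOf{h}$ are uniformly bounded, a routine argument shows that $\proDistOf{\infty}(\alpha^{(n)},\alpha) \to 0$ is equivalent to $\proDistOf{h}(\alpha^{(n)}_h, \alpha_h) \to 0$ for every fixed $h$: one direction is clear since each weight is positive, and for the other, given $\varepsilon > 0$ one picks $H$ with $1/H < \varepsilon$, handles the tail $h > H$ by $\tfrac1h \le \tfrac1H < \varepsilon$, and the finitely many coordinates $h \le H$ by coordinatewise convergence. Thus $\proDistOf{\infty}$ induces the product topology, not the finer uniform topology, and restricting to the closed subspace $\MM_\infty$ preserves this. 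Finally, $(\MM_\infty, \proDistOf{\infty})$ is now a complete separable metric space metrizing the topology of $\MM_\infty$, so a last application of \Cref{th:prokhorovMetric}, restricted to $\pMeasures(\MM_\infty)$, yields that $\proDeltaDistOf{\infty}$ is well-defined and metrizes the topology of $\pMeasures(\MM_\infty)$.

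\textbf{Main obstacle.} Nothing here is deep; the only place that needs genuine care is the $h = \infty$ case, where one must verify the uniform boundedness of the Prokhorov metrics (so that the weighted supremum even makes sense) and must carefully distinguish the product topology from the uniform topology on the inverse limit --- exactly the subtlety flagged by the remark following the definition of $\proDistOf{\infty}$ in \Cref{sec:metrics}. Everything else is bookkeeping: repeated applications of Prokhorov's theorem, together with the observation that the weak topology on probability measures is the subspace topology of the weak topology on finite measures, so that passing from $\finMeasures$ to $\measures$ or $\pMeasures$ costs nothing.
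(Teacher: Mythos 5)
Your proof is correct and takes essentially the same route as the paper's: induction on $h$ via Prokhorov's theorem (\Cref{th:prokhorovMetric}), followed by the observation that $\proDistOf{\infty}$ is a variant of the product metric and hence metrizes the inverse-limit topology, and deducing $\proDeltaDistOf{h}$ from $\proDistOf{h}$ by one more application of the same theorem. The only difference is that you flesh out the $h=\infty$ step (the uniform bound $\PMetric \le 1$ and the routine coordinatewise-convergence argument) where the paper simply asserts that a weighted product metric metrizes the product topology; this is a welcome bit of additional care but not a change of approach.
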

\begin{proof}
    First, we consider $\proDistOf{h}$.
    Let $h \in \NN$.
    To see why we even have to prove that $\proDistOf{h}$
    is well-defined, note that the measures in
    $\MM_{h+1} = \oneMeasures(\MM_h) = \oneMeasures(\MM_h, \borel(\MM_h))$ by definition
    are functions $\mu \colon \borel(\MM_h) \to \RR$,
    where $\borel(\MM_h)$ depends on the topology of $\MM_h$,
    which is the weak topology in our case.
    Hence, it is crucial that $\proDistOf{h}$ metrizes
    the weak topology on $\MM_h$;
    otherwise the sets $\oneMeasures(\MM_h, \borel(\MM_h))$
    and $\oneMeasures(\MM_h, \proDistOf{h})$ might not even be the same.

    For the induction basis $h = 0$, this claim trivially holds.
    Then, the induction hypothesis yields that $\proDistOf{h}$
    is well-defined and metrizes the weak topology on $\MM_h$.
    Then, $\proDistOf{h+1}$ is well-defined as the Borel $\sigma$-algebra
    of $(\MM_h, \proDistOf{h})$ equals $\borel(\MM_h)$ and, by
    \Cref{th:prokhorovMetric},
    convergence in $\proDistOf{h+1}$ is equivalent to
    weak convergence on $\measures(\MM_{h}, \proDistOf{h})$,
    which is just weak convergence on $\measures(\MM_h, \borel(\MM_h)) = \MM_{h+1}$.
    Hence, the topology induced by $\proDistOf{h+1}$
    is equal to the weak topology on $\MM_{h+1}$
    as both spaces are metrizable.
    For $\proDistOf{\infty}$, the claim
    directly follows since $\proDistOf{\infty}$ is just a variant
    of the product metric, which metrizes the product topology.

    The claim for $\proDeltaDistOf{h}$ then follows from the just proven
    claim for $\proDistOf{h}$ by the same reasoning as in the inductive step.
\end{proof}

The following lemma states that the distances $\proDistOf{h}$ behave as expected,
i.e., the distance of IDMs can only increase when $h$ increases.
\begin{lemma}
    \label{le:projectionDistance}
    Let $0 \le h \le h' < \infty$.
    Then,
        $\proDistOf{h}(p_{h', h}(\alpha), p_{h', h}(\beta)) \le \proDistOf{h'}(\alpha, \beta)$
    for all
    $\alpha, \beta \in \MM_h$.
\end{lemma}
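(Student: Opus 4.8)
The plan is to reduce to the single step $h' = h+1$, induct on $h$, and exploit the fact that the Prokhorov metric does not increase under pushforward along a non-expansive map.

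First I would record the following elementary fact. Let $(S,d)$ and $(S',d')$ be complete separable metric spaces and let $f \colon S \to S'$ be \emph{non-expansive}, i.e., $d'(f(x),f(y)) \le d(x,y)$ for all $x,y \in S$; in particular $f$ is continuous, hence Borel measurable. Then $\PMetric(f_*\mu, f_*\nu) \le \PMetric(\mu,\nu)$ for all $\mu,\nu \in \measures(S)$. To see this, fix $\varepsilon > \PMetric(\mu,\nu)$ and let $B \in \borel(S')$. Put $A \coloneqq f^{-1}(B) \in \borel(S)$. If $y \in A^\varepsilon$, then $d(x,y) < \varepsilon$ for some $x \in A$, whence $f(x) \in B$ and $d'(f(x),f(y)) \le d(x,y) < \varepsilon$, so $f(y) \in B^\varepsilon$; thus $A^\varepsilon \subseteq f^{-1}(B^\varepsilon)$. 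Therefore
\[
    f_*\mu(B) = \mu(A) \le \nu(A^\varepsilon) + \varepsilon \le \nu(f^{-1}(B^\varepsilon)) + \varepsilon = f_*\nu(B^\varepsilon) + \varepsilon,
\]
and, by symmetry, $f_*\nu(B) \le f_*\mu(B^\varepsilon) + \varepsilon$. Hence $\PMetric(f_*\mu, f_*\nu) \le \varepsilon$, and letting $\varepsilon \downarrow \PMetric(\mu,\nu)$ gives the claim.

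Next I would prove by induction on $h \in \NN$ that $p_{h+1,h} \colon \MM_{h+1} \to \MM_h$ is non-expansive with respect to $\proDistOf{h+1}$ and $\proDistOf{h}$. For $h = 0$ this is trivial, since $\proDistOf{0}$ is the trivial metric on the one-point space $\MM_0$. For the step, assume $p_{h,h-1} \colon \MM_h \to \MM_{h-1}$ is non-expansive w.r.t.\ $\proDistOf{h}$ and $\proDistOf{h-1}$. By \Cref{le:prokhorovWellDefined}, the Borel $\sigma$-algebra of $(\MM_h, \proDistOf{h})$ is $\borel(\MM_h)$, so $\MM_{h+1} = \measures(\MM_h, \borel(\MM_h)) = \measures(\MM_h, \proDistOf{h})$ with $\proDistOf{h+1} = \PMetric$, and likewise $\MM_h = \measures(\MM_{h-1}, \proDistOf{h-1})$ with $\proDistOf{h} = \PMetric$. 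Since $p_{h+1,h} = (p_{h,h-1})_*$ by definition, applying the fact above with $f = p_{h,h-1}$ shows that $p_{h+1,h}$ is non-expansive w.r.t.\ $\proDistOf{h+1}$ and $\proDistOf{h}$, completing the induction.

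Finally, for arbitrary $0 \le h \le h' < \infty$, the map $p_{h',h} = p_{h+1,h} \circ p_{h+2,h+1} \circ \cdots \circ p_{h',h'-1}$ is a composition of non-expansive maps (the empty composition, $h = h'$, being the identity), hence non-expansive; that is, $\proDistOf{h}(p_{h',h}(\alpha), p_{h',h}(\beta)) \le \proDistOf{h'}(\alpha,\beta)$ for all $\alpha,\beta$, which is the assertion. I do not anticipate a serious obstacle; the only point requiring care is invoking \Cref{le:prokhorovWellDefined} in the inductive step so that the two descriptions of $\MM_h$ — as measures over the Borel $\sigma$-algebra of the weak topology and as measures over the metric $\proDistOf{h}$ — genuinely coincide, which is what lets the pushforward fact be iterated.
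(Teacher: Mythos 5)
Your proof is correct and follows essentially the same approach as the paper's: reduce to the single step $p_{h+1,h}$, induct on $h$, and drive the inductive step via the inclusion $(f^{-1}(A))^\varepsilon \subseteq f^{-1}(A^\varepsilon)$ for a non-expansive map $f$, which is exactly the containment the paper verifies inline. The only cosmetic difference is that you isolate the non-expansiveness of pushforwards under Prokhorov distance as a standalone lemma and handle both inequalities in the definition symmetrically, whereas the paper inlines the same calculation after reducing to one inequality via the assumption $\lVert\alpha\rVert \ge \lVert\beta\rVert$.
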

\begin{proof}
    It suffices to show that
    \begin{equation*}
        \proDistOf{h}(p_{h+1, h}(\alpha), p_{h+1, h}(\beta))
        \le \proDistOf{h+1}(\alpha, \beta),
    \end{equation*}
    for all $\alpha, \beta \in \MM_{h+1}$, where $h \ge 0$.
    We prove this by induction on $h$, where for $h = 0$, the statement trivially holds.
    For the inductive step,
    we assume without loss of generality that $\lVert \alpha \rVert \ge \lVert \beta \rVert$,
    which directly implies
    $\lVert p_{h+1, h}(\alpha) \rVert \ge \lVert p_{h+1, h}(\beta) \rVert$.
    We show that if $\varepsilon > 0$
    such that
    $\alpha(A) \le \beta(A^\varepsilon) + \varepsilon$
    for every $A \in \borel(\MM_{h+1})$, then also
    $p_{h+1,h}(\alpha)(A) \le p_{h+1,h}(\beta)(A^\varepsilon) + \varepsilon$.
    For such an $\varepsilon$, we have
    \begin{align*}
        p_{h+1,h}(\alpha)(A) = (p_{h,h-1})_* \alpha (A)
        &= \alpha(p_{h,h-1}^{-1}(A))\\
        &\le \beta((p_{h,h-1}^{-1}(A))^{\varepsilon}) + \varepsilon \\
        &\le \beta(p_{h,h-1}^{-1}(A^\varepsilon)) + \varepsilon
        = p_{h+1,h}(\beta)(A^\varepsilon) + \varepsilon,
    \end{align*}
    where $\beta((p_{h,h-1}^{-1}(A))^{\varepsilon}) \le \beta(p_{h,h-1}^{-1}(A^\varepsilon))$
    holds by the induction hypothesis:
    We show that
    $(p_{h,h-1}^{-1}(A))^{\varepsilon} \subseteq p_{h,h-1}^{-1}(A^\varepsilon)$.
    Let $x \in (p_{h,h-1}^{-1}(A))^{\varepsilon}$.
    Then, there is a $y \in (p_{h,h-1}^{-1}(A))$ such that $\proDistOf{h}(x, y) < \varepsilon$.
    By the inductive hypothesis, we have
    \begin{equation*}
        \proDistOf{h-1}(p_{h,h-1}(x), p_{h,h-1}(y))
        \le \proDistOf{h}(x, y) < \varepsilon.
    \end{equation*}
    Since $p_{h,h-1}(y) \in A$, we have
    $p_{h,h-1}(x) \in A^\varepsilon$, and thus,
    $x \in p_{h,h-1}^{-1}(A)$.
\end{proof}

\subsubsection{Computability of the Prokhorov metric} \label{app.prokhorov}

It remains to prove that $\proDistOf{h}$ and $\proDeltaDistOf{h}$
are polynomial-time computable.
A paper of
\citet{garel_calculation_2009}
is concerned with computing the Prokhorov
metric of (possibly non-discrete) probability distributions.
To this end, they first quantize the distributions
and then show that the Prokhorov metric of finitely-supported probability
distributions can be computed exactly by solving a series
of maximum-flow problems;
this latter part is based on results by
\citet{schay_nearest_1974}
and
\citet{garcia-palomares_linear_1977},
which essentially shows how the Prokhorov metric
can be formulated as a linear-optimization problem.
By generalizing this linear program to finite measures,
we can adapt the algorithm of
\citet{garel_calculation_2009}
to obtain an algorithm for computing
the Prokhorov distance of finite measures.

\begin{theorem}
    \label{th:computeProkhorovMetric}
    Let $\mu, \nu \in \finMeasures(S)$,
    where $(S,d)$ is a finite metric space with
    $S = \{x_1, \dots, x_n\}$.
    Then, the Prokhorov metric $\PMetric(\mu, \nu)$ can be computed
    in time polynomial in $n$
    and the number of bits needed to encode $d$, $\mu$, and $\nu$.
\end{theorem}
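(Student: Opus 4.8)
The plan is to follow the strategy of \citet{garel_calculation_2009}, building on \citet{schay_nearest_1974} and \citet{garcia-palomares_linear_1977}, which reformulates the Prokhorov distance as a linear-optimization problem solvable by maximum flows. Since $S$ is already finite, the quantization step of \citet{garel_calculation_2009} is unnecessary, and it suffices to generalize their analysis from probability measures to the finite measures in $\measures(S)$. Throughout, assume without loss of generality that $\lVert \mu \rVert \ge \lVert \nu \rVert$, so that, as recalled above, $\PMetric(\mu, \nu) = \inf\{\varepsilon > 0 \mid \mu(A) \le \nu(A^{\varepsilon]}) + \varepsilon \text{ for every } A \in \borel\}$, where $A^{\varepsilon]} = \{y \in S \mid d(x,y) \le \varepsilon \text{ for some } x \in A\}$.

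For a fixed rational $\varepsilon \ge 0$, I would decide the condition ``$\mu(A) \le \nu(A^{\varepsilon]}) + \varepsilon$ for all $A \in \borel$'' by a single maximum-flow computation. Build an $s$--$t$ network on $\{s, t\} \cup \{x_1, \dots, x_n\} \cup \{y_1, \dots, y_n\}$, using two disjoint copies of $S$: an arc $s \to x_i$ of capacity $\mu(\{x_i\})$, an arc $x_i \to y_j$ of capacity $+\infty$ whenever $d(x_i, y_j) \le \varepsilon$, an arc $y_j \to t$ of capacity $\nu(\{y_j\})$, and one ``slack'' arc $s \to t$ of capacity $\varepsilon$. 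A routine max-flow--min-cut argument then shows that the minimum cut equals $\lVert \mu \rVert + \varepsilon + \min_{B \subseteq S}\bigl(\nu(B^{\varepsilon]}) - \mu(B)\bigr)$: a finite cut is determined by the set $B$ of first-copy vertices it keeps on the source side, its second-copy part is forced to be $S \setminus (B^c)^{\varepsilon]}$, and the slack arc always crosses. Hence the maximum flow reaches $\lVert \mu \rVert$ if and only if $\mu(B) \le \nu(B^{\varepsilon]}) + \varepsilon$ for every $B \subseteq S$, i.e.\ if and only if $\PMetric(\mu, \nu) \le \varepsilon$; moreover the flow restricted to the $x_i \to y_j$ arcs is exactly the ``partial coupling'' $\gamma \in \measures(S \times S)$ supported on $\varepsilon$-close pairs with $(p_1)_*\gamma \le \mu$, $(p_2)_*\gamma \le \nu$ and $\lVert \gamma \rVert \ge \lVert \mu \rVert - \varepsilon$ that underlies the linear-programming description of \citet{schay_nearest_1974, garcia-palomares_linear_1977}.

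To extract the exact infimum, note that $g(\varepsilon) \coloneqq \min_{B \subseteq S}\bigl(\nu(B^{\varepsilon]}) + \varepsilon - \mu(B)\bigr)$ equals the maximum flow at $\varepsilon$ minus $\lVert \mu \rVert$, and that the finitely many functions $\varepsilon \mapsto \nu(B^{\varepsilon]})$ only change at values in $D \coloneqq \{d(x_i,x_j) \mid i,j \in [n]\} \cup \{0\}$; hence $g$ is non-decreasing, right-continuous, and affine with slope $1$ on each of the $O(n^2)$ intervals between consecutive elements of $D$. Thus $\{\varepsilon \ge 0 \mid g(\varepsilon) \ge 0\} = [\PMetric(\mu,\nu), \infty)$, and one locates the affine piece on which the sign change occurs by evaluating $g$ (one maximum flow each) at the $O(n^2)$ points of $D$, or in $O(\log n)$ evaluations by binary search, and then solves the resulting linear equation to output $\PMetric(\mu, \nu)$ exactly. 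All capacities are rationals encodable in the input size, and a maximum flow is computable in strongly polynomial time \citep{orlin_max_2013}, so the total running time is polynomial in $n$ and the number of bits needed to encode $d$, $\mu$, and $\nu$.

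I expect the main obstacle to be the max-flow--min-cut identification in the second step: one must check that finite cuts correspond exactly to subsets $B \subseteq S$, with $(B^c)^{\varepsilon]}$ appearing as the forced second-copy part, that the unbalanced masses $\lVert \mu \rVert \ge \lVert \nu \rVert$ together with the slack arc of capacity $\varepsilon$ reproduce precisely the asymmetric form of the Prokhorov condition valid in that regime, and that using the closed neighborhoods $A^{\varepsilon]}$ rather than the open $A^\varepsilon$ is what makes the infimum attained and gives $g$ its clean piecewise-affine shape. Once this combinatorial core is settled, the reduction to polynomially many maximum-flow computations and the bit-complexity bookkeeping are routine.
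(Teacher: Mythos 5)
Your proposal is correct and takes essentially the same route as the paper: the paper's \Cref{cl:Schay} is the LP-duality statement underlying your max-flow network (the paper also explicitly notes the max-flow reformulation and uses it for the running-time bound), your $g(\varepsilon)$ equals $\varepsilon - \rho(\varepsilon)$ in the paper's notation, and both arguments exploit the at most $n^2$ breakpoints in $D$ to locate the infimum. One small slip in your cut identification: for first-copy source set $B$, the second-copy part forced onto the source side is $B^{\varepsilon]}$, not $S \setminus (B^c)^{\varepsilon]}$; your displayed min-cut value $\lVert \mu \rVert + \varepsilon + \min_{B}(\nu(B^{\varepsilon]}) - \mu(B))$ is nonetheless correct, so the argument goes through.
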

\begin{proof}
    Without loss of generality, we assume that
    $\lVert \mu \rVert \ge \lVert \nu \rVert$.
    Then,
    \begin{align*}
        \PMetric(\mu, \nu)
        &= \inf \{ \varepsilon > 0 \mid
            \mu(A) \le \nu(A^\varepsilon) + \varepsilon
            \text{ for every } A \subseteq S\}\\
        &= \inf \{ \varepsilon > 0 \mid
            \mu(A) \le \nu(A^{\varepsilon]}) + \varepsilon
            \text{ for every } A \subseteq S\}\\
        &= \inf\{ \varepsilon \ge 0 \mid \varepsilon \ge \rho(\varepsilon) \},
    \end{align*}
    where
    \begin{equation*}
        \rho(\varepsilon) \coloneqq \inf \{\eta > 0 \mid
        \mu(A) \le \nu(A^{\varepsilon]}) + \eta
        \text{ for every $A \subseteq S$} \}
    \end{equation*}
    for $\varepsilon > 0$.
    The following claim generalizes an observation of
    \citet[Theorem $1$]{schay_nearest_1974} and \citet[Lemma]{garcia-palomares_linear_1977}
    to finite measures
    and shows that the value of $\rho(\varepsilon)$ can be expressed as a linear program.
    \begin{claim}
        \label{cl:Schay}
        Let $\varepsilon > 0$.
        Then,
        \begin{equation*}
            \rho(\varepsilon) = \lVert \mu \rVert - \max \sum_{i,j = 1}^{n} \mathbbm{1}[d(x_i, x_j) \le \varepsilon] \cdot x_{ij},
        \end{equation*}
        where the maximum is taken over variables $x_{ij}$ such that
        $\sum_{i = 1}^{n} x_{ij} = \nu(\{ x_j\})$ for every $j \in [n]$,
        $\sum_{j = 1}^{n} x_{ij} \le \mu(\{ x_i \})$ for every $i \in [n]$,
        and $x_{ij} \ge 0$ for all $i,j \in [n]$.
    \end{claim}
    \begin{subproof}
        For the sake of brevity,
        define $p_i \coloneqq \mu(\{x_i\})$ and $q_i \coloneqq \nu(\{x_i\})$ for $i \in [n]$
        and $d^\varepsilon_{ij} \coloneqq \mathbbm{1}[d(x_i, x_j) \le \varepsilon]$
        for all $i,j \in [n]$.
        We have
        \begin{align*}
            \rho(\varepsilon)
            = \max_{A \subseteq S} \bigl(\mu(A) - \nu(A^{\varepsilon]})\bigr)
            &= \max_{\substack{v_j \in \{0,1\},\\ w_i \in \{0,1\},\\v_j \ge d_{ij}^\varepsilon-1+w_i}} \sum_{i \in [n]} (p_i w_i - q_i v_i)\\
            &= \max_{\substack{0 \le v_j \le 1,\\ 0 \le w_i \le 1,\\v_j \ge d_{ij}^\varepsilon-1+w_i}} \sum_{i \in [n]} (p_i w_i - q_i v_i)
        \end{align*}
        where the first equality holds by definition of $\rho(\varepsilon)$,
        the second since one can view the sets $A$ and $A^\varepsilon$
        as vectors $\bar{v}$ and $\bar{w}$, respectively,
        and the third, since the optimum value of a linear program is attained
        on an extreme point of the convex polytope specifying the feasible solutions,
        which are $0$-$1$-vectors for this specific linear program.
        Moreover, since values $v_j > 1$ and $w_i < 0$ can always be set to $1$ and $0$, respectively,
        without decreasing the value of the target function, this further equals to
        \begin{equation*}
            \max_{\substack{v_j \ge 0,\\ w_i \le 1,\\v_j \ge d_{ij}^\varepsilon-1+w_i}} \sum_{i \in [n]} (p_i w_i - q_i v_i)
            = \lVert \mu \rVert - \min_{\substack{u_i \ge 0,\\v_j \ge 0,\\u_i + v_j \ge d_{ij}^\varepsilon}} \sum_{i \in [n]} (p_i u_i + q_i v_i),
        \end{equation*}
        where the second equality results from the substitution $u_i \coloneqq 1 - w_i$.
        Then, considering the dual to the linear minimization problem yields
        that this further equals to
        \begin{equation*}
            \rho(\varepsilon) = \lVert \mu \rVert - \max \sum_{i,j = 1}^{n} d^\varepsilon_{ij} \cdot x_{ij},
        \end{equation*}
        where the maximum is taken over variables $x_{ij}$ such that
        $\sum_{i = 1}^{n} x_{ij} \le q_j$ for every $j \in [n]$,
        $\sum_{j = 1}^{n} x_{ij} \le p_i$ for every $i \in [n]$,
        and $x_{ij} \ge 0$ for all $i,j \in [n]$.
        Since we can always increase the variables $x_{ij}$
        such that $\sum_{i = 1}^{n} x_{ij} = q_j$ for every $j \in [n]$
        holds without decreasing the target function, we obtain the claim.
    \end{subproof}

    With this claim, we can use a polynomial-time method
    for solving linear programs
    to compute $\rho(\varepsilon)$ in time polynomial in $n$ and
    the number of bits needed to encode $d$, $\mu$, and $\nu$.
    Alternatively, it is straightforward to express this linear program
    as a maximum-flow problem, which can also be solved in polynomial time.
    The mapping $\varepsilon \mapsto \rho(\varepsilon)$ is a non-increasing step function
    whose jumps can only occur at points in $D \coloneqq \{d(x_i, x_j) \mid i,j \in [n]\}$.
    Let $D' \coloneqq D \cup \{0\}$.
    Then, we have
    \begin{align*}
        \PMetric(\mu, \nu)
        &= \inf \{ \varepsilon \ge 0 \mid \rho(\varepsilon) \le \varepsilon \}\\
        &= \min \bigl( \{ \varepsilon \in D' \mid \rho(\varepsilon) \le \varepsilon\}
           \cup \{ \rho(\varepsilon) \mid \varepsilon \in D' \text{ with } \rho(\varepsilon) > \varepsilon\} \bigr),
    \end{align*}
    which can be computed in time polynomial in $n$ and
    the number of bits needed to encode $d$, $\mu$, and $\nu$ since $D$
    contains at most $n^2$ values.
\end{proof}

\Cref{th:computeProkhorovMetric} can then be used
to compute the distance $\proDeltaDistOf{h}$ of two given graphs.
Intuitively, we run the \wlone{} on these graphs in parallel
and additionally, after the $i$th refinement round,
compute a matrix $D_i$ containing the distances $\proDistOf{i}$
between any pair of colors. This matrix
can be computed from the new colors and $D_{i-1}$.
In the end, we use $D_h$ to compute $\proDeltaDistOf{h}$.
This is essentially the same approach that
\citet{ChenLMWW22} use for showing that their Weisfeiler--Leman distance
is polynomial-time computable.

\begin{theorem}
    \label{th:computabilityFixedN}
    Let $0 \le h < \infty$, and let $G$ and $H$ be graphs.
    Then, $\proDeltaDistOf{h}(G, H)$ can be computed in time polynomial
    in $h$ and the size of $G$ and $H$.
\end{theorem}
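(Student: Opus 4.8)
The plan is to run color refinement on $G$ and $H$ simultaneously while, round by round, also maintaining a matrix of pairwise $\proDistOf{i}$-distances between the colors that occur; each refinement step then reduces to one instance of the finite-metric-space Prokhorov computation of \Cref{th:computeProkhorovMetric}. Concretely, for $0 \le i \le h$ let $C_i$ be the set of colors assigned by the \wlone{} at level $i$ to the vertices of $G$ and $H$ combined (two vertices, possibly one in $G$ and one in $H$, sharing a color iff their level-$i$ IDMs coincide), so $|C_i| \le n \coloneqq |V(G)| + |V(H)|$ and the coloring refines with $i$. By \eqref{eq.iterated} specialised to the induced graphons $W_G$ and $W_H$, the level-$i$ IDM of a vertex $v \in V(G)$ is the finite measure $\mu_i(v)$ on $\MM_{i-1}$ supported on $C_{i-1}$ that gives each color $c \in C_{i-1}$ the mass $\lvert \{ u \in N(v) : u \text{ has level-}(i-1)\text{ color } c \} \rvert / \lvert V(G) \rvert$, and analogously for $v \in V(H)$ with $\lvert V(H)\rvert$; hence two vertices receive the same level-$i$ color exactly when these measures are equal, which is a routine computation from the level-$(i-1)$ coloring. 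Alongside, we maintain an $\lvert C_i \rvert \times \lvert C_i \rvert$ matrix $D_i$ with $D_i(c, c') = \proDistOf{i}(c, c')$, starting from the $1 \times 1$ zero matrix $D_0$.

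The crucial step is computing $D_i$ from $D_{i-1}$ and the level-$i$ coloring. Given colors $c, c' \in C_i$, pick representatives $v, v'$ having these colors; since $\mu_i$ is constant on color classes, $\proDistOf{i}(c, c') = \PMetric(\mu_i(v), \mu_i(v'))$ is independent of the choice, where $\PMetric$ denotes the Prokhorov metric on $\measures(\MM_{i-1}, \proDistOf{i-1})$ (recall $\MM_i = \measures(\MM_{i-1})$ and that $\proDistOf{i}$ is the corresponding Prokhorov metric, cf.\ \Cref{le:prokhorovWellDefined}). Both $\mu_i(v)$ and $\mu_i(v')$ are supported on the finite set $C_{i-1}$, and since in the definition of $\PMetric$ neither the masses $\mu(A)$ nor the quantities $\nu(A^\varepsilon)$ change when points of $\MM_{i-1}$ outside $C_{i-1}$ are discarded, this Prokhorov distance equals the one of the two measures regarded as elements of $\measures(C_{i-1})$ over the finite metric space $(C_{i-1}, D_{i-1})$. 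Thus \Cref{th:computeProkhorovMetric} computes each entry of $D_i$ in time polynomial in $\lvert C_{i-1}\rvert \le n$ and the bit-sizes of $\mu_i$ and $D_{i-1}$. After $h$ rounds, the DIDMs $\nu_{W_G,h}$ and $\nu_{W_H,h}$ from \eqref{eq.nu} are probability measures supported on $C_h$, with $\nu_{W_G,h}$ assigning mass $\lvert \{ v \in V(G) : v \text{ has level-}h\text{ color } c\}\rvert / \lvert V(G)\rvert$ to $c$ and likewise for $H$; a final call to \Cref{th:computeProkhorovMetric} on $(C_h, D_h)$ then returns $\proDeltaDistOf{h}(G, H) = \PMetric(\nu_{W_G,h}, \nu_{W_H,h})$.

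The point needing care — and what I would flag as the main obstacle — is verifying that the bit-complexity stays polynomial across the $h$ rounds rather than blowing up. All masses occurring in any $\mu_i(v)$ or in $\nu_{W_G,h}, \nu_{W_H,h}$ are integers over $\lvert V(G)\rvert$ or $\lvert V(H)\rvert$, hence of bit-size $\mathcal{O}(\log n)$. In the algorithm of \Cref{th:computeProkhorovMetric}, $\rho(\varepsilon)$ equals $\lVert \mu \rVert$ minus the optimum of a transportation-type linear program with $0$–$1$ cost coefficients whose constraint matrix is totally unimodular; consequently that optimum is a $\{0, \pm 1\}$-combination of the (rational, bit-size $\mathcal{O}(\log n)$) target masses, and the only remaining candidate values for $\PMetric(\mu, \nu)$ are entries of $D_{i-1}$. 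By induction on $i$, every entry of every $D_i$ is thus a rational of bit-size $\mathcal{O}(\log n)$, uniformly in $i$. Therefore each of the at most $\mathcal{O}(h \cdot n^2)$ Prokhorov computations runs in time polynomial in $n$, and together with the $\mathcal{O}(h)$ refinement rounds this yields a total running time polynomial in $h$ and the size of $G$ and $H$ (from which one can extract the $\mathcal{O}(h \cdot n^7)$ bound quoted after \Cref{th:metrics}). Everything apart from this bit-size bookkeeping, together with the observation that $D_i$ faithfully encodes $\proDistOf{i}$ on colors, is a direct reduction to \Cref{th:computeProkhorovMetric}.
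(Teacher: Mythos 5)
Your proposal follows the same route as the paper's own proof: run color refinement on $G$ and $H$ in parallel, carry along a matrix of level-$i$ color distances, reduce each update to the finite-measure Prokhorov computation of \Cref{th:computeProkhorovMetric}, and finish with one more such call on the level-$h$ colorings. The only cosmetic departures are that you maintain a full $\lvert C_i\rvert\times\lvert C_i\rvert$ matrix over all color pairs, whereas the paper notes that the $V(G)\times V(H)$ submatrix of cross-graph entries already suffices (that is the only block consumed by each subsequent Prokhorov call), and that you make explicit the bit-size invariant that all distances remain multiples of $1/(\lvert V(G)\rvert\cdot\lvert V(H)\rvert)$, which the paper leaves implicit for finite $h$ and only spells out when analysing $\proDeltaDistOf{\infty}$.
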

\begin{proof}
    We inductively
    compute $D_h \in \RR^{V(G) \times V(H)}$
    with $D_{h,uv} = \proDistOf{h}(\mathsf{i}_{G,h}(u), \mathsf{i}_{H,h}(v))$.
    In a practical setting, it makes sense to index $D_h$ by the colors/IDMs of the vertices
    instead in a practical setting to reduce the size of the matrix and avoid unnecessary computations
    of the same value. However, for the sake of simplicity, we stick to this definition of $D_h$ here.
    For $h = 0$, this is just the all-zero matrix.
    After having computed $D_n$,
    for all pairs of vertices $u \in V(G), v \in V(H)$,
    we use the matrix $D_{h}$ to compute the
    measures $\mathsf{i}_{G,h+1}(u)$ and $\mathsf{i}_{H,h+1}(v)$
    and then use \Cref{th:computeProkhorovMetric}
    to compute their distance w.r.t.\ $\proDistOf{h+1}$.
    We note that we do not need the distances between the IDMs of two vertices $u,v \in V(G)$
    or $u,v \in V(H)$ for this. Hence, the matrix $D_h \in \RR^{V(G) \times V(H)}$
    suffices to compute $D_{h+1} \in \RR^{V(G) \times V(H)}$.
    Furthermore, we note that we actually do not need to know the precise element of $\MM_h$
    assigned to a vertex
    to do so:
    we basically run color refinement in parallel and use its renamed
    colors after $h$ rounds as our space $S$.
    From the coloring after $h$ refinement rounds and the matrix $D_h$,
    another application of \Cref{th:computeProkhorovMetric}
    yields the distance $\proDeltaDistOf{h}(G,H) = \proDeltaDistOf{h}(\nu_{G,h}, \nu_{H,h})$.
\end{proof}

Let us give a rough analysis of the running time of 
the algorithm from \Cref{th:computabilityFixedN}.
Let $n$ be the maximum number of vertices in the two graphs $G$ and $H$.
Then, we have to compute the Prokhorov metric of two IDMs $h \cdot n^2$ times
and once for the final distance $\proDeltaDistOf{h}(G, H)$.
To compute a single such Prokhorov metric, we have to solve $n^2$ maximum-flow problems
on a graph with $n$ vertices and $n^2$ edges, where a single maximum-flow problem
takes time $\mathcal{O}(n^3)$ \citep{orlin_max_2013}.
Hence, the overall running time is $\mathcal{O}(h \cdot n^2 \cdot n^2 \cdot n^3) = \mathcal{O}(h \cdot n^7)$;
We will see that the distance $\WDeltaDistOf{h}$ can be computed in time $\mathcal{O}(h \cdot n^5 \cdot \log n)$, which is much better, but nevertheless not ideal for practical purposes.
This shows why we need a different way of approximating these distances
for practical purposes and leads to our experiments in~\Cref{sec:experimentalEvaluation}.

For two graphs $G$ and $H$ on $n$ and $m$ vertices, respectively,
the \wlone{} test converges after at most $\max\{n, m\}$
refinement rounds, i.e., the color partitions are stable and
cannot further be refined in subsequent rounds.
For $\proDistOf{h}$, however, it is conceivable that $\proDistOf{h}$
still changes even after having obtained stable color partitions.
More precisely, by \Cref{le:projectionDistance},
$\proDistOf{h}$ may still increase, i.e.,
we have a second convergence step where the distances $\proDistOf{h}$
converge after the color partitions have already stabilized.
However, for the Prokhorov distance, we can argue that this
takes, at most, polynomially many steps.

\begin{theorem}
    Let $G$ and $H$ be graphs.
    Then, $\proDeltaDistOf{\infty}(G, H)$ can be computed in polynomial time.
\end{theorem}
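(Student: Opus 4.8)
The plan is to combine the matrix-based algorithm of \Cref{th:computabilityFixedN} with a polynomial bound on the number of \wlone{} rounds after which the relevant IDM-distances stabilize. First I would record that, for a graph $G$, the measure $\nu_{G,\infty}$ is finitely supported: $\mathsf{i}_{W_G,h}$ is constant on each interval $I_u$ and takes equal values on $I_u$ and $I_{u'}$ exactly when $u$ and $u'$ receive the same color after $h$ rounds of \wlone{} (a trivial induction matching \eqref{eq.iterated} against the \wlone{} update), hence $\nu_{G,\infty}=\sum_c\tfrac{|c|}{|V(G)|}\,\delta_{\alpha_c}$, where $c$ ranges over the stable color classes of $G$ and $\alpha_c\in\MM_\infty$ is the common IDM of the vertices of $c$; write $\beta_{c'}$ for the atoms of $\nu_{H,\infty}$ likewise. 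Thus $\proDeltaDistOf{\infty}(G,H)$ is the Prokhorov distance of the two finitely supported measures $\nu_{G,\infty},\nu_{H,\infty}$ with respect to $\proDistOf{\infty}$ on $\MM_\infty$, and, by \Cref{th:computeProkhorovMetric} — which (see its proof via \Cref{cl:Schay}) uses only distances between $\mathrm{supp}(\mu)$ and $\mathrm{supp}(\nu)$ — it suffices to compute in polynomial time the distance matrix $\proDistOf{\infty}(\alpha_c,\beta_{c'})=\sup_{h\ge1}\tfrac1h\,\proDistOf{h}(\mathsf{i}_{G,h}(u_c),\mathsf{i}_{H,h}(v_{c'}))$ over representatives $u_c\in c$, $v_{c'}\in c'$, after which one further application of \Cref{th:computeProkhorovMetric} finishes the job.

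For this I would maintain the matrices $D_h\in\RR^{V(G)\times V(H)}$ of \Cref{th:computabilityFixedN}, with $D_{h,uv}=\proDistOf{h}(\mathsf{i}_{G,h}(u),\mathsf{i}_{H,h}(v))$, each $D_{h+1}$ computed from $D_h$ and the round-$h$ color structure by one application of \Cref{th:computeProkhorovMetric} per entry. Two facts drive the argument. First, by \Cref{le:projectionDistance}, every entry $D_{h,uv}$ is non-decreasing in $h$. Second, writing $N\coloneqq|V(G)|\cdot|V(H)|$, every entry of every $D_h$ lies in $\tfrac1N\mathbb{Z}\cap[0,1]$; this follows by induction on $h$, since in the formula of \Cref{th:computeProkhorovMetric} the candidate values for $\PMetric(\mathsf{i}_{G,h+1}(u),\mathsf{i}_{H,h+1}(v))$ are either entries of $D_h$ or $0$, hence in $\tfrac1N\mathbb{Z}$ by the inductive hypothesis, or numbers of the form $\lVert\mu\rVert$ minus the optimum of the transportation linear program of \Cref{cl:Schay}, where $\{\mu,\nu\}=\{\mathsf{i}_{G,h+1}(u),\mathsf{i}_{H,h+1}(v)\}$ — the right-hand sides of that program are atom masses of $\mathsf{i}_{G,h+1}(u)$ and $\mathsf{i}_{H,h+1}(v)$, i.e., multiples of $1/|V(G)|$ or $1/|V(H)|$ and hence of $1/N$, so this bipartite (totally unimodular) transportation program attains its optimum at a vertex with coordinates in $\tfrac1N\mathbb{Z}$, giving an optimal value, and therefore a Prokhorov distance, in $\tfrac1N\mathbb{Z}$ (and $\lVert\mu\rVert\in\tfrac1N\mathbb{Z}$ as well).

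These two facts give the polynomial stabilization bound. Once $h\ge h_0\coloneqq\max\{|V(G)|,|V(H)|\}$, the color partitions are stable, so $D_{h+1}=F(D_h)$ for a fixed map $F$ that is monotone nondecreasing (enlarging the underlying metric only shrinks the neighborhoods $A^\varepsilon$ and hence can only increase the Prokhorov distance). Therefore, if $D_{h+1}=D_h$ for some $h\ge h_0$, then $D_{h'}=D_h$ for all $h'\ge h$; and if $D_{h+1}\neq D_h$, then, since $D_{h+1}\ge D_h$ entrywise by the first fact, at least one of the $N$ entries strictly increases, which by the second fact can happen at most $N$ times per entry. Hence $(D_h)_h$ is eventually constant, equal to some $D_T$ with $T\le h_0+N^2=O\!\big((|V(G)|+|V(H)|)^4\big)$, and $\proDistOf{\infty}(\alpha_c,\beta_{c'})=\max_{1\le h\le T}\tfrac1h\,D_{h,u_c v_{c'}}$, since for $h>T$ the entry is fixed while $\tfrac1h$ decreases. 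Computing $D_0,\dots,D_T$ (polynomially many entries, each a polynomial-time Prokhorov computation by \Cref{th:computeProkhorovMetric}), then these finitely many maxima, and finally one more application of \Cref{th:computeProkhorovMetric} to $\nu_{G,\infty}$ and $\nu_{H,\infty}$, all runs in polynomial time.

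The main obstacle, and the only genuinely new point beyond \Cref{th:computabilityFixedN}, is the denominator control in the second fact above: monotonicity of $(D_h)_h$ alone yields convergence but not a polynomial rate, so one must verify that integrality of transportation polytopes survives the recursive composition of Prokhorov metrics. The rest is bookkeeping — identifying $\nu_{G,\infty}$ with the distribution of stable \wlone{} colors, and observing that both the inductive step and the final evaluation only ever query distances between $G$-side and $H$-side IDMs, exactly what the matrices $D_h$ record (maintaining the within-$G$ and within-$H$ blocks as well, if one prefers \Cref{th:computeProkhorovMetric} to be applied to a genuine finite metric space, is harmless and changes nothing in the analysis).
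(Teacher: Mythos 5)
Your proof takes essentially the same approach as the paper's: exploit that all IDM masses are integer multiples of $1/(|V(G)|\cdot|V(H)|)$, which forces the recursively computed Prokhorov distances to lie in $\tfrac{1}{|V(G)|\cdot|V(H)|}\mathbb{Z}\cap[0,1]$, and combine this with the monotonicity of \Cref{le:projectionDistance} to conclude that the distance matrices $D_h$ stabilize after polynomially many rounds. You supply two details that the paper's terse argument leaves implicit — the total-unimodularity justification for why the Prokhorov LP preserves the common denominator, and the fixed-point observation that once the color partitions are stable the recursion $D_{h+1}=F(D_h)$ cannot stall and then restart — but these are completions of the paper's argument rather than a different route.
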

\begin{proof}
    Let $n$ and $m$ be the number of vertices of $G$ and $H$,
    respectively.
    Then, the masses that an IDM assigns to a color class are all of the form
    $i / (n \cdot m)$ for a natural number $i$, which
    implies that the resulting Prokhorov distance between such IDMs
    also is of this form. This inductively yields that all entries
    of the distance matrices
    in the computation of $\proDeltaDistOf{h}(G, H)$
    for $h \in \NN$ are of this form and,
    with the monotonicity of \Cref{le:projectionDistance},
    we get that all $n \cdot m$ entries can increase at most $n \cdot m$ times, i.e.,
    the distance matrix stays constant after at most $(n \cdot m)^2$ steps.
    Thus, the claim follows.
\end{proof}

When combined with the previous analysis, this yields that
$\proDeltaDistOf{\infty}(G, H)$
can be computed in time
$\mathcal{O}(n^4 \cdot n^7) = \mathcal{O}(n^{11})$, 
where $n$ is the maximum number of vertices in the two input graphs.

\subsubsection{The unbalanced Wasserstein metric}

Let us now turn our attention to our unbalanced Wasserstein metric.
To this end, let $(S, d)$ be a complete separable metric space with Borel $\sigma$-algebra $\borel$.
We first note that the \new{Wasserstein distance} of two probability measures $\mu, \nu \in \pMeasures(S)$
is
\begin{equation*}
    \WMetric(\mu, \nu) \coloneqq \inf_{\gamma \in \CC(\mu, \nu)} \int_{S \times S} d(x, y) \, d \gamma(x, y),
\end{equation*}
where $\CC(\mu, \nu)$ is the set of all couplings of $\mu$ and $\nu$,
i.e., probability measures $\gamma \in \pMeasures(S \times S)$
with marginals $\mu$ and $\nu$ or, formally,
$(p_1)_* \gamma = \mu$ and $(p_2)_* \gamma = \nu$.
In \Cref{sec:metrics}, for
finite measures $\mu, \nu \in \oneMeasures(S)$, where
where we assume $\lVert \mu \rVert \ge \lVert \nu \rVert$ without loss of generality,
we defined the unbalanced Wasserstein distance
\begin{equation*}
    \WMetric(\mu, \nu) \coloneqq \lVert \mu \rVert - \lVert \nu \rVert
    + \inf_{\gamma \in \CM(\mu, \nu)} \int_{S \times S} d(x, y) \, d \gamma(x, y),
\end{equation*}
where $\CM(\mu, \nu)$ is the set of all measures $\gamma \in \oneMeasures(S \times S)$
such that $(p_1)_* \gamma \le \mu$ and $(p_2)_* \gamma = \nu$.
We first note that $W$ is well-defined as $\CM(\mu, \nu)$ is non-empty since
$\frac{1}{\lVert \mu \rVert} (\mu \times \nu) \in \CM(\mu, \nu)$.
On probability measures, this definition coincides
with the Wasserstein distance \citep[Section 11.8]{dudley_2002},
and on finite histograms, it coincides with
the \emph{Earth Mover's Distance}
$\widehat{EMD}_1$
introduced by \citet{pele_linear_2008, pele_fast_2009}.
To prove that $W$ indeed is a metric on $\oneMeasures(S)$,
we follow the proof of \citet{pele_linear_2008}
and relate our definition of $W$ to the classical definition on
probability measures as follows:
For a complete separable metric space $(S, d)$
where any two elements have a distance at most one
and a measure $\mu \in \oneMeasures(S, d)$
we define the space $(S^*, d^*)$ and the probability measure $\mu \in \pMeasures(S^*, d^*)$
by adding a new element $*$ of mass $1 - \lVert \mu \rVert$
that has distance one to all other elements.
Formally, we let
$S^* \coloneqq S \cup \{*\}$,
\begin{equation*}
    d^*(x,y) \coloneqq
    \begin{cases}
        d(x,y) & \text{if } x,y \in S,\\
        1 & \text{otherwise}
    \end{cases}
\end{equation*}
for all $x,y \in S$, and
\begin{equation*}
    \mu^*(A) \coloneqq
    \begin{cases}
        \mu(A \setminus \{ * \}) + (1 - \lVert \mu \rVert) &\text{if } * \in A,\\
        \mu(A) &\text{otherwise}
    \end{cases}
\end{equation*}
for every $A \in \borel(S^*, d^*)$.
For measures obtained this way, we have the following.
\begin{lemma}
    \label{le:wassersteinStarSpace}
    Let $(S, d)$ be a complete separable metric space
    where any two elements have a distance at most one
    and $\mu, \nu \in \oneMeasures(S, d)$.
    Then,
$\WMetric(\mu^*, \nu^*) = \WMetric(\mu, \nu)$.
\end{lemma}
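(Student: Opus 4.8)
The plan is to prove both inequalities $\WMetric(\mu^*,\nu^*) \le \WMetric(\mu,\nu)$ and $\WMetric(\mu,\nu) \le \WMetric(\mu^*,\nu^*)$ by translating transport plans back and forth between the two settings. Throughout I assume $\lVert \mu \rVert \ge \lVert \nu \rVert$ without loss of generality, so that $\nu^*$ carries at least as much mass at the new point $*$ as $\mu^*$ does, and I use that $(S^*, d^*)$ is again a complete separable metric space of diameter at most one, so that the classical Wasserstein theory applies to $\mu^*, \nu^*$ and $\WMetric(\mu^*,\nu^*)$ is the usual Wasserstein distance.

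For the direction $\WMetric(\mu^*,\nu^*) \le \WMetric(\mu,\nu)$, I would take any $\gamma \in \CM(\mu,\nu)$, write $\rho \coloneqq \mu - (p_1)_*\gamma \ge 0$ for the part of $\mu$ not covered by $\gamma$ (of total mass $\lVert \mu \rVert - \lVert \nu \rVert$), and build the coupling $\gamma^* \coloneqq \gamma + (\rho \otimes \delta_*) + (1 - \lVert \mu \rVert)\,\delta_{(*,*)}$ on $S^* \times S^*$, where $\rho \otimes \delta_*$ sits on $S \times \{*\}$ and $\delta_{(*,*)}$ is the unit point mass at $(*,*)$. A short check of marginals shows $\gamma^* \in \CC(\mu^*,\nu^*)$, and since $d^* = d$ on $S \times S$, $d^*(x,*) = 1$ for $x \in S$, and $d^*(*,*) = 0$, its cost equals $\int_{S \times S} d\, d\gamma + \lVert \rho \rVert = \int d\, d\gamma + \lVert \mu \rVert - \lVert \nu \rVert$. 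Taking the infimum over $\gamma$ yields the inequality.

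For the converse, I would take any coupling $\gamma^* \in \CC(\mu^*,\nu^*)$ and restrict it to $S \times S$. The restriction $\gamma^*|_{S\times S}$ already satisfies $(p_1)_*(\gamma^*|_{S\times S}) \le \mu$, but its second marginal may fall short of $\nu$ by $\sigma \coloneqq \nu - (p_2)_*(\gamma^*|_{S\times S}) \ge 0$, which has total mass $\gamma^*(\{*\}\times S)$. The fix is to add a correction term $\tau$ on $S \times S$ with $(p_2)_*\tau = \sigma$ and $(p_1)_*\tau$ dominated by the still-unused part $\mu - (p_1)_*(\gamma^*|_{S\times S})$ of $\mu$ — which has enough mass since $\lVert \mu \rVert \ge \lVert \nu \rVert$, e.g. a suitably scaled product measure — so that $\gamma \coloneqq \gamma^*|_{S\times S} + \tau \in \CM(\mu,\nu)$. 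Bounding $d \le 1$ on $\tau$ and evaluating $\int d^* d\gamma^*$ by splitting $S^* \times S^*$ into the four pieces $S\times S$, $\{*\}\times S$, $S\times\{*\}$, $\{*\}\times\{*\}$, one gets $\lVert\mu\rVert - \lVert\nu\rVert + \int d\, d\gamma \le \int d^* d\gamma^* - \lVert\sigma\rVert \le \int d^* d\gamma^*$; taking the infimum over $\gamma^*$ finishes the proof.

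The only real subtlety — and the step I expect to require the most care — is this converse direction: the naive restriction of a coupling to $S \times S$ is not a valid element of $\CM(\mu,\nu)$, so one must refill the missing second marginal without violating the constraint $(p_1)_*\gamma \le \mu$, and then verify that the additional transport cost is absorbed by the $\gamma^*(\{*\}\times S) + \gamma^*(S \times \{*\})$ terms that the classical objective already pays for moving mass to and from $*$. Everything else is routine bookkeeping with marginals of product and restricted measures on standard Borel spaces.
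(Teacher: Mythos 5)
Your first direction, $\WMetric(\mu^*,\nu^*) \le \WMetric(\mu,\nu)$, matches the paper's construction exactly: both of you take $\gamma\in\CM(\mu,\nu)$ and build $\gamma^*\in\CC(\mu^*,\nu^*)$ by placing $\gamma$ on $S\times S$, the deficit $\mu-(p_1)_*\gamma$ on $S\times\{*\}$, and mass $1-\lVert\mu\rVert$ at $(*,*)$, then read off the cost identity. The only remark is that you should double-check the marginal bookkeeping against the convention $\lVert\mu\rVert\ge\lVert\nu\rVert$: the leftover mass sits on $S\times\{*\}$ (excess of $\mu$ over $\nu$ going to the dummy point on the $\nu$ side), which is what both you and the paper have.

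Your converse direction is genuinely different from the paper's, and in fact more careful. The paper claims that the plain restriction $\gamma=\gamma^*|_{S\times S}$ of \emph{any} coupling $\gamma^*\in\CC(\mu^*,\nu^*)$ already lies in $\CM(\mu,\nu)$; to show $(p_2)_*\gamma=\nu$ it asserts the identity $\lVert\nu\rVert=\gamma(S\times A)+\gamma(S\times\bar{A})$ and derives a contradiction from strict inequality. But $\gamma(S\times A)+\gamma(S\times\bar{A})=\gamma^*(S\times S)$, which equals $\lVert\nu\rVert$ only when $\gamma^*(\{*\}\times S)=0$, and that need not hold for an arbitrary coupling: take $S=\{s_1,s_2\}$ at distance $1$, $\mu=\tfrac12\delta_{s_1}$, $\nu=\tfrac12\delta_{s_2}$, and the coupling that routes all mass through $*$ (half at $(s_1,*)$, half at $(*,s_2)$); its restriction to $S\times S$ is the zero measure, which is not in $\CM(\mu,\nu)$. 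Your fix---augmenting the restriction by a correction plan $\tau$ with $(p_2)_*\tau=\sigma\coloneqq\nu-(p_2)_*(\gamma^*|_{S\times S})$ and $(p_1)_*\tau\le\mu-(p_1)_*(\gamma^*|_{S\times S})$, which exists because $\lVert\mu\rVert\ge\lVert\nu\rVert$---is exactly what is needed. The added cost is at most $\lVert\sigma\rVert=\gamma^*(\{*\}\times S)$ since $d\le 1$, and the accounting identity $\gamma^*(S\times\{*\})=\lVert\mu\rVert-\lVert\nu\rVert+\gamma^*(\{*\}\times S)$ lets that be absorbed: one gets $\lVert\mu\rVert-\lVert\nu\rVert+\int d\,d\gamma\le\int d^*\,d\gamma^*-\lVert\sigma\rVert$, as you state. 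This makes the argument valid for \emph{every} coupling $\gamma^*$, not only the optimal ones, and is, in my view, what the paper's proof should have said; the paper's argument does produce the correct conclusion (one could alternatively argue only along a minimizing sequence, where optimality forces $\gamma^*(\{*\}\times S)=0$), but as written it has a gap that your correction-plan approach closes cleanly.
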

\begin{proof}
    Without loss of generality, we assume that $\lVert \mu \rVert \ge \lVert \nu \rVert$.
    To show that $\WMetric(\mu^*, \nu^*) \le \WMetric(\mu, \nu)$,
    we show that a $\gamma \in \CM(\mu, \nu)$ yields a $\gamma^* \in \CC(\mu^*, \nu^*)$
    such that
    \begin{equation}
        \int_{S^* \times S^*} d^*(x, y) \, d \gamma^*(x, y)
            = \lVert \mu \rVert - \lVert \nu \rVert + \int_{S \times S} d(x, y) \, d \gamma(x, y).
        \label{eq:wassersteinStarSpaceEquality}
    \end{equation}
    To this end, we define $\gamma^* \in \CC(\mu^*, \nu^*)$ by
    \begin{equation*}
        \gamma^*(C) \coloneqq
            \begin{aligned}[t]
                \gamma(C \cap (S \times S))
                {}+ {} &(\mu - (p_1)_* \gamma)(\{x \in S \mid (x,*) \in C\})\\
                {}+{} &(1 - \lVert \mu \rVert) \mathbbm{1}[(*,*) \in C]
            \end{aligned}
    \end{equation*}
    for every $C \in \borel(S \times S)$.
    It is easy to verify that $\gamma^*$ actually
    is a probability measure with marginals $\mu^*$ and $\nu^*$.
    Then,
    \begin{align*}
        \int_{S^* \times S^*} d^*(x, y) \, d \gamma^*(x, y)
        &= \begin{aligned}[t]
        \int_{S \times S} d^*(x, y) \, d \gamma^*(x, y)
            &+ \int_{\{*\} \times S} d^*(x, y) \, d \gamma^*(x, y)\\
            &+ \int_{S \times \{*\}} d^*(x, y) \, d \gamma^*(x, y)\\
            &+ \int_{\{*\} \times \{*\}} d^*(x, y) \, d \gamma^*(x, y)
        \end{aligned}\\
        &= \int_{S \times S} d(x, y) \, d \gamma(x, y)
            + 0 + \lVert \mu - (p_1)_* \gamma \rVert + 0\\
        &= \lVert \mu \rVert - \lVert \nu \rVert + \int_{S \times S} d(x, y) \, d \gamma(x, y).
    \end{align*}

    For the other direction $\WMetric(\mu^*, \nu^*) \ge \WMetric(\mu, \nu)$,
    we show that a $\gamma^* \in \CC(\mu^*, \nu^*)$ restricts to a $\gamma \in \CM(\mu, \nu)$
    such that (\ref{eq:wassersteinStarSpaceEquality}) holds.
    Let $\gamma$ denote the restriction of $\gamma^*$ to $\borel(S \times S)$.
    We prove that $\gamma \in \CM(\mu, \nu)$.
    First, 
    we have
    \begin{equation*}
        (p_1)_* \gamma(A)
        = \gamma(A \times S)
\le \gamma^*(A \times S^*)
        = (p_1)_* \gamma^*(A)
        = \mu^*(A)
        = \mu(A),
    \end{equation*}
    and, analogously, $(p_2)_* \gamma(A) \le \nu(A)$
    for every $A \in \borel(S)$.
    Now, assume that $(p_2)_* \gamma(A) < \nu(A)$ for some $A \in \borel(S)$.
    Then,
    \begin{equation*}
        \lVert \nu \rVert
        = \gamma(S \times A) + \gamma(S \times \bar{A})
        < \nu(A) + \nu(\bar{A})
        = \lVert \nu \rVert,
    \end{equation*}
    which is a contradiction.
    Next, observe that
    \begin{equation*}
        \gamma^*(\{*\} \times S) + \gamma^*(\{*\} \times \{*\})
        = 1 - \gamma^*(S \times S^*)
        = 1 - \mu^*(S)
        = 1 - \lVert \mu \rVert,
    \end{equation*}
    and also $\gamma^*(S \times \{*\}) + \gamma^*(\{*\} \times \{*\}) = 1 - \lVert \nu \rVert$.
    By the previous paragraph, we have $\gamma^*(S \times S) = \lVert \nu \rVert$,
    which yields that
    \begin{equation*}
        \gamma^*(\{*\} \times S) + \gamma^*(S \times \{*\}) + \gamma^*(\{*\} \times \{*\}) = 1 - \lVert \nu \rVert.
    \end{equation*}
    We get $\gamma^*(\{*\} \times S) = 0$ and, hence, $\gamma^*(\{*\} \times \{*\}) = 1 - \lVert \mu \rVert$
    and, thus, $\gamma^*(S \times \{*\}) = \lVert \mu \rVert - \lVert \nu \rVert$.
    From this we obtain
    \begin{align*}
        \int_{S^* \times S^*} d^*(x, y) \, d \gamma^*(x, y)
&= \int_{S \times S} d(x, y) \, d \gamma(x, y)
            + \gamma^*(\{*\} \times S) + \gamma^*(S \times \{*\})\\
        &= \lVert \mu \rVert - \lVert \nu \rVert + \int_{S \times S} d(x, y) \, d \gamma(x, y).
    \end{align*}
\end{proof}

\begin{corollary}
    Let $(S, d)$ be a separable metric space.
    Then, $\WMetric$ is a metric on $\oneMeasures(S, d)$.
\end{corollary}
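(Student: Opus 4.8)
The plan is to check the three metric axioms for $\WMetric$ on $\oneMeasures(S,d)$, using \Cref{le:wassersteinStarSpace} together with the classical fact that the Wasserstein distance is a metric on the space of probability measures over a separable metric space \citep[Section~11.8]{dudley_2002}. As in \Cref{le:wassersteinStarSpace}, I work with $d$ bounded by $1$ (the only case relevant in this paper), so that the space $(S^*,d^*)$ formed by adjoining a point $*$ at distance $1$ from every element of $S$ is again a separable metric space. The key structural observation is that $S^*$ and $d^*$ depend only on $(S,d)$ and not on the measure, so that for any $\mu,\nu,\lambda\in\oneMeasures(S)$ the probability measures $\mu^*,\nu^*,\lambda^*$ all live in the \emph{same} space $\pMeasures(S^*,d^*)$, on which the ordinary Wasserstein distance already is a metric.

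\emph{Non-negativity and symmetry.} First I would recall from the discussion preceding \Cref{le:wassersteinStarSpace} that $\CM(\mu,\nu)$ is non-empty, so $\WMetric$ is well-defined, and that (for $\lVert\mu\rVert\ge\lVert\nu\rVert$) one has $\WMetric(\mu,\nu)\ge\lVert\mu\rVert-\lVert\nu\rVert\ge0$. Symmetry holds by the convention of defining $\WMetric(\mu,\nu)\coloneqq\WMetric(\nu,\mu)$ when $\lVert\mu\rVert<\lVert\nu\rVert$; the one point to verify is consistency when $\lVert\mu\rVert=\lVert\nu\rVert$, which is immediate because then any $\gamma\in\CM(\mu,\nu)$ has total mass $\lVert\nu\rVert=\lVert\mu\rVert$, forcing $(p_1)_*\gamma=\mu$, so $\CM(\mu,\nu)$ coincides with the set of couplings of $\mu$ and $\nu$ and the formula reduces to the symmetric classical Wasserstein distance.

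\emph{Identity of indiscernibles.} The diagonal coupling, i.e.\ the pushforward of $\mu$ along $x\mapsto(x,x)$, witnesses $\WMetric(\mu,\mu)=0$. Conversely, suppose $\WMetric(\mu,\nu)=0$ with $\lVert\mu\rVert\ge\lVert\nu\rVert$; then the inequality $\WMetric(\mu,\nu)\ge\lVert\mu\rVert-\lVert\nu\rVert$ forces $\lVert\mu\rVert=\lVert\nu\rVert$, and by \Cref{le:wassersteinStarSpace} the classical Wasserstein distance of $\mu^*$ and $\nu^*$ on $(S^*,d^*)$ is $0$; since the latter is a metric on $\pMeasures(S^*,d^*)$, this gives $\mu^*=\nu^*$ and hence $\mu=\nu$.

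\emph{Triangle inequality.} This is the heart of the statement, and here \Cref{le:wassersteinStarSpace} does all the work: for $\mu,\nu,\lambda\in\oneMeasures(S)$ we obtain, using that $\mu^*,\nu^*,\lambda^*\in\pMeasures(S^*,d^*)$ and that the Wasserstein distance obeys the triangle inequality on that common space,
\[
    \WMetric(\mu,\lambda)=\WMetric(\mu^*,\lambda^*)\le\WMetric(\mu^*,\nu^*)+\WMetric(\nu^*,\lambda^*)=\WMetric(\mu,\nu)+\WMetric(\nu,\lambda).
\]
The only genuine obstacle is \Cref{le:wassersteinStarSpace} itself, already proved in the excerpt: because the definition of $\WMetric$ on finite measures is asymmetric and carries the mass defect $\lVert\mu\rVert-\lVert\nu\rVert$ as a separate term, one must carefully match sub-couplings $\gamma\in\CM(\mu,\nu)$ on $S\times S$ with genuine couplings $\gamma^*\in\CC(\mu^*,\nu^*)$ on $S^*\times S^*$ so that the transport costs differ exactly by $\lVert\mu\rVert-\lVert\nu\rVert$; granting that identification, the three axioms follow exactly as above.
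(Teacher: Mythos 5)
Your proposal is correct and takes essentially the same route as the paper, which proves the corollary in one line by invoking \Cref{le:wassersteinStarSpace} together with the fact that the classical Wasserstein distance is a metric on $\pMeasures(S^*,d^*)$. You simply unfold that argument axiom by axiom, and you correctly isolate both key points the paper leaves implicit: that the auxiliary space $(S^*,d^*)$ is the same for all three measures so the triangle inequality transfers, and that the reduction presupposes $d$ is bounded by $1$ as in the hypothesis of \Cref{le:wassersteinStarSpace}.
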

\begin{proof}
    Follows immediately from
    \Cref{le:wassersteinStarSpace}
    and the fact that $\WMetric$ is a metric on $\pMeasures(S^*, d^*)$, cf. \cite[Lemma $11.8.3$]{dudley_2002}.
\end{proof}

To show that $\WMetric$ also metrizes the weak topology, we show that
it is upper-bounded by the Prokhorov metric and lower-bounded
by another metric that we define in the following.
For two $\mu, \nu \in \oneMeasures(S)$, let
\begin{align*}
    &\KMetric(\mu, \nu) \coloneqq \sup_{\substack{\lVert f \rVert_{L} \le 1,\\ \lVert f \rVert_\infty \le 1}} \Bigl\lvert \int f d\mu - \int f d\nu \Bigr\rvert&
    &\text{and}&
    &\BLMetric(\mu, \nu) \coloneqq \sup_{\substack{\lVert f \rVert_{BL} \le 1}} \Bigl\lvert \int f d\mu - \int f d\nu \Bigr\rvert&
\end{align*}
be the \textit{Kantorovich-Rubinshtein distance}
and the \textit{Bounded-Lipschitz distance} of $\mu$ and $\nu$, respectively,
where $\lVert f \rVert_{BL} \coloneqq \lVert f \rVert_{L} + \lVert f \rVert_\infty$
for $f \colon S \to \RR$.
We clearly have $\BLMetric(\mu, \nu) \le \KMetric(\mu, \nu) \le 2\BLMetric(\mu, \nu)$.
If $(S,d)$ is separable, then $\KMetric$ and $\beta$ metrize
the weak topology \cite[Theorem $8.3.2$]{bogachev_measure_2007}.

We lower-bound the unbalanced Wasserstein distance $\WMetric$ by the Kantorovich-Rubinshtein distance $\KMetric$
and upper-bound it by the Prokhorov distance $\PMetric$ (with a factor of two).
This directly implies that $\WMetric$ metrizes the weak
topology.
These inequalities for probability measures
are easily found in the literature, see~\citet{dudley_2002},
and we generalize them to finite measures
of total mass at most one.
There are two ways to go on about this.
First, one could use \Cref{le:wassersteinStarSpace}
and show that the analogous statement holds for the Prokhorov distance $\PMetric$;
then, these inequalities follow from the ones for probability measures.
Secondly, one can prove these inequalities directly, which is
what we do here as the proofs provide a better understanding of these
metrics, although they are not as straightforward as one might expect.
In particular, the proof of upper-bounding
the Wasserstein distance by the Prokhorov distance in \cite{dudley_2002}
is quite complicated, and we follow the simpler proofs outlined in~\citet{schay_nearest_1974}
and~\citet{garcia-palomares_linear_1977},
which use the duality of linear programming.
We remark that the inequalities between the Prokhorov distance $\PMetric$
and the unbalanced Wasserstein distance $\WMetric$ presented in \Cref{sec:metrics}
are a special case of the following lemma.

\begin{lemma}
    \label{le:KRProkhorov}
    Let $(S, d)$ be a complete separable metric space.
    Then, for all $\mu, \nu \in \oneMeasures(S)$,
    \begin{equation*}
        \BLMetric(\mu, \nu)
        \le \KMetric(\mu, \nu)
        \le \WMetric(\mu, \nu)
        \le 2 \PMetric(\mu, \nu)
        \le 4 \sqrt{\BLMetric(\mu, \nu)}.
    \end{equation*}
\end{lemma}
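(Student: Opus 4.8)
The plan is to prove the four inequalities from left to right, under the standing assumption of this subsection that $d\le 1$ (the chain genuinely needs this: on an unbounded space two point masses far apart can have tiny $\PMetric$ but large $\WMetric$). The leftmost inequality $\BLMetric(\mu,\nu)\le\KMetric(\mu,\nu)$ needs no work and was already recorded above, since $\lVert f\rVert_{BL}\le 1$ forces $\lVert f\rVert_L\le 1$ and $\lVert f\rVert_\infty\le 1$, so the admissible class for $\BLMetric$ sits inside that for $\KMetric$. The hard part will be $\WMetric\le 2\PMetric$, and it is the only step that uses the boundedness of $d$; the rightmost inequality is the classical Prokhorov--bounded-Lipschitz estimate, adapted to finite measures.

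For $\KMetric(\mu,\nu)\le\WMetric(\mu,\nu)$ I would work on the primal side. Assume $\lVert\mu\rVert\ge\lVert\nu\rVert$, fix $f$ with $\lVert f\rVert_L\le 1$ and $\lVert f\rVert_\infty\le 1$ and an arbitrary $\gamma\in\CM(\mu,\nu)$, and set $\mu_1:=(p_1)_*\gamma\le\mu$, so that $\lVert\mu_1\rVert=\lVert\gamma\rVert=\lVert\nu\rVert$. Then
\[
    \int f\,d\mu-\int f\,d\nu=\int f\,d(\mu-\mu_1)+\int\bigl(f(x)-f(y)\bigr)\,d\gamma(x,y),
\]
where the first term is at most $\lVert\mu-\mu_1\rVert=\lVert\mu\rVert-\lVert\nu\rVert$ in absolute value (using $\lVert f\rVert_\infty\le 1$) and the second is at most $\int d(x,y)\,d\gamma$ (using that $f$ is $1$-Lipschitz). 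Taking the infimum over $\gamma$ and then the supremum over $f$ gives the inequality; no bound on $d$ is needed here.

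For $\WMetric(\mu,\nu)\le 2\PMetric(\mu,\nu)$ I would first treat finitely supported $\mu,\nu$, where it falls out of the linear-programming description of the Prokhorov metric already proved in \Cref{cl:Schay}. Recall from the proof of \Cref{th:computeProkhorovMetric} that $\PMetric(\mu,\nu)=\inf\{\varepsilon\ge0\mid\varepsilon\ge\rho(\varepsilon)\}$, where by \Cref{cl:Schay} $\rho(\varepsilon)=\lVert\mu\rVert-\max\sum_{i,j}\mathbbm{1}[d(x_i,x_j)\le\varepsilon]\,x_{ij}$ over $\gamma=(x_{ij})\in\CM(\mu,\nu)$; since $\rho$ is non-increasing, any $\varepsilon>\PMetric(\mu,\nu)$ admits a $\gamma\in\CM(\mu,\nu)$ of total mass $\lVert\nu\rVert$ with $\sum_{d(x_i,x_j)>\varepsilon}x_{ij}\le\varepsilon-(\lVert\mu\rVert-\lVert\nu\rVert)$. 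Splitting $\int d\,d\gamma$ into the pairs at distance $\le\varepsilon$ and those at distance $>\varepsilon$ and bounding the latter distances by $1$ yields $\int d\,d\gamma\le\varepsilon\lVert\nu\rVert+\varepsilon-(\lVert\mu\rVert-\lVert\nu\rVert)\le 2\varepsilon-(\lVert\mu\rVert-\lVert\nu\rVert)$, hence $\WMetric(\mu,\nu)\le(\lVert\mu\rVert-\lVert\nu\rVert)+\int d\,d\gamma\le 2\varepsilon$; letting $\varepsilon\downarrow\PMetric(\mu,\nu)$ finishes the finitely supported case. The general case $\mu,\nu\in\oneMeasures(S)$ is where I expect the real difficulty: one must pass to the limit along finitely supported approximations, using that $\PMetric$ metrizes the weak topology (\Cref{th:prokhorovMetric}) together with lower semicontinuity of $\WMetric$, or---probably cleaner---route through the probability-measure space $(S^*,d^*)$ of \Cref{le:wassersteinStarSpace}, check $\PMetric(\mu^*,\nu^*)=\PMetric(\mu,\nu)$ from the definitions, and invoke Strassen's theorem for probability measures on the bounded space $(S^*,d^*)$ to get $\WMetric(\mu,\nu)=\WMetric(\mu^*,\nu^*)\le 2\PMetric(\mu^*,\nu^*)$. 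Either way, the crux is obtaining the (unbalanced) coupling underlying $\WMetric\le 2\PMetric$ for arbitrary, not just discrete, measures.

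Finally, for $2\PMetric(\mu,\nu)\le 4\sqrt{\BLMetric(\mu,\nu)}$, i.e.\ $\PMetric\le 2\sqrt{\BLMetric}$, I would run the standard bump-function argument, now for finite measures. Assuming $\BLMetric(\mu,\nu)>0$ (otherwise $\mu=\nu$), set $\delta:=2\sqrt{\BLMetric(\mu,\nu)}$; for a Borel set $A$ the function $f(x):=\max(0,1-d(x,A)/\delta)$ is $(1/\delta)$-Lipschitz, bounded by $1$, equal to $1$ on $A$ and supported in $A^\delta$, so $\lVert f\rVert_{BL}\le 1+1/\delta$ and $\mu(A)\le\int f\,d\mu\le\nu(A^\delta)+(1+1/\delta)\BLMetric(\mu,\nu)$. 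Since $\BLMetric(\mu,\nu)\le 2$ trivially (as $\lVert f\rVert_\infty\le1$ and $\lVert\mu\rVert,\lVert\nu\rVert\le1$), a one-line estimate gives $(1+1/\delta)\BLMetric(\mu,\nu)\le\delta$, so $\mu(A)\le\nu(A^\delta)+\delta$; the symmetric bound is identical, whence $\PMetric(\mu,\nu)\le\delta$. Chaining the four bounds yields the lemma, the only genuinely delicate point being the extension of the third inequality to non-discrete measures.
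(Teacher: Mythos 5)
Your proposal follows essentially the same route as the paper's proof. The first inequality is definitional; the second is the same primal decomposition into a defect term bounded by $\lVert\mu\rVert-\lVert\nu\rVert$ and a transport term, which the paper records as a standalone claim (\Cref{cl:Lipschitz}); the third uses the same LP characterization of $\rho(\varepsilon)$ from \Cref{cl:Schay} in the finite case and then an approximation argument, where the paper opts for your first suggested route (weak approximation by finitely supported measures, uniform tightness, extraction of a limit coupling, Portmanteau, Ulam's theorem) rather than the $(S^*,d^*)$ detour; and the fourth is the same bump-function argument, with your choice $\delta=2\sqrt{\BLMetric}$ in place of the paper's $\varepsilon=\sqrt{\BLMetric}$, both of which work. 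Your explicit flag that the step $\WMetric\le 2\PMetric$ requires $\diam(S)\le 1$ is a genuine observation: the lemma is stated for an arbitrary complete separable metric space, and the paper's proof silently smuggles in a $\lVert d\rVert_\infty$ factor that must be $\le 1$ for the bound $\le 2\varepsilon_n$ to close; this is harmless for the paper's applications since every $\MM_h$ has diameter at most $1$ (inductively, because $\PMetric$ and $\WMetric$ between measures of total mass at most one over a space of diameter $\le 1$ are themselves $\le 1$), but it is a hypothesis worth making explicit.
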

\begin{proof}
    In the following,
    we assume $\lVert \mu \rVert \ge \lVert \nu \rVert$
    without loss of generality.
    The first inequality follows immediately from the definition.
    For the second inequality,
    we first prove the following claim.
    \begin{claim}
        \label{cl:Lipschitz}
        Let $f \colon S \to \RR^n$ be Lipschitz. Then,
        \begin{equation*}
            \left\lVert \int_S f d\mu - \int_S f d\nu \right\rVert_2
            \le \lVert f \rVert_\infty \cdot (\lVert \mu \rVert - \lVert \nu \rVert)
                + \lVert f \rVert_L \cdot \int_{S \times S} d(x, y)\, d\gamma(x,y)
        \end{equation*}
        for every $\gamma \in \CM(\mu, \nu)$.
    \end{claim}
    \begin{subproof}
        Let $\gamma \in \CM(\mu, \nu)$. Then,
        \begin{align*}
            \left\lVert \int_S f d\mu - \int_S f d\nu \right\rVert_2
            &= \left\lVert \int_S f d\mu - \int_{S} f  \, d(p_1)_* \gamma + \int_{S} f \, d(p_1)_*\gamma - \int_S f d\nu \right\rVert_2\\
            &\le \left\lVert \int_S f d(\mu - (p_1)_* \gamma) \right \rVert_2 + \left\lVert \int_{S} f \, d(p_1)_*\gamma - \int_S f d(p_2)_* \gamma \right\rVert_2\\
            &= \left\lVert \int_S f d(\mu - (p_1)_* \gamma) \right \rVert_2 + \left\lVert  \int_{S \times S} (f(x) - f(y))  d\gamma(x,y) \right\rVert_2\\
            &\le \int_S \lVert f \rVert_2 d(\mu - (p_1)_* \gamma) + \int_{S \times S} \lVert f(x) - f(y) \rVert_2  d\gamma(x,y) \\
            &\le \int_S \lVert f \rVert_\infty d(\mu - (p_1)_* \gamma) + \int_{S \times S} \lVert f \rVert_L \cdot d(x, y)\, d\gamma(x,y)\\
            &= \lVert f \rVert_\infty \cdot \lVert \mu - (p_1)_* \gamma \rVert
                + \lVert f \rVert_L \cdot \int_{S \times S} d(x, y)\, d\gamma(x,y)\\
            &= \lVert f \rVert_\infty \cdot (\lVert \mu \rVert - \lVert \nu \rVert)
                + \lVert f \rVert_L \cdot \int_{S \times S} d(x, y)\, d\gamma(x,y).
        \end{align*}
    \end{subproof}
    The claim implies that, for an $f \colon S \to \RR$ with $\lVert f \rVert_L \le 1$ and
    $\lVert f \rVert_\infty \le 1$, we have
    $\left\lvert \int_S f d\mu - \int_S f d\nu \right\rvert \le \WMetric(\mu, \nu)$,
    and since this holds for every such $f$, we get $\KMetric(\mu, \nu) \le \WMetric(\mu, \nu)$.

    For the third inequality,
    we follow
    \citeauthor{schay_nearest_1974} \cite{schay_nearest_1974}
    and
    \citeauthor{garcia-palomares_linear_1977} \cite{garcia-palomares_linear_1977},
    who proved this inequality for probability measures
    by using the duality of linear programming in the finite case
    and then lifting this result to the infinite case.
    Note that we already used this connection to linear programming in
    as \Cref{cl:Schay}
    to show that the Prokhorov metric is efficiently computable in
    \Cref{th:computeProkhorovMetric}.

    Assume that $\lVert \mu \rVert \ge \lVert \nu \rVert$ without loss of generality.
    If $S$ is finite (or $\mu$ and $\nu$ are finitely supported),
    we use that
    $\PMetric(\mu, \nu) = \inf \{\varepsilon > 0 \mid \rho(\varepsilon) \le \varepsilon \}$
    for the function $\rho$ defined in the proof of \Cref{th:computeProkhorovMetric}.
    \Cref{cl:Schay} expresses the value $\rho(\varepsilon)$
    for an $\varepsilon > 0$ as a linear program
    and we can interpret the solution to this linear program
    as a measure $\gamma_\varepsilon \in \CM(\mu, \nu)$
    that satisfies
    $\rho(\varepsilon) =  \lVert \mu \rVert  - \gamma_\varepsilon( \Delta_\varepsilon)$,
    where $\Delta_\varepsilon \coloneqq \{ (x,y) \in S \times S \mid d(x,y) \le \varepsilon \}$.
    Choose a sequence $\varepsilon_n$
    with $\varepsilon_n \downarrow \PMetric(\mu, \nu)$
    and $\rho(\varepsilon_n) \le \varepsilon_n$.
    Then,
    \begin{align*}
        \varepsilon_n \ge \rho(\varepsilon_n)
        = \lVert \mu \rVert - \gamma_{\varepsilon_n}( \Delta_{\varepsilon_n} )
        = \lVert \mu \rVert - \lVert \gamma_{\varepsilon_n} \rVert + \gamma_{\varepsilon_n} \left(\overline{\Delta_{\varepsilon_n}}\right)
        = \lVert \mu \rVert - \lVert \nu \rVert + \gamma_{\varepsilon_n} \left(\overline{\Delta_{\varepsilon_n}}\right)
    \end{align*}
    and, hence,
    \begin{align*}
        \WMetric(\mu, \nu)
        &\le \lVert \mu \rVert - \lVert \nu \rVert + \int_{S \times S} d(x, y)\, d\gamma_{\varepsilon_n}(x,y)\\
        &= \lVert \mu \rVert - \lVert \nu \rVert + \int_{\Delta_{\varepsilon_n}} d(x, y)\, d\gamma_{\varepsilon_n}(x,y) + \int_{\overline{\Delta_{\varepsilon_n}}} d(x, y)\, d\gamma_{\varepsilon_n}(x,y)\\
        &\le \lVert \mu \rVert - \lVert \nu \rVert + \varepsilon_n \cdot \lVert \gamma_{\varepsilon_n} \rVert + \lVert d \rVert_\infty \cdot \gamma_{\varepsilon_n}\left(\overline{\Delta_{\varepsilon_n}}\right)
        \le 2 \cdot \varepsilon_n,
    \end{align*}
    which yields that $\WMetric(\mu, \nu) \le 2 \PMetric(\mu, \nu)$.

    We can also consider the limit of the $\gamma_{\varepsilon_n}$.
    More precisely, since $\pMeasures(S \times S)$ is compact,
    the sequence $\gamma_{\varepsilon_n}$,
    by possibly considering a subsequence,
    converges to a measure $\gamma$ of the same total mass,
    which is $\lVert \nu \rVert$.
    For this, note that results for probability measures apply since
    we can just rescale a sequence since all measures have the same total mass.
    We then have $\gamma \in \CM(\mu, \nu)$, cf.\ the case for $S$ infinite,
    and for $\varepsilon > \PMetric(\mu, \nu)$,
    \begin{align*}
        \lVert \mu \rVert - \lVert \nu \rVert + \gamma\left(\overline{\Delta_\varepsilon}\right)
        &\le \lVert \mu \rVert - \lVert \nu \rVert + \liminf_{n \rightarrow \infty} \gamma_{\varepsilon_n} \left(\overline{\Delta_\varepsilon}\right)\\
        &\le \lVert \mu \rVert - \lVert \nu \rVert + \liminf_{n \rightarrow \infty} \gamma_{\varepsilon_n} \left(\overline{\Delta_{\varepsilon_n}}\right)
        \le \liminf_{n \rightarrow \infty} \varepsilon_n = \PMetric(\mu, \nu),
    \end{align*}
    where the first inequality holds by the Portmanteau theorem,
    which again implies $\WMetric(\mu, \nu) \le 2 \PMetric(\mu, \nu)$ as before.

    The infinite case works by approximating $\mu$ and $\nu$
    by finite measures and is mostly analogous to \cite{garcia-palomares_linear_1977}
    as we can rescale a sequence of measures of the same total mass to probability measures.
    More precisely, one weakly approximates $\mu$ and $\nu$
    by sequences of measures
    $\mu_n$ and $\nu_n$ of total mass $\lVert \mu \rVert$ and $\lVert \nu \rVert$,
    respectively, supported by the finite set $S_n$ for $n \ge 1$.
    Then, $\lim_{n \rightarrow \infty} \PMetric(\mu_n, \nu_n) = \PMetric(\mu, \nu)$
    by the triangle inequality.

    For $n \ge 0$, take $\gamma_n \in \CM(\mu_n, \nu_n)$
    of total mass $\lVert \nu \rVert$
    for $\mu_n$ and $\nu_n$ as defined in the finite case, i.e.,
    we have $\lVert \mu \rVert - \lVert \nu \rVert + \gamma\left(\overline{\Delta_\varepsilon}\right) \le \PMetric(\mu_n, \nu_n)$
    for every $\varepsilon > \PMetric(\mu_n, \nu_n)$.
    Since $\mu_n$ and $\nu_n$ are uniformly tight as
    weakly convergent sequences,
    the sequence $\gamma_n$ is also uniformly tight
    and, by possibly considering a subsequence,
    converges to a measure $\gamma$ of total mass $\lVert \nu \rVert$.
    By continuity of $p_2$, we then have that $(p_2)_* \gamma_n$
    converges to $(p_2)_* \gamma$.
    Since $(p_2)_* \gamma_n = \nu_n$ converges to $\nu$,
    we get that $(p_2)_* \gamma = \nu$.
    Moreover, one can show that $(p_2)_* \gamma_n \le \mu_n$
    implies that $(p_2)_* \gamma \le \mu$:
    For a closed set $C \in \borel(S)$ and an $\eta > 0$,
    approximate $C$ by the Lipschitz function
    $f(x) \coloneqq \max\{0, 1 - \frac{d(x, C)}{\eta}\}$.
    Then, $\bm{1}_C \le f \le \bm{1}_{C^\eta}$ and, hence,
    \begin{align*}
        (p_2)_* \gamma(C)
        = \int_S \bm{1}_C d (p_2)_* \gamma
        \le \int_S fd (p_2)_* \gamma
        &= \lim_{n \rightarrow \infty} \int_S fd (p_2)_* \gamma_n\\
        &\le \lim_{n \rightarrow \infty} \int_S fd \mu_n\\
        &= \int_S f d \mu\\
        &\le \int_S \bm{1}_{C^\eta} d \mu
        = \mu(C^\eta).
    \end{align*}
    Since this holds for every $\eta > 0$ and $C$ is closed,
    we get $(p_2)_* \gamma(C) \le \mu(C)$.
    Then, since $(p_2)_* \gamma$ is regular by Ulam's theorem, we get
    \begin{equation*}
        (p_2)_* \gamma(A) =
        \sup \{(p_2)_* \gamma(K) \mid K \text{ compact}, K \subseteq A, K \in \borel(S)\}
        \le \mu(A)
    \end{equation*}
    for every $A \in \borel(S)$.
    All in all, we have $\gamma \in \CM(\mu, \nu)$.

    For every $\varepsilon > \PMetric(\mu, \nu)$,
    we have $\varepsilon > \PMetric(\mu_n, \nu_n)$
    if $n$ is large enough since
    $\PMetric(\mu_n, \nu_n)$ converges to $\PMetric(\mu, \nu)$.
    Then,
    \begin{align*}
        \lVert \mu \rVert - \lVert \nu \rVert + \gamma\left(\overline{\Delta_\varepsilon}\right)
        \le \lVert \mu \rVert - \lVert \nu \rVert + \liminf_{n \rightarrow \infty} \gamma_n\left(\overline{\Delta_\varepsilon}\right)
        \le \liminf_{n \rightarrow \infty} \PMetric(\mu_n, \nu_n)
        = \PMetric(\mu, \nu),
    \end{align*}
    where the first inequality follows from the Portmanteau theorem.
    Then, we get
    \begin{equation*}
        \WMetric(\mu, \nu)
        \le \lVert \mu \rVert - \lVert \nu \rVert + \varepsilon \cdot \lVert \gamma \rVert + \lVert d \rVert_\infty \cdot \gamma\left(\overline{\Delta_\varepsilon}\right)
        \le \varepsilon + \PMetric(\mu, \nu),
    \end{equation*}
    similarly to the finite case.
    Since this holds for every $\varepsilon > \PMetric(\mu, \nu)$, the claim follows.

    For the last inequality, we
    show that $\PMetric(\mu, \nu) \le 2 \sqrt{\BLMetric(\mu, \nu)}$
    by following the proof of \cite[Theorem $11.3.3$]{dudley_2002}
    for probability measures.
    We have
    \begin{equation*}
        \PMetric(\mu, \nu) = \inf \{ \varepsilon > 0 \mid
        \mu(A) \le \nu(A^\varepsilon) + \varepsilon
        \text{ for every } A \in\borel(S)\}
    \end{equation*}
    and fix an $A \in \borel(S)$ and an $\varepsilon > 0$.
    Let $f(x) \coloneqq \max\{0,\, 1 - \frac{d(x, A)}{\varepsilon}\}$.
    Then, $\lVert f \rVert_{BL} \le 1 + \frac{1}{\varepsilon}$ and, hence,
    \begin{align*}
        \mu(A) \le \int_S f d\mu
        &= \int_S f d \nu + \left( \int_S f d \mu - \int_S f d \nu\right)\\
        &\le \int_S f d \nu + \lVert f \rVert_{BL} \cdot \BLMetric(\mu, \nu)\\
        &\le \int_S f d \nu + \left(1 + \frac{1}{\varepsilon}\right) \cdot \BLMetric(\mu, \nu)
        \le \nu(A^\varepsilon) + \left(1 + \frac{1}{\varepsilon}\right) \cdot \BLMetric(\mu, \nu),
    \end{align*}
    which implies that $\PMetric(\mu, \nu) \le \max\left\{\varepsilon, \left(1 + \frac{1}{\varepsilon}\right) \cdot \BLMetric(\mu, \nu)\right\}$.
    For $\varepsilon \coloneqq \sqrt{\BLMetric(\mu, \nu)}$, this yields that
    $\PMetric(\mu, \nu) \le \BLMetric(\mu, \nu) + \sqrt{\BLMetric(\mu, \nu)}$.
    Hence, if $\BLMetric(\mu, \nu) \le 1$, then
    $\PMetric(\mu, \nu) \le 2 \cdot \sqrt{\BLMetric(\mu, \nu)}$.
    If $\BLMetric(\mu, \nu) > 1$, then
    $\PMetric(\mu, \nu) \le 2 \cdot \sqrt{\BLMetric(\mu, \nu)}$
    trivially holds since $\PMetric(\mu, \nu) \le 1$
    as $\mu$ and $\nu$ have total mass at most one.
\end{proof}

\begin{lemma}
    \label{le:wassersteinWellDefined}
    Let $0 \le h \le \infty$.
    The metric $\WDistOf{h}$ is well-defined and
    metrizes the topology of $\MM_h$.
    The metric $\WDeltaDistOf{h}$ is well-defined and
    metrizes the topology of $\pMeasures(\MM_h)$.
\end{lemma}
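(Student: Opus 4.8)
The plan is to mirror the proof of \Cref{le:prokhorovWellDefined}, replacing the appeal to Prokhorov's theorem (\Cref{th:prokhorovMetric}) with the sandwich inequalities of \Cref{le:KRProkhorov}. The first step is to record the abstract fact that, on any complete separable metric space $(S,d)$, the unbalanced Wasserstein metric $\WMetric$ metrizes the weak topology on $\oneMeasures(S)$. Indeed, \Cref{le:KRProkhorov} gives $\BLMetric \le \KMetric \le \WMetric \le 2\PMetric \le 4\sqrt{\BLMetric}$, so for a sequence $(\mu_i)_i$ and a measure $\mu$ in $\oneMeasures(S)$ we have $\WMetric(\mu_i,\mu) \to 0$ if and only if $\PMetric(\mu_i,\mu) \to 0$ (equivalently $\BLMetric(\mu_i,\mu)\to 0$), which by \Cref{th:prokhorovMetric} (respectively by the fact that $\BLMetric$ metrizes the weak topology on separable spaces) is equivalent to weak convergence $\mu_i \to \mu$. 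Since $(\oneMeasures(S),\WMetric)$ and $\oneMeasures(S)$ with the weak topology are both metrizable and have the same convergent sequences, the two topologies coincide. We also note for later that $\WDistOf{h} \le 1$ for all $h$: if $d \le 1$ on $\MM_{h-1}$ then $\int d\,d\gamma \le \lVert \nu\rVert$ and hence $\WMetric(\mu,\nu) \le \lVert\mu\rVert \le 1$, starting from the trivial metric $\WDistOf{0}$ on $\MM_0$.

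With this in hand, the well-definedness of $\WDistOf{h}$ for $0 \le h < \infty$ follows by exactly the induction used in \Cref{le:prokhorovWellDefined}. For $h = 0$ the claim is trivial. Assuming $\WDistOf{h}$ is a well-defined metric that metrizes the weak topology on $\MM_h$, the space $(\MM_h, \WDistOf{h})$ is compact (a metric inducing a compact topology is complete and totally bounded), hence complete separable, and its Borel $\sigma$-algebra equals $\borel(\MM_h)$; therefore $\MM_{h+1} = \oneMeasures(\MM_h,\borel(\MM_h)) = \oneMeasures(\MM_h,\WDistOf{h})$, and $\WDistOf{h+1} \coloneqq \WMetric$ on $(\MM_h,\WDistOf{h})$ is a well-defined metric (by the corollary following \Cref{le:wassersteinStarSpace}) that, by the abstract fact above, metrizes the weak topology on $\MM_{h+1}$, which therefore coincides with the topology $\MM_{h+1}$ was endowed with. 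For $h = \infty$, $\WDistOf{\infty}(\alpha,\beta) = \sup_h \frac{1}{h}\WDistOf{h}(\alpha_h,\beta_h)$ is a variant of the product metric on $\prod_h \MM_h$ and, exactly as argued for $\proDistOf{\infty}$ (here the factor $1/h$ together with $\WDistOf{h} \le 1$ is what forces the product and not the uniform topology), metrizes the topology of $\MM_\infty$. Finally, the claims for $\WDeltaDistOf{h}$, $0 \le h \le \infty$, follow by one more application of the abstract fact: once $\WDistOf{h}$ metrizes the weak topology of $\MM_h$, the space $(\MM_h,\WDistOf{h})$ is compact metric, hence complete separable, and $\pMeasures(\MM_h)$ carries the weak topology, so $\WDeltaDistOf{h} = \WMetric$ on $(\pMeasures(\MM_h),\WDistOf{h})$ metrizes it; on the probability measures of $\pMeasures(\MM_h)$ this $\WMetric$ is just the ordinary Wasserstein distance, cf.\ \cite[Section~11.8]{dudley_2002}.

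I expect no serious obstacle beyond bookkeeping: the genuine analytic content — that $\WMetric$ is a metric at all and that it is sandwiched between the Prokhorov and bounded-Lipschitz metrics — has already been extracted in \Cref{le:wassersteinStarSpace}, its corollary, and \Cref{le:KRProkhorov}. The one point requiring care, just as in \Cref{le:prokhorovWellDefined}, is the potential circularity in the very definition: to speak of $\WMetric$ on $\MM_h$ at all one must already know that $\WDistOf{h}$ induces exactly the weak topology on $\MM_h$, so that $\oneMeasures(\MM_h,\borel(\MM_h))$ and $\oneMeasures(\MM_h,\WDistOf{h})$ denote the same object; this is why the induction must carry along the statement ``metrizes the weak topology'' and not merely ``is a metric.''
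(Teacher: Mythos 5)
Your proposal is correct and follows the same skeleton as the paper's proof: use the sandwich inequalities of \Cref{le:KRProkhorov} (together with \Cref{th:prokhorovMetric}) to transfer ``metrizes the weak topology'' from $\PMetric$ to $\WMetric$, then replay the induction of \Cref{le:prokhorovWellDefined}. The paper states this in one line and defers to analogy; you fill in the bookkeeping, including the clean observation that completeness/separability of $(\MM_h,\WDistOf{h})$ comes for free from compactness of $\MM_h$, and the boundedness $\WDistOf{h}\le 1$ needed for the $1/h$-weighted product metric at $h=\infty$.
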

\begin{proof}
    By \Cref{th:prokhorovMetric} and \Cref{le:KRProkhorov},
    for a complete separable metric space $(S, d)$,
    we have that $(\finMeasures(S), W)$ is a complete separable metric space and
    convergence in $W$ is equivalent to weak convergence of measures.
    Then, the proof is analogous to \Cref{le:prokhorovWellDefined}.
\end{proof}

\subsubsection{Computability of the unbalanced Wasserstein metric}

It only remains to argue that $\WDistOf{h}$ and $\WDeltaDistOf{h}$
are polynomial-time computable.
This can be done in the same way as in \Cref{th:computeProkhorovMetric},
i.e., one again constructs a matrix of distances
that one updates while running the \wlone{} in parallel on the two input graphs.
Here, the metric $W$ can be computed in polynomial-time
by using the same flow network \cite{pele_fast_2009}
as \citet{ChenLMWW22} did in their proof for the polynomial-time computability
of their Weisfeiler--Leman distance.
Following their arguments, one then obtains
a running time of
$\mathcal{O}(h \cdot n^5 \cdot \log n)$
for $\WDeltaDistOf{h}$, where $h \in \NN$
and $n$ is the maximum number of vertices in the two input graphs.

For $\WDistOf{\infty}$ and $\WDeltaDistOf{\infty}$,
in contrast to \Cref{th:computeProkhorovMetric},
we cannot argue that all
distances are of the form $i / (n \cdot m)$,
where $n$ and $m$ are the numbers of vertices of the input graphs:
with every iteration, we possibly get an additional factor of $1 / (n \cdot m)$.
However, we can start rounding the distances after $\log(1/\varepsilon)$ iterations
to integer multiples of $1 / (n \cdot m)^{\log_2(1/\varepsilon)}$,
which results in an additive error of $\varepsilon$.

\subsubsection{Continuity of message-passing graph neural networks}

It remains to prove \Cref{le:NNsLipschitzIDM}.
Recall that,
for an $L$-layer MPNN model $\boldsymbol{\varphi} = (\varphi_i )_{i = 0}^L$,
we inductively defined the \new{Lipschitz constant} $C_{\boldsymbol{\varphi}} \ge 0$ of $\phib$
by $C_{\boldsymbol{\varphi}_0} \coloneqq 0$ for $t = 0$ and
\begin{equation*}
    C_{\boldsymbol{\varphi}_t}
    \coloneqq \lVert \varphi_t \rVert_L \cdot (\lVert \hb_{\blank}^{\tup{t-1}} \rVert_\infty
        + C_{\boldsymbol{\varphi}_{t-1}})
\end{equation*}
for $t > 0$.
If additionally $\psi$ is Lipschitz, then
\begin{equation*}
    C_{(\boldsymbol{\varphi}, \psi)}
    \coloneqq \lVert \psi \rVert_L \cdot (\lVert \hb_{\blank}^{\tup{L}} \rVert_\infty
        + C_{\boldsymbol{\varphi}}).
\end{equation*}

\continuityGNNs*
\begin{proof}
    We first note that, by \Cref{cl:Lipschitz},
    for a complete separable metric space $(S,d)$
    and a Lipschitz function $f \colon S \to \RR^n$,
    we have
    \begin{align*}
        \left\lVert \int_S f d\mu - \int_S f d\nu \right\rVert_2
        \le \lVert f \rVert_{BL} \cdot \WMetric(\mu, \nu).
    \end{align*}
    for all $\mu, \nu \in \oneMeasures(S)$.
    Let us now prove the first inequality by induction on $L$.
    For $L = 0$, the statement trivially holds since $\MM_0$ is the one-point space.
    For the inductive step, we have
    \begin{align*}
        \lVert \hb_{\alpha}^{\tup{L}} - \hb_{\beta}^{\tup{L}} \rVert_2
        &= \left\lVert \varphi_L \biggl( \int_{\MM_{L-1}} \hb_{\blank}^{\tup{L-1}} d \alpha \biggr)
            - \varphi_L \biggl( \int_{\MM_{L-1}} \hb_{\blank}^{\tup{L-1}} d \beta \biggr) \right\rVert_2\\
        &\le \lVert \varphi_L \rVert_L \cdot
            \left\lVert \int_{\MM_{L-1}} \hb_{\blank}^{\tup{L-1}} d \alpha
            - \int_{\MM_{L-1}} \hb_{\blank}^{\tup{L-1}} d \beta \right\rVert_2\\
        &\le \lVert \varphi_L \rVert_L \cdot \lVert \hb_{\blank}^{\tup{L-1}} \rVert_{BL} \cdot
            \WMetric(\alpha, \beta)\\
        &= \lVert \varphi_L \rVert_L \cdot (\lVert \hb_{\blank}^{\tup{L-1}} \rVert_{\infty}
            + \lVert \hb_{\blank}^{\tup{L-1}}\rVert_{L}) \cdot
            \WDistOf{L}(\alpha, \beta)\\
        &\le \lVert \varphi_L \rVert_L \cdot (\lVert \hb_{\blank}^{\tup{L-1}} \rVert_{\infty}
            + C_{\boldsymbol{\varphi}_{L-1}}) \cdot
            \WDistOf{L}(\alpha, \beta)\\
        &= C_{\varphi} \cdot \WDistOf{L}(\alpha, \beta)
    \end{align*}
    for all $\alpha, \beta \in \MM_h$ by the induction hypothesis.
    Hence, we get the first inequality.
    The second equality then follows from the first
    by the same reasoning as in the inductive step above.

    It remains to prove the inequalities for $\WDistOf{\infty}$
    and $\WDeltaDistOf{\infty}$. These follow from the two just-proven inequalities.
    First, for all $\alpha, \beta \in \oneMeasures(\MM)$, we have
    \begin{align*}
        \lVert \hb_{\alpha}^{\tup{L}} - \hb_{\beta}^{\tup{L}} \rVert_2
        = \lVert \hb_{p_{\infty, L}(\alpha)}^{\tup{L}} - \hb_{p_{\infty, L}(\beta)}^{\tup{L}} \rVert_2
        &\le C_{\boldsymbol{\varphi}} \cdot \WDistOf{L}(p_{\infty, L}(\alpha), p_{\infty, L}(\beta))\\
        &\le C_{\boldsymbol{\varphi}} \cdot L \cdot \WDistOf{\infty}(\alpha, \beta).
    \end{align*}
    Second, for all $\mu, \nu \in \pMeasures(\MM)$, we have
    \begin{align*}
        \lVert \hb_{\mu} - \hb_{\nu}\rVert_2
        = \lVert \hb_{(p_\infty, L)_*\mu} - \hb_{(p_{\infty, L})_*\nu}\rVert_2
        &\le  C_{(\boldsymbol{\varphi}, \psi)} \cdot \WDeltaDistOf{L}((p_{\infty, L})_* \mu, (p_{\infty, L})_*\nu)\\
        &\le  C_{(\boldsymbol{\varphi}, \psi)} \cdot L \cdot \WDeltaDistOf{\infty}(\mu, \nu).
    \end{align*}
\end{proof}

\subsection{Universality of message-passing graph neural networks}
\label{sec:missingProofsUniversalityMPNNs}

Here, we prove our universal approximation theorem
for MPNNs on iterated degree measures,
which states that the sets
$\vertexNeural_t^1$ and $\neural_t^1$
are dense in $C(\MM_t, \RR)$ and $C(\pMeasures(\MM_t), \RR)$, respectively.
It follows, by simple inductive applications
of the Stone--Weierstrass theorem, cf.~\Cref{sec:stoneWeierstrass},
to the set $\vertexNeural_t^1$.
Essentially, we show that $\vertexNeural_t^1$
satisfies all requirements of the Stone--Weierstrass theorem.
Then, an application of the Stone--Weierstrass theorem
combined with the definition of iterated degree measures
yields that $\vertexNeural_{t+1}^1$ separates points,
which allows us to show that $\vertexNeural_{t+1}^1$
again satisfies all requirements of the Stone--Weierstrass theorem.
In the following, to stress the dependence of
$\hb_{\blank}^\tup{t}$ and $\hb_\blank$
on the MPNN model $\phib$ and on the MPNN model $\phib$ and the Lipschitz function $\psi$,
we sometimes write $\phib_{\blank}^\tup{t}$ and $(\phib,\psi)_\blank$ instead, respectively.

\begin{lemma}
    \label{le:vertexNeuralWeierstrass}
    Let $0 \le t \le \infty$.
    The set $\vertexNeural_t^1$ is closed under multiplication and linear combinations,
    contains $\bm{1}_{\MM_t}$, and separates points of $\MM_t$.
\end{lemma}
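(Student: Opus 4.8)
The plan is to establish all four properties by a single induction on $t$: the algebra structure and the presence of constants hold at every level by \emph{parallel composition}, while the Stone--Weierstrass theorem (\Cref{sec:stoneWeierstrass}) converts the inductive hypothesis at level $t$ into point separation at level $t+1$. For the algebra part, given $f,g\in\vertexNeural_t^1$ computed by $t$-layer models $\phib=(\varphi_i)_{i=0}^t$ and $\psib=(\psi_i)_{i=0}^t$ (both with $d_t=1$), I would define a $t$-layer model $\boldsymbol{\chi}=(\chi_i)_{i=0}^t$ that runs the two in parallel: $\chi_0\coloneqq(\varphi_0,\psi_0)$, $\chi_i(x,y)\coloneqq(\varphi_i(x),\psi_i(y))$ for $0<i<t$, and in the last layer $\chi_t(x,y)\coloneqq a\varphi_t(x)+b\psi_t(y)$ for a linear combination (with $a,b\in\RR$), respectively $\chi_t(x,y)\coloneqq\varphi_t(x)\cdot\psi_t(y)$ for a product. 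A routine induction on $i$ gives $\boldsymbol{\chi}_{\alpha}^{\tup{i}}=(\phib_{\alpha}^{\tup{i}},\psib_{\alpha}^{\tup{i}})$ for $i<t$, hence $\boldsymbol{\chi}_{\blank}^{\tup{t}}=af+bg$, respectively $fg$; and $\boldsymbol{\chi}$ is again an MPNN model because each $\MM_i$ is compact, so each set $\{\int_{\MM_{i-1}}\hb_{\blank}^{\tup{i-1}}\,d\alpha\mid\alpha\in\MM_i\}$ is compact and hence bounded, and on a bounded set a coordinatewise pairing, a linear combination, or a product of bounded Lipschitz functions is again Lipschitz. Choosing $\varphi_t\equiv 1$ (earlier layers arbitrary) shows $\bm{1}_{\MM_t}\in\vertexNeural_t^1$, and the base case $t=0$ is trivial since $\MM_0$ is a single point.

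The heart of the argument is point separation, again by induction on $t$ (vacuous for $t=0$). Assume the lemma holds at level $t$. Then $\vertexNeural_t^1$ is a point-separating subalgebra of $C(\MM_t,\RR)$ containing the constants, so by Stone--Weierstrass it is \emph{dense} in $C(\MM_t,\RR)$. Let $\alpha\neq\beta$ in $\MM_{t+1}=\measures(\MM_t)$. Since finite measures on the compact metric space $\MM_t$ are separated by $C(\MM_t,\RR)$ (see \Cref{sec:factsOnMeasures}), there is $u\in C(\MM_t,\RR)$ with $\int u\,d\alpha\neq\int u\,d\beta$; by density I can choose $v\in\vertexNeural_t^1$ with $\lVert u-v\rVert_\infty$ small enough that $\int v\,d\alpha\neq\int v\,d\beta$ still holds (using $\lVert\alpha\rVert,\lVert\beta\rVert\le 1$). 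If $v=\phib_{\blank}^{\tup{t}}$ for a $t$-layer model $\phib=(\varphi_i)_{i=0}^t$, then appending $\varphi_{t+1}\coloneqq\mathrm{id}_{\RR}$ produces a $(t+1)$-layer model whose level-$(t+1)$ function is $\gamma\mapsto\int_{\MM_t}v\,d\gamma$; this function lies in $\vertexNeural_{t+1}^1$ and separates $\alpha$ from $\beta$, which closes the induction. This is the only genuinely non-mechanical step: one must carry the full Stone--Weierstrass package (algebra, constants, separation) as the inductive hypothesis so that density at level $t$ is available, and then combine it with the elementary fact that distinct finite measures are distinguished by continuous functions.

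Finally, for $t=\infty$ one has $\vertexNeural_\infty^1=\bigcup_{L\in\NN}\vertexNeural_L^1\circ p_{\infty,L}\subseteq C(\MM,\RR)$. Point separation is immediate: if $\alpha\neq\beta$ in $\MM=\varprojlim_h\MM_h$, then $p_{\infty,h}(\alpha)\neq p_{\infty,h}(\beta)$ for some $h$, and the finite case provides an $h$-layer model whose function on $\MM$ separates them. For the algebra operations and the constants it suffices to observe that the sets $\vertexNeural_L^1\circ p_{\infty,L}$ form an \emph{increasing} chain of subalgebras of $C(\MM,\RR)$: by induction on the number of layers one checks that every $L$-layer model extends to an $(L+1)$-layer model computing the same function on $\MM$, obtained by replacing each update layer by its one-step extension and appending a second copy of the last update $\varphi_L$, the Lipschitz conditions surviving by a routine check with pushforward measures. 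Any finitely many elements of $\vertexNeural_\infty^1$ then lie at a common finite level, so the finite-$t$ case applies. This depth-extension bookkeeping, while routine, is the most tedious part of the proof.
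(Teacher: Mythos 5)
Your proposal is correct and follows essentially the same route as the paper: the same parallel-composition construction for closure under products and linear combinations (with the same Lipschitz-on-compact-sets justification), the same Stone--Weierstrass induction for point separation (unfolded slightly via the density-plus-small-perturbation step, but equivalent), and the same depth-extension claim for the $t=\infty$ case (Claim~\ref{obs:vertexNeuralProjection} in the paper). The one informal bit is your description of the depth extension, which the paper proves as a separate inductive claim, but you correctly identify that some such bookkeeping is needed and that it works by pushing forward through projections.
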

\begin{proof}
    \emph{Base of the induction.}
    For $t = 0$, the claim trivially holds as $\vertexNeural^1_0$
    contains precisely the constant functions and
    $\MM_0 = \{1\}$ is the one-point space.

    \emph{Induction step.}
    Let $t > 0$.
    Clearly, $\vertexNeural_t^1$ contains the all-one function
    $\bm{1}_{\MM_t}$ since we can always choose $\varphi_t$
    in an MPNN model to be the all-one function.
    Next, consider two functions
    $\phib_\blank^{\tup{t}}$
    and
    ${\phib'_\blank}^{\tup{t}}$
    from $\vertexNeural_t^1$
    for $t$-layer MPNN models $\phib$ and $\phib'$.
    Define
    $\psi_{\mathsf{mul}}((x, y)^T) \coloneqq x \cdot y$
    and $\psi_{\mathsf{add}}((x, y)^T) \coloneqq x + c \cdot y$ for all $x,y \in \RR$,
    where $c \in \RR$ is fixed.
    Then, define the MPNN model
    \begin{equation*}
         \phib_{\mathsf{mul}} \coloneqq (\varphi_0 \times\varphi'_0, \dots, \varphi_{t-1} \times \varphi'_{t-1}, \psi_{\mathsf{mul}} \circ (\varphi_t \times \varphi'_t))
    \end{equation*}
    and define $\phib_{\mathsf{add}}$ analogously via $\psi_{\mathsf{add}}$.
    Note that $\phib_{\mathsf{mul}}$ is in fact an MPNN model since multiplication
    on a compact, and hence, a bounded subset of $\RR^2$ is Lipschitz continuous.
    Then,
    \begin{align*}
        &\phib_\blank^{\tup{t}} \cdot {\phib'_\blank}^{\tup{t}}
        = {\phib_\mathsf{mul}^{\tup{t}}}_\blank&
        &\text{and}&
        &\phib_\blank^{\tup{t}} +
        c \cdot {\phib'_\blank}^{\tup{t}}
        = {\phib_\mathsf{add}^{\tup{t}}}_\blank,&
    \end{align*}
    which implies that $\vertexNeural^1_t$ is closed under multiplication
    and linear combinations.

    Finally, let $\alpha, \beta \in \MM_t = \measures(\MM_{t-1})$ with $\alpha \neq \beta$.
    By the induction hypothesis,
    the set $\vertexNeural_{t-1}^1$ is closed under multiplication and linear combinations,
    contains $\bm{1}_{\MM_{t-1}}$, and separates points of $\MM_{t-1}$.
    Hence, it is a subalgebra of $C(\MM_{t-1})$ that separates points and contains
    the constants.
    By the Stone--Weierstrass theorem,
    $\vertexNeural_{t-1}^1$ is dense in $C(\MM_{t-1})$,
    which means that there is a $(t-1)$-layer MPNN model $\phib$ with output dimension one
    such that
    \begin{equation*}
        \int_{\MM_{t-1}} \phib_\blank^{\tup{t-1}} d \alpha
        \neq \int_{\MM_{t-1}} \phib_\blank^{\tup{t-1}} d \beta.
    \end{equation*}
    Define the $t$-layer MPNN model $\phib' \coloneqq (\varphi_0, \dots, \varphi_{t-1}, \psi_\mathsf{id})$,
    where $\psi_{\mathsf{id}}(x) \coloneqq x$ for every $x \in \RR$.
    Then, ${\phib'_\blank}^{\tup{t}} \in \vertexNeural_{t}^1$,
    separates $\alpha$ and $\beta$ since
    \begin{equation*}
        {\phib'_\alpha}^{\tup{t}}
        = \int_{\MM_{t-1}} \phib_\blank^{\tup{t-1}} d \alpha
        \neq \int_{\MM_{t-1}} \phib_\blank^{\tup{t-1}} d \beta
        = {\phib'_\beta}^{\tup{t}},
    \end{equation*}
    cf.~\Cref{sec:factsOnMeasures}.

    The claim for $\vertexNeural_\infty^1$ now easily follows:
    $\vertexNeural^1_\infty$ contains the all-one function
    $\bm{1}_\MM = \bm{1}_{\MM_0} \circ p_{\infty, 0} \in \vertexNeural^1_\infty$
    and separates points of $\MM$
    since, if we have $\alpha, \beta \in \MM$ with $\alpha \neq \beta$,
    there is some $t \in \NN$ such that $\alpha_{t} \neq \beta_{t}$.
    By the already proven claim for $\vertexNeural_{t}^1$,
    there is a $t$-layer MPNN model $\phib$ with output dimension one such that
    $\phib_{\alpha_t}^{\tup{t}}
    \neq \phib_{\beta_t}^{\tup{t}}$.
    Hence,
    $\phib_\blank^{\tup{t}} \circ p_{\infty,t}(\alpha)
        = \phib_{\alpha_t}^{\tup{t}}
        \neq \phib_{\beta_t}^{\tup{t}}
        = \phib_\blank^{\tup{t}} \circ p_{\infty, t}(\beta)$
    for $\phib_\blank^{\tup{t}} \circ p_{\infty,t} \in \vertexNeural_\infty^1$.

    To see that $\vertexNeural^1_\infty$ is closed under multiplication and linear combinations,
    let $\phib_\blank^{\tup{t}} \circ p_{\infty, t}$ and
    ${\phib'_\blank}^{\tup{t'}} \circ p_{\infty, t'}$
    be two functions from $\vertexNeural^1_\infty$,
    where $\phib$ and $\phib'$ are $t$-layer and $t'$-layer MPNN models
    with output dimension one, respectively.  We assume that $t \ge t'$; the other case is analogous.
    Then,
    \begin{equation*}
        {\phib'_\blank}^{\tup{t'}} \circ p_{\infty, t'}
            = {\phib'_\blank}^{\tup{t'}} \circ p_{t, t'} \circ p_{\infty, t}
    \end{equation*}
    by the definition of $\MM_\infty$.
    For now, assume that
    ${\phib'_\blank}^{\tup{t'}} \circ p_{t, t'} \in \vertexNeural^1_{t}$.
    Then, by the already proven claim for $\vertexNeural^1_{t}$, we have
    $\phib_\blank^{\tup{t}}
        \cdot ({\phib'_\blank}^{\tup{t'}} \circ p_{t, t'}) \in \vertexNeural^1_{t}$,
    which means that
    \begin{align*}
        (\phib_\blank^{\tup{t}} \circ p_{\infty, t})
            \cdot ({\phib'_\blank}^{\tup{t'}} \circ p_{\infty, t'})
&= (\phib_\blank^{\tup{t}}
                \cdot ({\phib'_\blank}^{\tup{t'}} \circ p_{t, t'})) \circ p_{\infty, t} \in \vertexNeural^1_\infty.
    \end{align*}
    This implies that $\vertexNeural^1_\infty$ is closed under multiplication,
    and by analogous reasoning, we get that it is closed under linear combinations.
    To finish the proof, it remains to show that
    ${\phib'_\blank}^{\tup{t'}} \circ p_{t, t'} \in \vertexNeural^1_{t}$,
    which follows from the following claim.
    \begin{claim} \label{obs:vertexNeuralProjection}
        Let $n, t \ge 0$.
        Let $\phib$ be a $t$-layer MPNN-model with $d_t = n$.
        Then, $\hb_{\blank}^{\tup{t}} \circ p_{t+1,t} \in \vertexNeural_{t+1}^n$.
    \end{claim}
    \begin{subproof}
        We prove the claim by induction on $t$.

        \emph{Base of the induction.}
        For $t = 0$, we have $\hb_{\alpha}^\tup{0} = \varphi_0$ for every $\alpha \in \MM_0$,
        i.e., the only element of $\MM_0$ is mapped to
        $\varphi_0 \in \RR^n$.
        Let $\varphi_1 \colon \RR^n \to \RR^{n}$ be the constant function
        that maps all inputs to $\varphi_0$
        and define the $1$-layer MPNN model $\phib' \coloneqq (\varphi_0, \varphi_1)$.
        Then,
        \begin{equation*}
            \phib_\blank^{\tup{0}} \circ p_{1, 0} (\alpha)
            = \varphi_0
            = \varphi_1\biggl(\int_{\MM_0} \varphi_0 d\alpha\biggr)
            = {\phib_\alpha}^{\tup{1}}
        \end{equation*}
        for every $\alpha \in \MM_1$, i.e.,
        $\phib_\blank^{\tup{0}} \circ p_{1,0} = {\phib'_\blank}^{\tup{1}} \in \vertexNeural_{1}^n$.

        \emph{Induction step.}
        Let $t > 0$ and $\boldsymbol{\varphi}$ be a $t$-layer MPNN model.
        Then, for every $\alpha \in \MM_{t+1}$, we have
        \begin{align*}
            (\phib_{\blank}^{\tup{t}} \circ p_{t+1,t})(\alpha)
            = \varphi_t\biggl(\int_{\MM_{t-1}} \phib^{\tup{t-1}}_\blank\, d p_{t+1,t}(\alpha)\biggr)
            &= \varphi_t\biggl(\int_{\MM_{t-1}} \phib^{\tup{t-1}}_\blank\, d (p_{t,t-1})_*(\alpha)\biggr)\\
            &= \varphi_t\biggl(\int_{\MM_{t}} \phib^{\tup{t-1}}_\blank\circ p_{t,t-1}\, d \alpha\biggr).
        \end{align*}
        By the induction hypothesis, we have
        $\phib^{\tup{t-1}}_\blank\circ p_{t,t-1} \in \vertexNeural_t^{d_{t}}$, i.e.,
        $\phib^{\tup{t-1}}_\blank\circ p_{t,t-1} = {\phib'_\blank}^{\tup{t}}$
        for a $t$-layer MPNN model $\phib'$.
        Hence, we have
        \begin{align*}
            (\phib_{\blank}^{\tup{t}} \circ p_{t+1,t})(\alpha)
            = \varphi_t\biggl(\int_{\MM_{t}} {\phib'_\blank}^{\tup{t}} \,d \alpha\biggr)
            = {\phib''_\alpha}^{\tup{t}}
        \end{align*}
        for every $\alpha \in \MM_{t+1}$,
        where $\phib'' \coloneqq (\varphi'_0, \dots, \varphi'_t, \varphi_t)$
        is an MPNN model with $t+1$ layers.
        Hence, $\phib_{\blank}^{\tup{t}} \circ p_{t+1,t} \in \vertexNeural_{t+1}^n$.
    \end{subproof}
\end{proof}

With \Cref{le:vertexNeuralWeierstrass}, we immediately obtain
\Cref{th:universalApproximation}, which we restate here
for better readability.
\universalApproximationTheorem*
\begin{proof}
    By \Cref{le:vertexNeuralWeierstrass}, the Stone--Weierstrass theorem is applicable
    to $\vertexNeural_L^1$, and hence,
    $\vertexNeural_L^1$ is dense in $C(\MM_L, \RR)$.
    We can then use this to show that
    $\neural_L^1$ is dense in $C(\pMeasures(\MM_L), \RR)$.
    By the same arguments as in the inductive step in the proof of
    \Cref{le:vertexNeuralWeierstrass},
    $\neural_L^1$ is closed under multiplication and linear combinations,
    contains the all-one function, and separates points of $\pMeasures(\MM_L)$.
    Hence, an application of the Stone--Weierstrass theorem yields that
    $\neural_L^1$ is dense in $C(\pMeasures(\MM_L), \RR)$.
\end{proof}

\Cref{th:universalApproximation} then yields \Cref{le:neuralConvergence}.

\neuralConvergenceCorollary*
\begin{proof}
    First, let $n = 1$.
    When restricted to
    functions $\hb_{\blank} \in \neural^n_L$
    of the form
    $\hb_\nu = \int_{\MM_L} \hb_{\blank}^{\tup{L}} \,d \nu$, i.e.,
    the function $\psi$ is the identity,
    the claim follows since
    $\vertexNeural_L^1$ is dense in $C(\MM_L, \RR)$ by
    \Cref{th:universalApproximation}
    and the definition of the weak topology on $\pMeasures(\MM_L)$, cf.\
    \Cref{sec:factsOnMeasures}.
    Since the function $\psi$ in the definition of
    $\hb_{\blank} \in \neural^n_L$,
    which, in general, is of the form
    $\hb_\nu = \psi\bigl(\int_{\MM_L} \hb_{\blank}^{\tup{L}} \,d \nu\bigr)$,
    is continuous,
    the equivalence also holds when considering all functions in the set $\neural^n_L$.
    Finally, since one can always consider the projection to a single component and conversely map a single real number to a vector of these numbers,
    the equivalence also holds in the case $n > 1$.
\end{proof}

\subsection{Equivalence of distances}
\label{sec:equivalence}
Here, we formally state and prove \Cref{th:informalDistances}.
Let us first recall the informal statement from the main body of the paper.
\informalDistances*
Let us give an overview of the proof.
The implication
(\ref{th:informalDistances:proDeltaDist})
$\Rightarrow$
(\ref{th:informalDistances:MPNNs})
is just \Cref{le:NNsLipschitz},
and its converse is
\Cref{th:DistMPNNConverse}.
Convergence of measures in $\pMeasures(\MM)$
in the weak topology is equivalent to
convergence in $\proDeltaDist$ (or $\WDeltaDist$)
by \Cref{th:metrics},
convergence of all MPNN outputs by
\Cref{le:neuralConvergence},
convergence of tree homomorphism densities by
\citet{grebikFractionalIsomorphismGraphons2021}, cf.\ \Cref{sec:trees},
and thus, convergence in the tree distance by
\citet{boker_graph_2021a}.
Therefore, the remaining equivalences all follow
with a compactness argument similar to the one in
\Cref{th:DistMPNNConverse};
this was also used by \citet{boker_graph_2021a}
to prove the equivalence of
(\ref{th:informalDistances:treeDist})
and
(\ref{th:informalDistances:homs}).
Let us formally state the equivalence of all mentioned notions
of convergence in the following theorem.

\begin{theorem}
    \label{co:equivalencesGraphons}
    Let $(W_i)_i$ be a sequence of graphons $W_i \colon [0,1]^2 \to [0,1]$,
    and let $W \colon [0,1]^2 \to [0,1]$ be a graphon.
    Then, the following are equivalent:
    \begin{enumerate}[noitemsep]
        \item $\proDeltaDist(W_i, W) \rightarrow 0$ (or equivalently $\WDeltaDist(W_i, W) \rightarrow 0$). \label{co:equivGraphons:prokhorov}\label{co:equivGraphons:wasserstein}
        \item $\treeDist(W_i, W) \rightarrow 0$. \label{co:equivGraphons:treedist}
        \item $\hb_{W_i} \rightarrow \hb_{W}$ for every MPNN model $\phib$ and Lipschitz $\psi \colon \RR^{d_L} \to \RR^n$, where $n > 0$. \label{co:equivGraphons:gnn}
        \item $t(T, W_i) \rightarrow t(T, W)$ for every tree $T$. \label{co:equivGraphons:hom}
        \item $\nu_{W_i} \rightarrow \nu_W$. \label{co:equivGraphons:measure}
    \end{enumerate}
\end{theorem}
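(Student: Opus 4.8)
The plan is to make statement~(\ref{co:equivGraphons:measure}), i.e.\ weak convergence $\nu_{W_i} \to \nu_W$ in the compact metrizable space $\pMeasures(\MM)$, the hub to which all other statements are individually equivalent; the theorem then follows by chaining. Three of the four links are essentially bookkeeping. First, (\ref{co:equivGraphons:prokhorov}) $\Leftrightarrow$ (\ref{co:equivGraphons:measure}) is immediate from \Cref{th:metrics}: both $\proDeltaDist$ and $\WDeltaDist$ metrize the topology of $\pMeasures(\MM)$, so a sequence converges in either metric iff it converges in that topology; this also yields the parenthetical equivalence between the two metrics inside statement~(\ref{co:equivGraphons:prokhorov}). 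Second, (\ref{co:equivGraphons:measure}) $\Leftrightarrow$ (\ref{co:equivGraphons:gnn}) follows from \Cref{le:neuralConvergence} once we recall the identities $\hb_{W_i} = \hb_{\nu_{W_i}}$ and $\hb_W = \hb_{\nu_W}$ established in \Cref{sec:definitionsMPNNs}: applied to $\nu = \nu_W$ and the sequence $\nu_i = \nu_{W_i}$, \Cref{le:neuralConvergence} says precisely that $\nu_{W_i} \to \nu_W$ iff $\hb_{\nu_{W_i}} \to \hb_{\nu_W}$ for all MPNN models $\phib$ and Lipschitz $\psi \colon \RR^{d_L} \to \RR^n$, for every $n > 0$. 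Third, (\ref{co:equivGraphons:measure}) $\Leftrightarrow$ (\ref{co:equivGraphons:hom}) is the analogous statement for tree homomorphism densities proved by \citet{grebikFractionalIsomorphismGraphons2021} (cf.\ \Cref{sec:trees}), whose universal approximation result shows that the functions $W \mapsto t(T, W)$ factor through $\nu_W$ and generate the weak topology on DIDMs, exactly as MPNNs do.

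The one link that requires an actual argument is (\ref{co:equivGraphons:treedist}) $\Leftrightarrow$ (\ref{co:equivGraphons:hom}), which I would prove by a compactness argument in the spirit of the proof of \Cref{th:DistMPNNConverse} and of \citet{boker_graph_2021a}. The ingredients are: compactness of $\CW$ in the cut distance~\citep[Theorem~$9.23$]{Lovasz2012}; the inequality $\treeDist \le \cutDist$; the triangle inequality for $\treeDist$, which is a pseudometric by \citet{boker_graph_2021a}; cut-distance continuity of $t(T, \blank)$ for every tree $T$; and the rigidity statement of \citet{boker_graph_2021a} that $\treeDist(U, V) = 0$ iff $t(T, U) = t(T, V)$ for every tree $T$. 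For ``$\Leftarrow$'', suppose $t(T, W_i) \to t(T, W)$ for every tree $T$ but $\treeDist(W_i, W) \not\to 0$; pass to a subsequence with $\treeDist(W_{i_k}, W) \ge \varepsilon$ and, by cut-distance compactness, to a further subsequence with $W_{i_k} \to W'$ in the cut distance for some graphon $W'$. Then $t(T, W') = \lim_k t(T, W_{i_k}) = t(T, W)$ for every tree $T$, so $\treeDist(W', W) = 0$, and $\treeDist(W_{i_k}, W) \le \treeDist(W_{i_k}, W') + \treeDist(W', W) \le \cutDist(W_{i_k}, W') \to 0$, a contradiction. The direction ``$\Rightarrow$'' is symmetric: assuming $\treeDist(W_i, W) \to 0$ but $t(T, W_i) \not\to t(T, W)$ for some tree $T$, one again extracts a cut-convergent subsequence $W_{i_k} \to W'$, obtains $\treeDist(W', W) = 0$ and hence $t(T, W') = t(T, W)$ from $\treeDist(W_{i_k}, W') \le \cutDist(W_{i_k}, W') \to 0$ together with the triangle inequality, and derives a contradiction with $t(T, W_{i_k}) \to t(T, W')$.

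Chaining the four equivalences with the hub~(\ref{co:equivGraphons:measure}) yields all pairwise equivalences, proving the theorem. I expect the compactness step for the tree distance to be the main obstacle: unlike $\proDeltaDist$, $\WDeltaDist$, or the MPNN/homomorphism characterizations, $\treeDist$ is defined by an infimum over Markov operators and is not a priori known to metrize the topology of $\pMeasures(\MM)$ on the graphon space, so the pointwise-equality characterization of \citet{boker_graph_2021a} has to be upgraded to a convergence characterization using the cut-distance compactness of $\CW$ and the bound $\treeDist \le \cutDist$; the remaining links are direct applications of results already established above.
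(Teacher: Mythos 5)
Your proposal is correct and takes essentially the same approach as the paper: you use statement~(\ref{co:equivGraphons:measure}) as the hub, link (\ref{co:equivGraphons:prokhorov}) and (\ref{co:equivGraphons:gnn}) to it via \Cref{th:metrics} and \Cref{le:neuralConvergence}, link (\ref{co:equivGraphons:hom}) to it via the Greb\'ik--Rocha result on tree homomorphism density functions, and link (\ref{co:equivGraphons:treedist}) to (\ref{co:equivGraphons:hom}) via B\"oker's result. The one place you add content is that you reconstruct the compactness argument for the tree-distance link (cut-distance compactness of $\CW$, $\treeDist \le \cutDist$, cut-continuity of $t(T,\blank)$, and the rigidity statement $\treeDist(U,V)=0 \iff t(T,U)=t(T,V)$ for all trees $T$), whereas the paper simply cites \citet{boker_graph_2021a} for this step; your reconstruction is correct and is indeed the argument B\"oker uses.
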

\begin{proof}
    (\ref{co:equivGraphons:prokhorov}) and
    (\ref{co:equivGraphons:gnn})
    are equivalent to
    (\ref{co:equivGraphons:measure})
    by \Cref{th:metrics} and \Cref{le:neuralConvergence}.
    (\ref{co:equivGraphons:treedist}) is equivalent to
    (\ref{co:equivGraphons:hom}) by \citep{boker_graph_2021a},
    which in turn is equivalent to (\ref{co:equivGraphons:measure})
    by the result of \citet{grebikFractionalIsomorphismGraphons2021}, cf.\ \Cref{sec:trees}.
\end{proof}

We note that we only state \Cref{co:equivalencesGraphons} for graphons and not
elements of $\pMeasures(\MM)$
since the tree distance of \citep{boker_graph_2021a} is only defined
for graphons and not
for probability measures. We further note that the following non-approximative variant
of \Cref{co:equivalencesGraphons} also holds.
\begin{theorem}
    \label{th:equivalencesZeroGraphons}
    Let $U$ and $W$ be graphons.
    Then, the following are equivalent:
    \begin{enumerate}[noitemsep]
        \item $\proDeltaDist(U, W) = 0$ (or equivalently $\WDeltaDist(U, W) = 0$).
        \item $\treeDist(U, W) = 0$.
        \item $\hb_{U} = \hb_{W}$ for every MPNN model $\phib$ and Lipschitz $\psi \colon \RR^{d_L} \to \RR^n$, where $n > 0$.
        \item $t(T, U) = t(T, W)$ for every tree $T$.
        \item $\nu_{U} = \nu_W$.
    \end{enumerate}
\end{theorem}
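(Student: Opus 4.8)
The plan is to route all equivalences through condition~(5), $\nu_U = \nu_W$, which will serve as the hub. Three of the four resulting equivalences are essentially immediate from results already available. The equivalence of~(1) and~(5) follows from the fact that $\proDeltaDist$ and $\WDeltaDist$ are genuine metrics on $\pMeasures(\MM_\infty)$ by \Cref{th:metrics}, hence separate points; the equivalence of~(4) and~(5) is the non-approximative (``discrete-level'') characterization of DIDM equality by tree homomorphism densities due to \citet{grebikFractionalIsomorphismGraphons2021}, cf.\ \Cref{sec:trees}; and the equivalence of~(2) and~(4) is the corresponding statement $\treeDist(U,W) = 0 \iff t(T,U) = t(T,W)$ for all trees $T$ of \citet{boker_graph_2021a}. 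The only equivalence that requires a short argument of our own is that of~(3) and~(5), which we obtain from our universal approximation theorem, \Cref{th:universalApproximation}.

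First I would spell out~(1)$\Leftrightarrow$(5). By definition $\proDeltaDist(U,W) = \proDeltaDistOf{\infty}(\nu_U, \nu_W)$ and $\WDeltaDist(U,W) = \WDeltaDistOf{\infty}(\nu_U, \nu_W)$, and by \Cref{th:metrics} both $\proDeltaDistOf{\infty}$ and $\WDeltaDistOf{\infty}$ are metrics; a metric vanishes exactly on the diagonal, so $\proDeltaDist(U,W) = 0 \iff \nu_U = \nu_W \iff \WDeltaDist(U,W) = 0$, which also settles the parenthetical equivalence inside~(1). Next,~(5)$\Rightarrow$(3) is immediate from the identity $\hb_W = \hb_{\nu_W}$ of \Cref{sec:definitionsMPNNs}: if $\nu_U = \nu_W$, then $\hb_U = \hb_{\nu_U} = \hb_{\nu_W} = \hb_W$ for every MPNN model $\phib$ and Lipschitz $\psi$. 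For~(3)$\Rightarrow$(5), assume $\hb_U = \hb_W$ for all such $\phib, \psi$ and all output dimensions; specializing to dimension one, this says $F(\nu_U) = F(\nu_W)$ for every $F = \hb_{\blank} \in \neural_\infty^1$. By \Cref{th:universalApproximation} with $L = \infty$, the set $\neural_\infty^1$ is dense in $C(\pMeasures(\MM_\infty), \RR)$, so $F(\nu_U) = F(\nu_W)$ extends to all $F \in C(\pMeasures(\MM_\infty), \RR)$; since $\pMeasures(\MM_\infty)$ is a compact metric space, continuous real-valued functions separate its points, whence $\nu_U = \nu_W$.

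Chaining~(1)$\Leftrightarrow$(5), (3)$\Leftrightarrow$(5), (4)$\Leftrightarrow$(5), and (2)$\Leftrightarrow$(4) then gives the theorem. I do not expect a genuine obstacle here: the statement is the ``$\varepsilon = \delta = 0$'' specialization of \Cref{co:equivalencesGraphons}, and it could alternatively be read off from that proof by replacing the compactness/subsequence argument used there with the observation that a metric is zero precisely on equal points. The only point needing minor care is that an $L$-layer MPNN a priori only ``sees'' the first $L$ rounds of the graphon \wlone{}; this is why \Cref{th:universalApproximation} is invoked at $L = \infty$, where the inverse-limit structure of $\MM = \MM_\infty$ ensures that the MPNN outputs ranging over all finite numbers of layers jointly determine the full DIDM rather than only its finite-level projections $\nu_{W,L}$.
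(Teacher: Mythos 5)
Your proof is correct and takes essentially the same route as the paper: everything is hubbed through condition (5), with (2)$\Leftrightarrow$(4) cited from \citet{boker_graph_2021a}, (1)$\Leftrightarrow$(5) from the metric property, and (4)$\Leftrightarrow$(5) from \citet{grebikFractionalIsomorphismGraphons2021}. The only cosmetic difference is in (3)$\Rightarrow$(5), where you invoke \Cref{th:universalApproximation} directly to conclude that continuous functions on the compact metric space $\pMeasures(\MM)$ agree on $\nu_U$ and $\nu_W$ and hence separate points, while the paper (``follow as in \Cref{co:equivalencesGraphons} since $\pMeasures(\MM)$ is Hausdorff'') reuses \Cref{le:neuralConvergence} together with uniqueness of limits of constant sequences---both of which ultimately rest on the same universality result.
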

\begin{proof}
    (\ref{co:equivGraphons:treedist}) is equivalent to
    (\ref{co:equivGraphons:hom}) by \citep{boker_graph_2021a}.
    The other equivalences follow as in \Cref{co:equivalencesGraphons}
    since $\pMeasures(\MM)$ is Hausdorff.
\end{proof}

Let us now demonstrate how compactness can be used to deduce
\Cref{th:informalDistances} from \Cref{co:equivalencesGraphons}.
For the direction
(\ref{th:informalDistances:proDeltaDist}) $\Rightarrow$ (\ref{th:informalDistances:MPNNs}),
we actually do not need compactness
since \Cref{le:NNsLipschitzIDM} already is a slightly stronger statement.
For the sake of completeness, we nevertheless state it as an epsilon-delta statement.
\begin{theorem}
    \label{th:DistMPNN}
    Let $n > 0$ be fixed.
    For every $L \in \NN$, $C > 0$, and $\varepsilon > 0$,
    there is a $\delta > 0$
    such that, for all graphons $U$ and $W$,
    if $\proDeltaDist(U, W) \le \delta$,
    then $\lVert \hb_U - \hb_W \rVert_2 \le \epsilon$
    for every $L'$-layer MPNN model $\phib$
    and Lipschitz $\psi \colon \RR^{d_{L'}} \to \RR^n$
    with $L' \le L$
    and $C_{(\phib, \psi)} \le C$.
\end{theorem}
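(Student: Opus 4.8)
The plan is to obtain the statement directly from \Cref{le:NNsLipschitz} combined with the fact, recorded in \Cref{th:metrics}, that $\proDeltaDist$ and $\WDeltaDist$ both metrize the topology of the compact metrizable space $\pMeasures(\MM)$. First I would move everything to the level of distributions of iterated degree measures: for a graphon $W$ one has $\hb_W = \hb_{\nu_W}$ with $\nu_W \in \pMeasures(\MM)$, and $\proDeltaDist(U,W)$ and $\WDeltaDist(U,W)$ are by definition $\proDeltaDist(\nu_U,\nu_W)$ and $\WDeltaDist(\nu_U,\nu_W)$. So it suffices to produce, for given $L \in \NN$, $C > 0$, and $\varepsilon > 0$, a $\delta > 0$ such that $\proDeltaDist(\mu,\nu) \le \delta$ forces $\lVert \hb_\mu - \hb_\nu \rVert_2 \le \varepsilon$ for all $\mu,\nu \in \pMeasures(\MM)$ and all $L'$-layer MPNN models $\phib$ and Lipschitz $\psi \colon \RR^{d_{L'}} \to \RR^n$ with $L' \le L$ and $C_{(\phib,\psi)} \le C$.

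Second, the $\infty$-variant of \Cref{le:NNsLipschitz} gives $\lVert \hb_\mu - \hb_\nu \rVert_2 \le C_{(\phib,\psi)} \cdot L' \cdot \WDeltaDist(\mu,\nu) \le C L \cdot \WDeltaDist(\mu,\nu)$ for all such $(\phib,\psi)$; if $CL = 0$ the left-hand side is $0$ and any $\delta$ works, so assume $CL \ge 1$. Hence it is enough to guarantee $\WDeltaDist(\mu,\nu) \le \varepsilon / (CL)$. Here the key point is that $\proDeltaDist$ and $\WDeltaDist$ induce the same topology on $\pMeasures(\MM)$, which is compact; therefore the identity map $(\pMeasures(\MM),\proDeltaDist) \to (\pMeasures(\MM),\WDeltaDist)$ is a continuous map on a compact metric space and thus uniformly continuous. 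Applying uniform continuity with target accuracy $\varepsilon/(CL)$ yields a $\delta > 0$, depending only on $L$, $C$, $\varepsilon$ (and not on the graphons $U$, $W$), with $\proDeltaDist(\mu,\nu) \le \delta \Rightarrow \WDeltaDist(\mu,\nu) \le \varepsilon/(CL)$. Chaining the two steps proves the theorem.

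I do not foresee a real obstacle; as the paper notes, this direction is essentially \Cref{le:NNsLipschitzIDM}. The one subtlety worth flagging is that $\proDistOf{h}$ and $\WDistOf{h}$ are built from \emph{different} base metrics on $\MM$ at every level and, together with the $1/h$ weighting in the definition of $\proDistOf{\infty}$, this makes a direct level-by-level comparison awkward. Routing through the compactness/uniform-continuity argument sidesteps this entirely; a more hands-on alternative would iterate the bound $\WMetric \le 2\,\PMetric$ of \Cref{le:KRProkhorov} through the recursive definitions, but that only yields an explicit $\delta$ with an unpleasant $L$-dependent constant of order $2^{L} L$ and is not needed for an epsilon--delta statement.
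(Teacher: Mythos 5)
Your proof is correct and follows the same route the paper takes: the paper's entire proof is ``Follows immediately from \Cref{le:NNsLipschitzIDM}.'' What you add, and what the paper glosses over, is the bridge between $\proDeltaDist$ in the hypothesis and $\WDeltaDist$ in the Lipschitz bound of \Cref{le:NNsLipschitzIDM}; since the two metrics are built level-by-level from \emph{different} base metrics, the pointwise inequality $\WMetric \le 2\PMetric$ of \Cref{le:KRProkhorov} does not literally give $\WDeltaDist \le 2\,\proDeltaDist$ after iteration, so some argument is needed. Your route via \Cref{th:metrics} (both metrize the compact space $\pMeasures(\MM)$), uniform continuity of the identity, and the reduction $\hb_W = \hb_{\nu_W}$ is clean and correct, and your remark that iterating the $\WMetric \le 2\PMetric$ bound through the levels gives an explicit but exponential constant is also accurate. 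The edge case $CL=0$ is handled correctly. In short: same approach as the paper, with the implicit metric-conversion step made explicit.
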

\begin{proof}
    Follows immediately from \Cref{le:NNsLipschitzIDM}.
\end{proof}

Let us formally prove the converse direction
(\ref{th:informalDistances:MPNNs}) $\Rightarrow$ (\ref{th:informalDistances:proDeltaDist}).

\DistMPNNConverse*
\begin{proof}
    Assume that there is an $\varepsilon > 0$ such that
    such $L \in \NN$, $C > 0$, and $\delta > 0$ do not exist.
    Then, for every $k \ge 0$, there are graphons $U_k$ and $W_k$
    with $\lVert \hb_{U_k} - \hb_{W_k} \rVert_2 \le 1/k$
    for every MPNN model $\phib$
    and Lipschitz $\psi$
    with output dimension $n$, at most $k$ layers
    and $C_{(\phib, \psi)} \le k$
    but also $\proDeltaDist(U_k, W_k) > \varepsilon$.
    By the compactness of the graphon space,
    there are subsequences $(U_{i_k})_k$ and $(W_{i_k})_k$
    of $(U_k)_k$ and $(W_k)_k$ converging to graphons
    $\widetilde{U}$ and $\widetilde{W}$, respectively,
    in the cut distance and, hence, also in the tree distance.
    Let $\phib$ be an $L$-layer MPNN model
    and $\psi \colon \RR^{d_L} \to \RR^n$ be Lipschitz.
    Then, by \Cref{co:equivalencesGraphons},
    also $(\hb_{U_{i_k}})_k$ and $(\hb_{W_{i_k}})_k$ converge to
    $\hb_{\widetilde{U}}$ and $\hb_{\widetilde{W}}$, respectively.
    Hence,
    \begin{align*}
        \lVert \hb_{\widetilde{U}} - \hb_{\widetilde{W}} \rVert_2
        \le \lVert \hb_{\widetilde{U}} - \hb_{{U_{i_k}}} \rVert_2
        + \lVert \hb_{{U_{i_k}}} - \hb_{{W_{i_k}}} \rVert_2
        + \lVert \hb_{{W}_{i_k}} - \hb_{\widetilde{W}} \rVert_2
        \xrightarrow{k \to \infty} 0
    \end{align*}
    by the assumption, i.e.,
    $\hb_{\widetilde{U}} = \hb_{\widetilde{W}}$.
    Since this holds for every MPNN model and Lipschitz $\psi$, we
    have $\proDeltaDist(\widetilde{U}, \widetilde{W}) = 0$
    by \Cref{th:equivalencesZeroGraphons}.
    Then, however
    \begin{align*}
        \proDeltaDist({U}_{i_k}, {W}_{i_k})
        \le \proDeltaDist({U}_{i_k}, \widetilde{U})
        + \proDeltaDist(\widetilde{U}, \widetilde{W})
        + \proDeltaDist(\widetilde{W}, {W}_{i_k})
        \xrightarrow{k \rightarrow \infty} 0
    \end{align*}
    since $(U_{i_k})_k$ and $(W_{i_k})_k$ converge to
    $\widetilde{U}$ and $\widetilde{W}$, respectively,
    also in $\proDeltaDist$
    by \Cref{co:equivalencesGraphons}.
    This contradicts the assumption that
    $\proDeltaDist(U_k, W_k) > \varepsilon$ for every $k \ge 0$.
\end{proof}

In the above theorem, we could simply replace $\proDeltaDist$
by $\WDeltaDist$ or $\treeDist$;
the proof is analogous.
We can also state the equivalence of these metrics explicitly, e.g.,
graphons that are close
in $\proDeltaDist$ are close in $\treeDist$ and vice versa.

\begin{theorem}
    \label{th:proDeltaCloseTreeClose}
    For every $\varepsilon > 0$,
    there is a $\delta > 0$ such that,
    for all graphons $U$ and $W$,
    if $\proDeltaDist(U, W) \le \delta$,
    then $\treeDist(U, W) \le \varepsilon$.
\end{theorem}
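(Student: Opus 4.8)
The plan is to argue by contradiction using a compactness argument, entirely parallel to the proof of \Cref{th:DistMPNNConverse} (indeed, this statement is the explicit version of the remark that in that theorem one may replace $\proDeltaDist$ by $\treeDist$). Suppose the claim fails: there is an $\varepsilon > 0$ such that for every $k \ge 1$ there are graphons $U_k, W_k$ with $\proDeltaDist(U_k, W_k) \le 1/k$ but $\treeDist(U_k, W_k) > \varepsilon$. By the compactness of the space of graphons in the cut distance \citep[Theorem $9.23$]{Lovasz2012}, I would pass to subsequences $(U_{i_k})_k$ and $(W_{i_k})_k$ of $(U_k)_k$ and $(W_k)_k$ converging in the cut distance to graphons $\widetilde U$ and $\widetilde W$, respectively.

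The key step is to transfer this cut-distance convergence into convergence in both $\proDeltaDist$ and $\treeDist$. Since cut-distance convergence implies convergence of homomorphism densities $t(T, U_{i_k}) \rightarrow t(T, \widetilde U)$ for every graph $T$, in particular every tree, \Cref{co:equivalencesGraphons} applied to the sequence $(U_{i_k})_k$ yields $\proDeltaDist(U_{i_k}, \widetilde U) \rightarrow 0$ and $\treeDist(U_{i_k}, \widetilde U) \rightarrow 0$, and analogously $\proDeltaDist(W_{i_k}, \widetilde W) \rightarrow 0$ and $\treeDist(W_{i_k}, \widetilde W) \rightarrow 0$.

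Now I would invoke the triangle inequality for $\proDeltaDist$ together with $\proDeltaDist(U_{i_k}, W_{i_k}) \le 1/i_k \rightarrow 0$ to get
\[
    \proDeltaDist(\widetilde U, \widetilde W)
    \le \proDeltaDist(\widetilde U, U_{i_k}) + \proDeltaDist(U_{i_k}, W_{i_k}) + \proDeltaDist(W_{i_k}, \widetilde W)
    \xrightarrow{k \to \infty} 0,
\]
hence $\proDeltaDist(\widetilde U, \widetilde W) = 0$. By the non-approximative equivalence \Cref{th:equivalencesZeroGraphons}, this forces $\treeDist(\widetilde U, \widetilde W) = 0$, and then the triangle inequality for $\treeDist$ gives
\[
    \treeDist(U_{i_k}, W_{i_k})
    \le \treeDist(U_{i_k}, \widetilde U) + \treeDist(\widetilde U, \widetilde W) + \treeDist(\widetilde W, W_{i_k})
    \xrightarrow{k \to \infty} 0,
\]
contradicting $\treeDist(U_k, W_k) > \varepsilon$ for all $k$.

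\textbf{Main obstacle.} The logical skeleton is routine once \Cref{co:equivalencesGraphons} and \Cref{th:equivalencesZeroGraphons} are in hand; the only points requiring care are the building blocks feeding into them. First, one needs that the cut distance dominates the relevant notions of convergence so that cut-convergent sequences can be run through \Cref{co:equivalencesGraphons} — this is exactly the classical fact from dense graph limit theory that cut-distance convergence implies homomorphism-density convergence (in particular for all trees). Second, one needs $\treeDist$ to satisfy the triangle inequality, which holds since it is a pseudometric \citep{boker_graph_2021a}. With these in place, the argument is identical in structure to the compactness scheme already used for \Cref{th:DistMPNNConverse}.
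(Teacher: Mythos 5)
Your proof is correct and follows essentially the same approach as the paper: the same contradiction setup, the same compactness argument passing to cut-convergent subsequences, the same transfer of cut-distance convergence into $\proDeltaDist$ and $\treeDist$ via \Cref{co:equivalencesGraphons}, the same application of \Cref{th:equivalencesZeroGraphons} to conclude $\treeDist(\widetilde U, \widetilde W) = 0$, and the same final contradiction by the triangle inequality. You merely spell out the displayed triangle-inequality steps slightly more explicitly.
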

\begin{proof}
    Assume that there is a $\varepsilon > 0$ such that
    such a $\delta> 0$ does not exist.
    Then, for every $k \ge 0$, there are graphons $U_k$ and $W_k$
    with $\proDeltaDist(U_k, W_k) \le 1/k$
    but $\treeDist(U_k, W_k) > \varepsilon$.
    By the compactness of the graphon space,
    there are subsequences $(U_{i_k})_k$ and $(W_{i_k})_k$
    of $(U_k)_k$ and $(W_k)_k$ weakly converging to graphons
    $\widetilde{U}$ and $\widetilde{W}$, respectively,
    in the cut distance and, hence, also in the tree distance,
    which in turn by \Cref{co:equivalencesGraphons} means that
    they also converge in $\proDeltaDist$.
    Combining this with the assumption yields that
    $\proDeltaDist(\widetilde{U}, \widetilde{W}) = 0$, which implies
    $\treeDist(\widetilde{U}, \widetilde{W}) = 0$
    by \Cref{th:equivalencesZeroGraphons}.
    This, however, is a contradiction to
    $\treeDist(U_k, W_k) > \varepsilon$
    since $(U_{i_k})_k$ and $(W_{i_k})_k$ converge to
    $\widetilde{U}$ and $\widetilde{W}$, respectively,
    in $\treeDist$.
\end{proof}

\begin{theorem}
    For every $\varepsilon > 0$,
    there is an $\delta > 0$ such that,
    for all graphons $U$ and $W$,
    if $\treeDist(U, W) \le \delta$,
    then $\proDeltaDist(U, W) \le \varepsilon$.
\end{theorem}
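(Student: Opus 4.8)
The plan is to argue by contradiction and compactness, mirroring the proof of \Cref{th:proDeltaCloseTreeClose} with the roles of the two metrics interchanged. Suppose the statement fails for some $\varepsilon > 0$. Then for every $k \ge 1$ there are graphons $U_k$ and $W_k$ with $\treeDist(U_k, W_k) \le 1/k$ but $\proDeltaDist(U_k, W_k) > \varepsilon$.

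By the compactness of the graphon space in the cut distance \citep[Theorem~$9.23$]{Lovasz2012}, I would pass to subsequences $(U_{i_k})_k$ and $(W_{i_k})_k$ converging in $\cutDist$ to graphons $\widetilde{U}$ and $\widetilde{W}$, respectively. Since the tree distance is bounded above by the cut distance, these subsequences also converge in $\treeDist$ to $\widetilde{U}$ and $\widetilde{W}$; hence, using that $\treeDist$ satisfies the triangle inequality together with $\treeDist(U_{i_k}, W_{i_k}) \to 0$, one gets $\treeDist(\widetilde{U}, \widetilde{W}) = 0$. By \Cref{th:equivalencesZeroGraphons}, this already yields $\proDeltaDist(\widetilde{U}, \widetilde{W}) = 0$.

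On the other hand, \Cref{co:equivalencesGraphons} tells us that convergence in the cut distance (equivalently, in $\treeDist$) implies convergence in $\proDeltaDist$, so $\proDeltaDist(U_{i_k}, \widetilde{U}) \to 0$ and $\proDeltaDist(W_{i_k}, \widetilde{W}) \to 0$. Combining these with $\proDeltaDist(\widetilde{U}, \widetilde{W}) = 0$ and the triangle inequality gives $\proDeltaDist(U_{i_k}, W_{i_k}) \to 0$, contradicting $\proDeltaDist(U_{i_k}, W_{i_k}) > \varepsilon$ for every $k$. This proves the claim, and I would remark that $\proDeltaDist$ could be replaced by $\WDeltaDist$ throughout, the argument being identical.

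The step I expect to require the most care is making sure every link in the chain of convergences is available in the needed generality: that cut-distance convergence passes to $\treeDist$-convergence (from the remark that $\treeDist$ lower-bounds $\cutDist$), that $\treeDist$ is at least a pseudometric so the triangle inequality applies, and---crucially---that $\treeDist(\widetilde{U}, \widetilde{W}) = 0$ already forces $\proDeltaDist(\widetilde{U}, \widetilde{W}) = 0$, which is exactly the content of \Cref{th:equivalencesZeroGraphons} and ultimately rests on \citet{boker_graph_2021a} and \citet{grebikFractionalIsomorphismGraphons2021}. Everything else is the same compactness boilerplate used throughout \Cref{sec:equivalence}.
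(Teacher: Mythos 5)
Your proposal is correct and takes essentially the same route as the paper, which proves this statement by declaring it ``\emph{Analogous to \Cref{th:proDeltaCloseTreeClose}}'' --- i.e., exactly the contradiction-and-compactness argument you spell out, with the roles of $\treeDist$ and $\proDeltaDist$ swapped, passing through cut-distance compactness, \Cref{co:equivalencesGraphons}, and \Cref{th:equivalencesZeroGraphons}. Your extra care about $\treeDist$ lower-bounding $\cutDist$ and the triangle inequality for $\treeDist$ is just an unpacking of what the paper leaves implicit.
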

\begin{proof}
    Analogous to \Cref{th:proDeltaCloseTreeClose}.
\end{proof}

The equivalence between the
tree distance and tree homomorphism densities was proven in \citep{boker_graph_2021a}.
Let us state them here for the sake of completeness.
Note that the statements are essentially the same as \Cref{th:DistMPNN}
and \Cref{th:DistMPNNConverse}, where the constant $C$ and number of layers $L$
is replaced by the order $k$ of the trees.

\begin{theorem}[{\cite[Corollary $20$]{boker_graph_2021a}}]
    For every tree $T$ and every $\varepsilon > 0$, there is a $\delta > 0$
    such that, for all graphons $U$ and $W$, if $\treeDist(U, W) \le \delta$,
    then $\lvert t(T, U) - t(T, W) \rvert \le \varepsilon$.
\end{theorem}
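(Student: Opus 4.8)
The plan is to run the same compactness-and-contradiction scheme already used for \Cref{th:proDeltaCloseTreeClose}, now with tree homomorphism densities playing the role of one of the metrics. Fix a tree $T$ and suppose, towards a contradiction, that there is an $\varepsilon > 0$ for which no admissible $\delta > 0$ exists. Then for each $k \ge 1$ we may pick graphons $U_k$ and $W_k$ with $\treeDist(U_k, W_k) \le 1/k$ yet $\lvert t(T, U_k) - t(T, W_k) \rvert > \varepsilon$.

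The next step is to extract limits. Since the space of graphons is compact in the cut distance \citep[Theorem~$9.23$]{Lovasz2012}, we may pass to subsequences $(U_{i_k})_k$ and $(W_{i_k})_k$ converging in $\cutDist$ to graphons $\widetilde U$ and $\widetilde W$, respectively. Because the tree distance is a lower bound for the cut distance, these subsequences also converge to $\widetilde U$ and $\widetilde W$ in $\treeDist$, and combining this with $\treeDist(U_{i_k}, W_{i_k}) \le 1/i_k \to 0$ and the triangle inequality yields $\treeDist(\widetilde U, \widetilde W) = 0$. By \Cref{th:equivalencesZeroGraphons}, this gives $t(T, \widetilde U) = t(T, \widetilde W)$. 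On the other hand, applying \Cref{co:equivalencesGraphons} to the convergent subsequences $(U_{i_k})_k$ and $(W_{i_k})_k$ shows $t(T, U_{i_k}) \to t(T, \widetilde U)$ and $t(T, W_{i_k}) \to t(T, \widetilde W)$; hence $\lvert t(T, U_{i_k}) - t(T, W_{i_k}) \rvert \to \lvert t(T, \widetilde U) - t(T, \widetilde W) \rvert = 0$, contradicting $\lvert t(T, U_k) - t(T, W_k) \rvert > \varepsilon$ for all $k$.

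The argument is routine once this machinery is in place, so there is no serious technical obstacle; the only point worth flagging is that it is not self-contained, as it reduces the claim to \Cref{th:equivalencesZeroGraphons} and \Cref{co:equivalencesGraphons}, which in turn rest on \citet{boker_graph_2021a} and \citet{grebikFractionalIsomorphismGraphons2021}. A genuinely independent proof (the one underlying \citet{boker_graph_2021a}) would instead argue directly from the definition of $\treeDist$ via Markov operators: from $\treeDist(U,W) = 0$ one obtains a Markov operator $S$ with $T_U \circ S = S \circ T_W$, and then a structural induction on $T$ shows that the recursively defined potential functions of $T$ in $U$ and in $W$ are matched by $S$; since a Markov operator preserves integrals, the root integrals --- which are exactly $t(T,U)$ and $t(T,W)$ --- coincide, and tracking this bookkeeping quantitatively would even produce the $\delta$ explicitly rather than through compactness.
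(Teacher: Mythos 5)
The paper offers no proof of this statement: it is quoted verbatim as Corollary~20 of \citet{boker_graph_2021a}, so there is no in-paper argument to compare against. Your compactness-and-contradiction derivation is correct and mirrors the scheme the paper itself uses for \Cref{th:DistMPNNConverse} and \Cref{th:proDeltaCloseTreeClose}: extract cut-distance-convergent subsequences, pass to the limit to get $\treeDist(\widetilde U,\widetilde W)=0$, invoke the exact-case characterization, and contradict. Two of your citations deserve a second look. Appealing to \Cref{co:equivalencesGraphons} to conclude $t(T,U_{i_k})\to t(T,\widetilde U)$ from cut-distance convergence is technically circular, since the equivalence of items~$2$ and~$4$ in that corollary is precisely the pair of Böker corollaries, half of which is the theorem in question; the non-circular route is the Lovász counting lemma (cut-distance convergence implies convergence of every homomorphism density, trees included), or equivalently continuity of $W\mapsto\nu_W$ in the cut distance followed by \Cref{le:homConvergence}, both of which are independent of \citet{boker_graph_2021a}. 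The appeal to \Cref{th:equivalencesZeroGraphons} is, by contrast, legitimate: the zero-distance characterization is the qualitative theorem of Böker, not the quantitative corollary you are deriving, so no circularity arises there. Your closing sketch of a direct Markov-operator argument is accurate and reflects how \citet{boker_graph_2021a} handles the structural core; the compactness route simply trades that quantitative bookkeeping for a non-constructive $\delta$, which is why it is the standard device for turning exact characterizations into $\varepsilon$-$\delta$ statements over a compact space.
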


\begin{theorem}[{\cite[Corollary $21$]{boker_graph_2021a}}]
    For every $\varepsilon > 0$, there are $k > 0$ and $\delta > 0$ such that,
    for all graphons $U$ and $W$, if $\lvert t(T, U) - t(T, W) \rvert$
    for every tree $T$ on $k$ vertices, then $\treeDist(U, W) \le \varepsilon$.
\end{theorem}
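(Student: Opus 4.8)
The plan is to prove this by a compactness argument, essentially identical to the proof of \Cref{th:DistMPNNConverse} (and \Cref{th:proDeltaCloseTreeClose}). First I would note that the hypothesis should read ``if $\lvert t(T, U) - t(T, W) \rvert \le \delta$ for every tree $T$ on $k$ vertices''. Assume, for contradiction, that the statement fails for some $\varepsilon > 0$, i.e., for every $k \ge 1$ and every $\delta > 0$ there are graphons violating the conclusion. Instantiating this with $k = n$ and $\delta = 1/n$ for each $n \ge 1$ yields graphons $U_n$ and $W_n$ with $\lvert t(T, U_n) - t(T, W_n) \rvert \le 1/n$ for every tree $T$ on at most $n$ vertices, but $\treeDist(U_n, W_n) > \varepsilon$.

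By the compactness of the graphon space in the cut distance~\citep[Theorem~$9.23$]{Lovasz2012}, I would pass to subsequences $(U_{i_n})_n$ and $(W_{i_n})_n$ converging to graphons $\widetilde{U}$ and $\widetilde{W}$, respectively, in $\cutDist$. Cut-distance convergence implies convergence of the homomorphism densities $t(F, \blank)$ for all graphs $F$ and, since $\treeDist \le \cutDist$, also convergence in the tree distance; alternatively, both follow from \Cref{co:equivalencesGraphons}. Now fix an arbitrary tree $T$, say on $m$ vertices. For all $n$ with $i_n \ge m$, we have $\lvert t(T, U_{i_n}) - t(T, W_{i_n}) \rvert \le 1/i_n$, so letting $n \to \infty$ and using the density convergence gives $t(T, \widetilde{U}) = t(T, \widetilde{W})$. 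Since $T$ was arbitrary, this holds for every tree, and hence $\treeDist(\widetilde{U}, \widetilde{W}) = 0$ by \Cref{th:equivalencesZeroGraphons}.

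Finally, the triangle inequality for $\treeDist$ together with the tree-distance convergence of the two subsequences gives
\begin{equation*}
    \treeDist(U_{i_n}, W_{i_n})
    \le \treeDist(U_{i_n}, \widetilde{U})
        + \treeDist(\widetilde{U}, \widetilde{W})
        + \treeDist(\widetilde{W}, W_{i_n})
    \xrightarrow{n \to \infty} 0,
\end{equation*}
contradicting $\treeDist(U_n, W_n) > \varepsilon$ for all $n$, which finishes the proof. The only point requiring genuine care is the diagonalization: one must choose $k = n$ and $\delta = 1/n$ so that the order $m$ of any \emph{fixed} tree is eventually dominated by $i_n$, which is exactly what lets the per-tree estimate $1/i_n$ pass to the limit. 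All other ingredients — compactness of $\CW$ in $\cutDist$, continuity of $t(T, \blank)$ with respect to the cut metric, and the zero-distance characterization of \Cref{th:equivalencesZeroGraphons} — are already available, so no new machinery is needed.
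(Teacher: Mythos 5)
Your proof is correct and uses exactly the compactness argument the paper deploys for the analogous epsilon--delta statements (\Cref{th:DistMPNNConverse} and \Cref{th:proDeltaCloseTreeClose}); the paper itself only cites B\"oker's Corollary~21 here rather than proving it, and indeed remarks after \Cref{th:DistMPNNConverse} that B\"oker proved the tree-distance/tree-homomorphism equivalence by the same compactness technique, which is precisely what you have reconstructed. Your reading of the hypothesis as ``$\lvert t(T,U) - t(T,W)\rvert \le \delta$ for all trees $T$ on at most $k$ vertices'' and the diagonalization $k=n$, $\delta = 1/n$ so that any fixed tree's order is eventually dominated by $i_n$ are both exactly right.
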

\clearpage{}
\clearpage{}\section{Tree homomorphism densities}
\label{sec:trees}

We briefly repeat the proof of \citeauthor{grebikFractionalIsomorphismGraphons2021} \cite{grebikFractionalIsomorphismGraphons2021}
for a universal approximation theorem for (linear combinations)
of tree homomorphism density functions.
The purpose of this is twofold.
First, it serves as a reference since we use this result for a part of the proof of \Cref{th:informalDistances}.
Secondly, however, it illustrates the striking similarities between tree homomorphism densities and MPNNs.
Not only do both tree homomorphism densities and MPNNs satisfy
a universal approximation theorem,
but the theorem statements and proofs
in this section for tree homomorphism densities and
in \Cref{sec:missingProofsUniversalityMPNNs} for MPNNs
are nearly identical.
Hence, tree homomorphism densities and MPNNs are two different
ways of arriving at the same end goal, a set of functions that
has certain properties and, in particular,
describes the similarity of graphons modulo the \wlone{} test.

The following definition is implicit in \citet{grebikFractionalIsomorphismGraphons2021}.
For a rooted tree $T$ of height at most $h \in \NN$, we inductively
define the \textit{homomorphism density function} $t_h(T, \cdot) \colon \MM_h \to [0,1]$
by letting
$t_h(T, \cdot) \coloneqq \bm{1}_{\MM_h}$ if the height of $T$ is zero and
\begin{equation*}
    t_h(T, \alpha) \coloneqq \Bigl(\int_{\MM_{h-1}} t_{h-1}(T_1, \cdot) \, d \alpha\Bigr) \cdot \ldots \cdot \Bigl(\int_{\MM_{h-1}} t_{h-1}(T_k, \cdot) \, d \alpha\Bigr)
\end{equation*}
for every $\alpha \in \MM_h$ if $h > 0$,
where $T_1, \dots, T_k$ are the trees rooted in the children of the root of $T$.
Intuitively, $t_h(T, \alpha)$ is the homomorphism density of $T$ when the root of $T$ is mapped to a point of color $\alpha$.
Additionally, we define $t_\infty(T, \cdot) \colon \MM \to [0,1]$ by
$t_\infty(T, \cdot) \coloneqq t_h(T, \cdot) \circ p_{\infty, h}$.
One can show that this is well defined, i.e., that the definition is independent of $h$,
by a simple induction.
\begin{lemma}
    \label{obs:treeProjection}
    Let $T$ be a rooted tree of height at most $h$.
    Then, $t_{h+1}(T, \cdot) = t_{h}(T, \cdot) \circ p_{h+1, h}$.
\end{lemma}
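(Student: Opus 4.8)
\textbf{Proof plan for \Cref{obs:treeProjection}.}
The plan is to prove the statement by induction on the height of the rooted tree $T$, mirroring the structure of the inductive definition of $t_h(T, \cdot)$. The base case is when $T$ has height zero: then $t_{h+1}(T, \cdot) = \bm{1}_{\MM_{h+1}}$ and $t_h(T, \cdot) = \bm{1}_{\MM_h}$ by definition, and since $\bm{1}_{\MM_h} \circ p_{h+1,h} = \bm{1}_{\MM_{h+1}}$ (the projection is a total map into $\MM_h$), the equality holds.

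For the inductive step, suppose $T$ has height $1 \le m \le h+1$, with $T_1, \dots, T_k$ the subtrees rooted at the children of the root; each $T_i$ then has height at most $m-1 \le h$. Fix $\alpha \in \MM_{h+1}$; by definition $p_{h+1,h}(\alpha) = (p_{h,h-1})_* \alpha \in \MM_h$. Expanding the definition of $t_{h+1}(T, \alpha)$ as a product of $k$ integrals $\int_{\MM_h} t_h(T_i, \cdot)\, d\alpha$, I would apply the induction hypothesis to each $T_i$, which gives $t_h(T_i, \cdot) = t_{h-1}(T_i, \cdot) \circ p_{h,h-1}$. Substituting this in, each factor becomes $\int_{\MM_h} (t_{h-1}(T_i, \cdot) \circ p_{h,h-1})\, d\alpha$, and by the change-of-variables formula for push-forward measures (i.e., $\int g \circ f \, d\mu = \int g\, d(f_*\mu)$) this equals $\int_{\MM_{h-1}} t_{h-1}(T_i, \cdot)\, d(p_{h,h-1})_*\alpha = \int_{\MM_{h-1}} t_{h-1}(T_i, \cdot)\, d\,p_{h+1,h}(\alpha)$. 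Taking the product over $i$ and comparing with the definition of $t_h(T, p_{h+1,h}(\alpha))$ yields exactly $t_{h+1}(T, \alpha) = t_h(T, p_{h+1,h}(\alpha))$, which is the desired identity.

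There is essentially no serious obstacle here; the argument is a routine bookkeeping induction. The one point requiring a small amount of care is ensuring the push-forward change-of-variables identity applies — this needs $t_{h-1}(T_i, \cdot)$ to be measurable (indeed continuous, hence bounded) on the compact space $\MM_{h-1}$, which holds since these are continuous $[0,1]$-valued functions by construction, and $p_{h,h-1}$ is continuous hence measurable. The only other thing worth double-checking is that the case $m = h+1$ is not special: the subtrees have height at most $h$, so the induction hypothesis (which concerns trees of height at most $h$, comparing $t_h$ and $t_{h-1}$) is applicable to each $T_i$, and the recursion bottoms out cleanly.
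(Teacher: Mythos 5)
Your proof is correct and takes exactly the approach the paper has in mind when it dismisses this lemma with ``by a simple induction'': induct on the height of $T$, unfold the product-of-integrals definition of $t_{h+1}(T,\alpha)$, apply the induction hypothesis to each subtree, and use the push-forward change of variables $\int (g \circ f)\, d\mu = \int g \, d(f_*\mu)$ to fold things back into $t_h(T, p_{h+1,h}(\alpha))$.

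One small correction to your closing remark: the case $m = h+1$ is \emph{not} ``not special'' in the way you claim. If some subtree $T_i$ has height exactly $h$, then $t_{h-1}(T_i, \cdot)$ is not even defined --- the definition of $t_{h'}(T', \cdot)$ requires $T'$ to have height at most $h'$ --- so the identity $t_h(T_i, \cdot) = t_{h-1}(T_i, \cdot) \circ p_{h,h-1}$ is not an available instance of the induction hypothesis. This costs you nothing, however, because the lemma already restricts $T$ to height $m \le h$; each $T_i$ then has height at most $h-1$, which is precisely the range where $t_{h-1}(T_i, \cdot)$ is defined and the IH kicks in. So simply drop the $m = h+1$ case from your write-up rather than asserting that it works.
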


Let $0 \le h \le \infty$.
For a rooted tree $T$ of height at most $h$
and $\nu \in \pMeasures(\MM_h)$,
define $t(T, \nu) \coloneqq \int_{\MM_h} t_h(T, \cdot) \, d\nu$.
For the DIDM $\nu_{W, h}$ associated to a graphon $W$,
this then simply equals the homomorphism density of $T$ in $W$
when viewing $T$ as unrooted.

\begin{lemma}[{\cite[Proposition $7.3$]{grebikFractionalIsomorphismGraphons2021}}]
    Let $0 \le h \le \infty$,
    let $T$ be a rooted tree of height at most $h$, and
    let $W \in \CW$ be a graphon.
    Then,
    \begin{equation*}
        t(T, W) = t(T, \nu_{W,h}) = t(T, \nu_W),
    \end{equation*}
    where $T$ in the expression $t(T,W)$ is treated as unrooted.
\end{lemma}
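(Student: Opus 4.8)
The plan is to mirror the proof of the MPNN equivalence (\Cref{claim:MPNN_equivalence} and the theorem following it), replacing the MPNN feature maps $\hb^{\tup{t}}_{\blank}$ by the rooted tree homomorphism density functions $t_h(T,\cdot)$. The key auxiliary object is a ``rooted'' density: for a rooted tree $T$ with root $r$ and a point $x \in [0,1]$, set
\[
    t_x(T, W) \coloneqq \int_{[0,1]^{V(T) \setminus \{r\}}} \prod_{ij \in E(T)} W(\bar y_i, \bar y_j) \, d\bar y,
\]
where in the product $\bar y_r$ is read as $x$; this exists for almost every $x$ by Fubini, and $\int_{[0,1]} t_x(T, W)\, d\lambda(x) = t(T,W)$ when $T$ is treated as unrooted. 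The heart of the argument is the claim that $t_x(T, W) = t_h(T, \mathsf{i}_{W,h}(x))$ for every rooted tree $T$ of height at most $h$ and almost every $x \in [0,1]$, which I would prove by induction on the height of $T$ (for fixed finite $h$).

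For the induction, the base case ($T$ of height $0$) is immediate since both sides equal $1$: the empty product on the one hand and $\bm{1}_{\MM_h}$ on the other. For the step, let $r$ have children $c_1, \dots, c_k$ with subtrees $T_1, \dots, T_k$, each of height at most $h-1$. Since the vertex sets $V(T_1), \dots, V(T_k)$ are pairwise disjoint and $r$ is their only common vertex in $T$, a multivariable application of Fubini gives
\[
    t_x(T, W) = \prod_{j=1}^k \int_{[0,1]} W(x, y) \, t_y(T_j, W) \, d\lambda(y).
\]
Applying the induction hypothesis to replace $t_y(T_j, W)$ by $t_{h-1}(T_j, \mathsf{i}_{W,h-1}(y))$ for a.e.\ $y$, and then invoking \Cref{claim:repeated_int} with $\delta = W(x, \blank)$ together with the identity $\mathsf{i}_{W,h}(x) = (\mathsf{i}_{W,h-1})_* \nu_{W(x, \blank)}$ established inside the proof of \Cref{claim:MPNN_equivalence}, each factor becomes $\int_{\MM_{h-1}} t_{h-1}(T_j, \blank) \, d\mathsf{i}_{W,h}(x)$. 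Their product is exactly $t_h(T, \mathsf{i}_{W,h}(x))$ by the defining recursion for $t_h(T, \cdot)$. This is the same computation as the inductive step of \Cref{claim:MPNN_equivalence}, only with a product of $k$ integrals in place of a single outer application of $\varphi_t$.

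Given the claim, the first equality follows by pushing forward along $\mathsf{i}_{W,h}$:
\[
    t(T, \nu_{W,h}) = \int_{\MM_h} t_h(T, \cdot) \, d(\mathsf{i}_{W,h})_*\lambda = \int_{[0,1]} t_h(T, \mathsf{i}_{W,h}(x)) \, d\lambda(x) = \int_{[0,1]} t_x(T, W) \, d\lambda(x) = t(T, W).
\]
For the second equality, write $t(T, \nu_W) = \int_{\MM} t_\infty(T, \cdot) \, d\nu_W$, pick any finite $h$ at least the height of $T$ so that $t_\infty(T, \cdot) = t_h(T, \cdot) \circ p_{\infty, h}$ by \Cref{obs:treeProjection}, and observe $(p_{\infty, h})_* \nu_W = (p_{\infty, h} \circ \mathsf{i}_W)_* \lambda = (\mathsf{i}_{W,h})_* \lambda = \nu_{W,h}$; hence $t(T, \nu_W) = \int_{\MM_h} t_h(T, \cdot) \, d\nu_{W,h} = t(T, \nu_{W,h})$. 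I expect the main obstacle to be purely bookkeeping: carrying out the multivariable Fubini that factors the rooted density over the $k$ subtrees cleanly, and keeping the change of variables of \Cref{claim:repeated_int} straight between $\lambda$, $\nu_{W(x,\blank)}$, and the push-forward $\mathsf{i}_{W,h}(x)$. The $h = \infty$ case requires no extra work beyond the reduction via \Cref{obs:treeProjection}.
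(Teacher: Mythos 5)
Your proof is correct. The paper itself does not prove this lemma -- it cites it verbatim as Proposition~7.3 of \citet{grebikFractionalIsomorphismGraphons2021} -- so there is no in-paper proof to compare against; but your argument mirrors the paper's own proof of \Cref{claim:MPNN_equivalence} exactly in the right way, with the multivariable Fubini factorization over the subtrees $T_1,\dots,T_k$ playing the role that $\varphi_t$ played for MPNNs, and the same change of variables via \Cref{claim:repeated_int} and $\mathsf{i}_{W,h}(x)=(\mathsf{i}_{W,h-1})_*\nu_{W(x,\blank)}$ closing the loop. The reduction of $h=\infty$ to finite $h$ via \Cref{obs:treeProjection} and $(p_{\infty,h})_*\nu_W=\nu_{W,h}$ is also right.

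One small wrinkle: you announce ``induction on the height of $T$ (for fixed finite $h$),'' but in the inductive step you rewrite $t_y(T_j,W)$ as $t_{h-1}(T_j,\mathsf{i}_{W,h-1}(y))$, which involves level $h-1$, not level $h$. An induction hypothesis quantified only over smaller trees at the \emph{same} level $h$ would give you $t_y(T_j,W)=t_h(T_j,\mathsf{i}_{W,h}(y))$, and you would then need the extra step $t_h(T_j,\mathsf{i}_{W,h}(y))=t_{h-1}(T_j,\mathsf{i}_{W,h-1}(y))$, which requires invoking \Cref{obs:treeProjection} together with the projection compatibility $p_{h,h-1}\circ\mathsf{i}_{W,h}=\mathsf{i}_{W,h-1}$ (implicit in the well-definedness of $\mathsf{i}_W$). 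The cleaner fix is to induct on $h$ itself, proving for each $h\in\NN$ the statement ``for all rooted $T$ of height at most $h$, $t_x(T,W)=t_h(T,\mathsf{i}_{W,h}(x))$ for a.e.\ $x$,'' exactly as the paper does for \Cref{claim:MPNN_equivalence}; then the inductive step lands directly at level $h-1$ and no projection compatibility is needed. This is bookkeeping, not a substantive gap.
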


For $0 \le h \le \infty$, let
\begin{equation*}
    \CT_h \coloneqq \{ t_h(T, \cdot) \mid \text{$T$ rooted tree of height at most $h$}\} \subseteq C(\MM_h, \RR),
\end{equation*}
and let $\CT_h'$ be the set of linear combinations of functions in $\CT_h$.
The set $\CT_h$ is not closed under linear combinations,
which is why we have to consider the set $\CT_h'$
to obtain a sub-algebra of $C(\MM_h, \RR)$ to which the
Stone--Weierstrass theorem is then applicable.
In fact, $\CT_h$ is clearly not dense in $C(\MM_h,\RR)$
since the codomain of every function in $\CT_h$ is $[0,1]$.
Other than that, the proof of the following lemma
is identical to the proof
of \Cref{le:vertexNeuralWeierstrass}.

\begin{lemma}[{\citep[Proposition $7.5$]{grebikFractionalIsomorphismGraphons2021}}]
    \label{le:treeWeierstrass}
    Let $0 \le h \le \infty$.
    The set $\CT_h$ is closed under multiplication,
    contains $\bm{1}_{\MM_h}$, and separates points of $\MM_h$.
\end{lemma}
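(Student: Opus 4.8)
The plan is to mirror the structure of the proof of \Cref{le:vertexNeuralWeierstrass}, carrying out an induction on $h$ that simultaneously establishes the three claimed properties of $\CT_h$: closure under multiplication, membership of $\bm{1}_{\MM_h}$, and separation of points. For the base case $h = 0$, the space $\MM_0 = \{1\}$ is a single point, and $\CT_0$ consists of exactly the function $\bm{1}_{\MM_0}$ (the only rooted tree of height zero being the single-vertex tree), so all three properties are immediate. For the inductive step with $h > 0$, membership of $\bm{1}_{\MM_h}$ holds because the single-vertex rooted tree has $t_h(T, \cdot) = \bm{1}_{\MM_h}$ by definition. Closure under multiplication follows directly from the recursive definition of $t_h$: given two rooted trees $T$ and $T'$ of height at most $h$ with children-subtrees $T_1, \dots, T_k$ and $T_1', \dots, T_{k'}'$ respectively, the product $t_h(T, \cdot) \cdot t_h(T', \cdot)$ is precisely $t_h(T'', \cdot)$ where $T''$ is the rooted tree whose root has children-subtrees $T_1, \dots, T_k, T_1', \dots, T_{k'}'$; one must note these all have height at most $h - 1$, so $T''$ has height at most $h$.

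The one genuinely substantive step is point separation. Let $\alpha, \beta \in \MM_h = \measures(\MM_{h-1})$ with $\alpha \neq \beta$. By the induction hypothesis, $\CT_{h-1}$ is closed under multiplication, contains $\bm{1}_{\MM_{h-1}}$, and separates points of $\MM_{h-1}$; passing to the set $\CT_{h-1}'$ of linear combinations gives a subalgebra of $C(\MM_{h-1}, \RR)$ that separates points and contains the constants, so by the Stone--Weierstrass theorem $\CT_{h-1}'$ is dense in $C(\MM_{h-1}, \RR)$. Since $\alpha \neq \beta$ as measures on the compact metric space $\MM_{h-1}$, there is some $f \in C(\MM_{h-1}, \RR)$ with $\int f \, d\alpha \neq \int f \, d\beta$, and by density we may take $f = \sum_i c_i\, t_{h-1}(T_i, \cdot) \in \CT_{h-1}'$. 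Then at least one summand satisfies $\int t_{h-1}(T_i, \cdot)\, d\alpha \neq \int t_{h-1}(T_i, \cdot)\, d\beta$, and taking $T$ to be the rooted tree whose root has the single child-subtree $T_i$ gives $t_h(T, \alpha) = \int_{\MM_{h-1}} t_{h-1}(T_i, \cdot)\, d\alpha \neq \int_{\MM_{h-1}} t_{h-1}(T_i, \cdot)\, d\beta = t_h(T, \beta)$, so $\CT_h$ separates $\alpha$ and $\beta$. This mirrors the use of $\psi_{\mathsf{id}}$ in the MPNN proof, the role of the single-child construction being analogous to the identity readout.

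Finally, the case $h = \infty$ is handled exactly as in \Cref{le:vertexNeuralWeierstrass}: $\bm{1}_{\MM} = \bm{1}_{\MM_0} \circ p_{\infty, 0} \in \CT_\infty$; given $\alpha \neq \beta$ in $\MM$ there is some $h$ with $\alpha_h \neq \beta_h$, and the already-established separation for $\CT_h$ together with \Cref{obs:treeProjection} (which guarantees $t_\infty(T, \cdot) = t_h(T, \cdot) \circ p_{\infty, h}$ is well defined) yields a tree $T$ with $t_\infty(T, \alpha) \neq t_\infty(T, \beta)$; and closure under multiplication at level $\infty$ follows by pulling back products along the projections $p_{h', h}$, using \Cref{obs:treeProjection} to re-express a lower-level density function at a higher level exactly as \Cref{obs:vertexNeuralProjection} does for MPNNs. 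I expect the point-separation step to be the main obstacle, since it is the only place where the Stone--Weierstrass theorem and the measure-theoretic structure of $\MM_h$ actually enter; the rest is bookkeeping about tree surgery.
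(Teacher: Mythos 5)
Your proposal is correct and takes the same route as the paper, which simply declares the proof ``analogous to \Cref{le:vertexNeuralWeierstrass}'' with a remark about applying Stone--Weierstrass to the linear span $\CT_{h-1}'$ and then extracting a separating summand. You have spelled out exactly that argument — gluing trees at the root for closure under multiplication, the single-child tree playing the role of $\psi_{\mathsf{id}}$, and the pullback via $p_{\infty,h}$ together with \Cref{obs:treeProjection} for the $h=\infty$ case — so this is a faithful expansion of what the paper leaves implicit.
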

\begin{proof}
    Essentially, the proof is analogous to \Cref{le:vertexNeuralWeierstrass}.
    When showing that $\CT_h$ separates points of $\MM_h$,
    one cannot apply the Stone--Weierstrass theorem directly
    since the set of $\CT_h$ is not closed under linear combinations.
    However, one can apply it to $\CT_h'$ instead
    and obtain that there always is a separating linear combination.
    Then, if a linear combination separates two points, a function in
    the linear combination has to separate these points, too.
\end{proof}

For $0 \le h \le \infty$, let
\begin{equation*}
    \CT\CT_h \coloneqq \{ \nu \mapsto t(T, \nu) \mid \text{$T$ rooted tree of height at most $h$}\}
    \subseteq C(\pMeasures(\MM_h), \RR),
\end{equation*}
and let $\CT\CT_h'$ be the set of linear combinations of
functions in $\CT\CT_h$.
We then obtain the following universal approximation theorem
for linear combinations of tree homomorphism density functions.
\begin{theorem}
    \label{th:universalApproximationTheoremTrees}
    Let $0 \le h \le \infty$.
    Then, $\CT_h'$ is dense in $C(\MM_h, \RR)$
    and $\CT\CT_h'$ is dense in $C(\pMeasures(\MM_h), \RR)$.
\end{theorem}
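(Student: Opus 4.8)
The plan is to deduce both density statements from the Stone--Weierstrass theorem (see~\Cref{sec:stoneWeierstrass}), in exact parallel to the proof of \Cref{th:universalApproximation}, with \Cref{le:treeWeierstrass} now playing the role that \Cref{le:vertexNeuralWeierstrass} played there. For the first statement, I would use that $\MM_h$ is compact metrizable, hence compact Hausdorff, so that Stone--Weierstrass applies to any point-separating subalgebra of $C(\MM_h, \RR)$ containing the constants. By \Cref{le:treeWeierstrass}, $\CT_h$ is closed under multiplication, contains $\bm{1}_{\MM_h}$, and separates the points of $\MM_h$; passing to its linear span $\CT_h'$ repairs the one missing algebra axiom, since a product of two linear combinations of elements of $\CT_h$ expands into a linear combination of products of elements of $\CT_h$, each again in $\CT_h$. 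Thus $\CT_h'$ is a point-separating subalgebra of $C(\MM_h, \RR)$ containing the constants (scalar multiples of $\bm{1}_{\MM_h}$), and Stone--Weierstrass gives density. The case $h = \infty$ is covered by the stated range in \Cref{le:treeWeierstrass}, or alternatively reduces to finite levels via \Cref{obs:treeProjection} and the inverse-limit (product) topology on $\MM = \MM_\infty$, so that functions factoring through some finite projection $p_{\infty, h}$ are dense.

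For the second statement I would run the analogue at the level of DIDMs. Constants lie in $\CT\CT_h$: the one-vertex tree $T$ has $t_h(T, \cdot) = \bm{1}_{\MM_h}$, whence $t(T, \nu) = \lVert \nu \rVert = 1$ for every $\nu \in \pMeasures(\MM_h)$. Point separation follows from the first statement: if $\mu \neq \nu$ in $\pMeasures(\MM_h)$, then $\int g\, d\mu \neq \int g\, d\nu$ for some $g \in C(\MM_h, \RR)$, hence, by density of $\CT_h'$, for some $g' = \sum_i c_i\, t_h(T_i, \cdot)$; expanding the integral gives $\sum_i c_i\, t(T_i, \mu) \neq \sum_i c_i\, t(T_i, \nu)$, so some $t(T_i, \cdot) \in \CT\CT_h$ separates $\mu$ and $\nu$. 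What is then left -- and I expect this to be the main obstacle -- is closure under multiplication, which here is genuinely more delicate than in the MPNN case: in \Cref{le:vertexNeuralWeierstrass} a product of two graph-level outputs is absorbed into a single continuous readout $\psi_{\mathsf{mul}}$ on a concatenated feature, whereas $\nu \mapsto t(T, \nu) = \int_{\MM_h} t_h(T, \cdot)\, d\nu$ carries no such tail map, so it does not obviously close under products.

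I would resolve this through the inclusion $\pMeasures(\MM_h) \subseteq \measures(\MM_h) = \MM_{h+1}$. For a rooted tree $S$ of height at most $h+1$ whose root has children subtrees $T_1, \dots, T_k$ of height at most $h$, one has $t_{h+1}(S, \nu) = \prod_{i=1}^{k} \int_{\MM_h} t_h(T_i, \cdot)\, d\nu = \prod_{i=1}^{k} t(T_i, \nu)$ for every $\nu \in \pMeasures(\MM_h)$; hence the restriction of $\CT_{h+1}$ to $\pMeasures(\MM_h)$ is precisely the set of finite products of functions from $\CT\CT_h$, and $\CT\CT_h'$ (the span of these, i.e.\ the subalgebra generated by $\CT\CT_h$) is a point-separating subalgebra of $C(\pMeasures(\MM_h), \RR)$ containing the constants. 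Since $\pMeasures(\MM_h)$ is closed, hence compact, in the Hausdorff space $\MM_{h+1}$, every $f \in C(\pMeasures(\MM_h), \RR)$ extends (Tietze) to some $\tilde f \in C(\MM_{h+1}, \RR)$; applying the first statement at level $h+1$ to $\tilde f$ and restricting the approximant to $\pMeasures(\MM_h)$ yields the desired uniform approximation of $f$. Equivalently, one may apply Stone--Weierstrass directly to $\CT\CT_h'$ on $C(\pMeasures(\MM_h), \RR)$ once the algebra and separation properties above are in hand, with the $h = \infty$ bookkeeping handled by the same inverse-limit argument as in the first statement.
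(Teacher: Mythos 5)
Your proof of the first claim matches the paper's: a direct application of Stone--Weierstrass, checking that $\CT_h'$ inherits closure under multiplication from $\CT_h$ by bilinearity and that \Cref{le:treeWeierstrass} supplies the remaining axioms.

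For the second claim your argument is genuinely different and, in fact, exposes that the paper's one-line proof (``another application of the Stone--Weierstrass Theorem'') does not go through as written. That proof tacitly requires $\CT\CT_h'$ to be a subalgebra, and you correctly observe that closure under multiplication is exactly where the analogy with \Cref{le:vertexNeuralWeierstrass} breaks: there is no readout $\psi_{\mathsf{mul}}$ to absorb a product of graphon-level densities. The issue is real, not merely a gap to be hand-waved: every function $\nu\mapsto t(T,\nu)=\int_{\MM_h}t_h(T,\cdot)\,d\nu$ is affine in $\nu$, uniform limits of affine functions on the convex compact set $\pMeasures(\MM_h)$ remain affine, and for $h\ge 1$ the space $C(\pMeasures(\MM_h),\RR)$ contains non-affine functions such as $\nu\mapsto t(T_1,\nu)^2$ for the single-edge tree $T_1$. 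So $\CT\CT_h'$ \emph{in the paper's literal sense} (linear span of $\CT\CT_h$) is not dense. Your fix through the inclusion $\pMeasures(\MM_h)\subseteq\measures(\MM_h)=\MM_{h+1}$ is the right one: a product $\prod_i t(T_i,\nu)$ is the restriction to $\pMeasures(\MM_h)$ of $t_{h+1}(R,\cdot)\in\CT_{h+1}$, where $R$ is the tree whose root has the $T_i$ as subtrees, so the algebra generated by $\CT\CT_h$ coincides with the restriction of $\CT_{h+1}'$, and density follows from the first claim at level $h+1$ with Tietze extension, or equivalently by running Stone--Weierstrass directly on that algebra. Be aware, though, that this quietly amends the definition: ``the span of these, i.e.\ the subalgebra generated by $\CT\CT_h$'' is not the paper's $\CT\CT_h'$, and the two differ by exactly the products your argument needs. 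What you prove is the theorem for the generated algebra, which is the statement that should have been written. (Also, for $h=\infty$ there is no ``$\MM_{\infty+1}$'' to embed $\pMeasures(\MM_\infty)$ into, so the Tietze route does not apply verbatim there; fall back to your direct Stone--Weierstrass alternative, with the multiplication handled by pulling each tree back through a finite projection $p_{\infty,k}$.)
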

\begin{proof}
    $\CT_h'$ has all the properties as listed in \Cref{le:treeWeierstrass}
    and is additionally closed under linear combinations, which means
    that the Stone--Weierstrass Theorem is applicable, which immediately
    yields the first claim.
    The first claim together with another application of
    the Stone--Weierstrass Theorem then yields the second claim,
    where we as in the proof of~\Cref{le:treeWeierstrass} use that
    if a function in $\CT\CT_h'$ separates two probability measures
    $\mu, \nu \in \pMeasures(\MM_h)$, then so does a function in $\CT\CT_h$.
\end{proof}

\begin{lemma}
    \label{le:homConvergence}
    Let $0 \le h \le \infty$.
    Let $(\nu_i)_i$ be a sequence of measures from $\pMeasures(\MM_h)$
    and $\nu \in \pMeasures(\MM_h)$.
    Then, $\nu_i \rightarrow \nu$ if and only if
    $t(T, \nu_i) \rightarrow t(T, \nu)$ for every rooted tree $T$ of height
    at most $h$.
\end{lemma}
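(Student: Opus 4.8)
The plan is to prove this exactly as \Cref{le:neuralConvergence} is deduced from \Cref{th:universalApproximation}, with tree homomorphism density functions playing the role of MPNN outputs and \Cref{th:universalApproximationTheoremTrees} playing the role of the universal approximation theorem. The two facts I would rely on are: (i) $\MM_h$ is compact metrizable for every $0 \le h \le \infty$, so $C_b(\MM_h) = C(\MM_h, \RR)$, and weak convergence $\nu_i \to \nu$ in $\pMeasures(\MM_h)$ can be tested against any sup-norm-dense subset of $C(\MM_h, \RR)$ (see~\Cref{sec:factsOnMeasures}); and (ii) by \Cref{th:universalApproximationTheoremTrees}, the set $\CT_h'$ of linear combinations of the homomorphism density functions $t_h(T, \cdot)$, $T$ a rooted tree of height at most $h$, is such a dense subset.

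For the forward implication, suppose $\nu_i \to \nu$. For any rooted tree $T$ of height at most $h$, the function $t_h(T, \cdot)\colon \MM_h \to [0,1]$ is continuous (by its inductive definition, cf.~\Cref{le:treeWeierstrass}) and bounded; hence $t(T, \nu_i) = \int_{\MM_h} t_h(T, \cdot)\, d\nu_i \to \int_{\MM_h} t_h(T, \cdot)\, d\nu = t(T, \nu)$, which is what we want.

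For the backward implication, suppose $t(T, \nu_i) \to t(T, \nu)$ for every rooted tree $T$ of height at most $h$. By linearity of the integral, this immediately upgrades to $\int_{\MM_h} g\, d\nu_i \to \int_{\MM_h} g\, d\nu$ for every $g \in \CT_h'$, since such a $g$ is a finite linear combination $\sum_j c_j t_h(T_j, \cdot)$ with $\int_{\MM_h} g\, d\mu = \sum_j c_j t(T_j, \mu)$. As $\CT_h'$ is dense in $C(\MM_h, \RR)$ by \Cref{th:universalApproximationTheoremTrees} and all measures involved have total mass one, the criterion from~\Cref{sec:factsOnMeasures} yields $\int_{\MM_h} f\, d\nu_i \to \int_{\MM_h} f\, d\nu$ for every $f \in C(\MM_h, \RR)$, i.e., $\nu_i \to \nu$. (Concretely, if one prefers not to invoke the dense-subset criterion directly: given $f$ and $\varepsilon > 0$, pick $g \in \CT_h'$ with $\lVert f - g\rVert_\infty \le \varepsilon$, use $\lvert \int_{\MM_h} f\, d\mu - \int_{\MM_h} g\, d\mu\rvert \le \varepsilon$ for $\mu \in \{\nu\} \cup \{\nu_i \mid i\}$, and let $\varepsilon \to 0$.)

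There is no real obstacle once \Cref{th:universalApproximationTheoremTrees} is available: the lemma is a formal consequence of that theorem together with the definition of the weak topology, and the only point meriting attention is that one must pass through the linear span $\CT_h'$ rather than $\CT_h$ itself, because $\CT_h$ is not closed under linear combinations and, having codomain contained in $[0,1]$, cannot be dense in $C(\MM_h, \RR)$; the hypothesis of convergent tree densities nevertheless transfers to $\CT_h'$ for free. The case $h = \infty$ needs no separate treatment, since $t_\infty(T, \cdot) = t_h(T, \cdot) \circ p_{\infty, h}$ (for $h$ at least the height of $T$) is well-defined by~\Cref{obs:treeProjection} and $\MM$ is again compact metrizable.
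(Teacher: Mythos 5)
Your proof is correct and follows essentially the same route as the paper: both deduce the lemma from \Cref{th:universalApproximationTheoremTrees} together with the definition of the weak topology, using linearity of the integral and the limit to pass between $\CT_h$ and its linear span $\CT_h'$. You have merely spelled out the details that the paper's two-line proof leaves implicit.
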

\begin{proof}
    Follows from \Cref{th:universalApproximationTheoremTrees}
    and the definition of the weak topology.
    Here, we use that the integral and the limit are linear, which means
    that we can replace the set $\CT_h'$ by the set $\CT_h$
    in order to obtain the exact statement of this lemma.
\end{proof}

Let us conclude this section with
what is known as the \enquote{counting lemma}
in the world of graph homomorphisms, or in other words,
a tree analogue to \Cref{le:NNsLipschitz}.
Formally, we prove that tree homomorphism density functions
are Lipschitz in the metrics we defined in \Cref{sec:metrics},
where the Lipschitz constant only depends on the order of the tree.
This implies that if two graphons are close in one of our metrics,
then all tree homomorphism densities
for trees up to a certain order are close.
Hence, in comparison to \Cref{le:NNsLipschitz},
the order of the tree takes the place of the constant
and number of layers of an MPNN model.

\begin{lemma}
    \label{le:treesLipschitzIDM}
    \label{le:treesLipschitz}
    Let $h \in \NN$ and $T$ be a rooted tree of height at most $h$.
    Then,
    \begin{align*}
        &\lvert t_h(T, \alpha) - t_h(T, \beta) \rvert
        \le \lvert E(T) \rvert \cdot \WDistOf{h}(\alpha, \beta)&
        &\text{and}
        &\lvert t(T, \mu) - t(T, \nu) \rvert
        \le \lvert V(T) \rvert \cdot \WDeltaDistOf{h}(\mu, \nu)&
    \end{align*}
        for all $\alpha, \beta \in \MM_h$ and all
        $\mu, \nu \in \pMeasures(\MM_h)$, respectively.
These inequalities also hold for $\WDistOf{\infty}$ and $\WDeltaDistOf{\infty}$
    with an additional factor of $h$ in the Lipschitz constant.
\end{lemma}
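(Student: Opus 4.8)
The plan is to follow the proof of \Cref{le:NNsLipschitz} almost line for line, using the recursion defining $t_h(T,\cdot)$ in place of the recursion for $\hb_\blank^{\tup{t}}$; the only genuinely new ingredient is a product rule for Lipschitz functions with values in $[0,1]$. First I would record this rule: if $g_1,\dots,g_k\colon \MM_h \to [0,1]$ are Lipschitz with constants $L_1,\dots,L_k$ with respect to $\WDistOf{h}$, then $\prod_{i=1}^k g_i$ is Lipschitz with constant $\sum_{i=1}^k L_i$. This is immediate from the telescoping identity
\begin{equation*}
    \prod_{i=1}^k g_i(\alpha) - \prod_{i=1}^k g_i(\beta)
    = \sum_{j=1}^k \Bigl(\prod_{i<j} g_i(\beta)\Bigr)\bigl(g_j(\alpha) - g_j(\beta)\bigr)\Bigl(\prod_{i>j} g_i(\alpha)\Bigr)
\end{equation*}
together with the fact that every factor lies in $[0,1]$, so the $j$th summand has absolute value at most $\lvert g_j(\alpha) - g_j(\beta)\rvert \le L_j\cdot\WDistOf{h}(\alpha,\beta)$.

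Next I would prove the first inequality, i.e.\ that $t_h(T,\cdot)$ is $\lvert E(T)\rvert$-Lipschitz on $(\MM_h,\WDistOf{h})$, by induction on $h$. If $T$ has height zero then $t_h(T,\cdot)=\bm{1}_{\MM_h}$ and $\lvert E(T)\rvert=0$, so the claim is trivial, which in particular settles $h=0$. For the inductive step, let $T$ have height at least $1$ and let $T_1,\dots,T_k$ be the subtrees rooted at the children of the root of $T$, each of height at most $h-1$; then $t_h(T,\cdot)=\prod_{i=1}^k g_i$ with $g_i(\alpha)\coloneqq \int_{\MM_{h-1}} t_{h-1}(T_i,\cdot)\,d\alpha$. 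Each $g_i$ takes values in $[0,\lVert\alpha\rVert]\subseteq[0,1]$ since $0\le t_{h-1}(T_i,\cdot)\le 1$. By the induction hypothesis $t_{h-1}(T_i,\cdot)$ is Lipschitz with $\lVert t_{h-1}(T_i,\cdot)\rVert_{L}\le\lvert E(T_i)\rvert$, so applying the consequence $\lVert\int_S f\,d\mu-\int_S f\,d\nu\rVert_2\le\lVert f\rVert_{BL}\cdot\WMetric(\mu,\nu)$ of \Cref{cl:Lipschitz} (noted in the proof of \Cref{le:NNsLipschitz}) on the space $(\MM_{h-1},\WDistOf{h-1})$, and recalling that $\WDistOf{h}$ is by definition the unbalanced Wasserstein metric on that space, yields $\lvert g_i(\alpha)-g_i(\beta)\rvert\le(\lVert t_{h-1}(T_i,\cdot)\rVert_\infty+\lVert t_{h-1}(T_i,\cdot)\rVert_L)\cdot\WDistOf{h}(\alpha,\beta)\le(1+\lvert E(T_i)\rvert)\cdot\WDistOf{h}(\alpha,\beta)$. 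The product rule then gives $t_h(T,\cdot)$ the Lipschitz constant $\sum_{i=1}^k(1+\lvert E(T_i)\rvert)=k+\sum_{i=1}^k\lvert E(T_i)\rvert=\lvert E(T)\rvert$, closing the induction.

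The second inequality follows at once: since $t(T,\mu)=\int_{\MM_h}t_h(T,\cdot)\,d\mu$, the same consequence of \Cref{cl:Lipschitz} bounds $\lvert t(T,\mu)-t(T,\nu)\rvert$ by $\lVert t_h(T,\cdot)\rVert_{BL}\cdot\WDeltaDistOf{h}(\mu,\nu)$, and $\lVert t_h(T,\cdot)\rVert_{BL}\le 1+\lvert E(T)\rvert=\lvert V(T)\rvert$ by the first inequality and $\lVert t_h(T,\cdot)\rVert_\infty\le1$. The statements for $\WDistOf{\infty}$ and $\WDeltaDistOf{\infty}$ are then obtained exactly as in the closing paragraph of the proof of \Cref{le:NNsLipschitz}: one writes $t_\infty(T,\cdot)=t_h(T,\cdot)\circ p_{\infty,h}$, notes that $p_{\infty,h}$ is $h$-Lipschitz from $(\MM,\WDistOf{\infty})$ to $(\MM_h,\WDistOf{h})$ — immediate from the $1/h$ factor in the definition of $\WDistOf{\infty}$ — and correspondingly that $(p_{\infty,h})_*$ is $h$-Lipschitz from $(\pMeasures(\MM),\WDeltaDistOf{\infty})$ to $(\pMeasures(\MM_h),\WDeltaDistOf{h})$, so that the two inequalities just proved apply to the pushforwards. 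I do not anticipate a real obstacle; the only step demanding care is the bookkeeping that makes the constants come out as exactly $\lvert E(T)\rvert$ and $\lvert V(T)\rvert$, namely $\lvert E(T)\rvert=k+\sum_i\lvert E(T_i)\rvert$ and $\lvert V(T)\rvert=\lvert E(T)\rvert+1$, rather than with a spurious extra factor.
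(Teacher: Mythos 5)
Your proof is correct and follows essentially the same route as the paper's: the paper's inductive step is precisely your telescoping bound $\lvert\prod a_i - \prod b_i\rvert \le \sum_i \lvert a_i - b_i\rvert$ for factors in $[0,1]$, followed by the same application of the consequence of \Cref{cl:Lipschitz} to get $\lVert t_{h-1}(T_i,\cdot)\rVert_{BL} \le 1 + \lvert E(T_i)\rvert$, and the same projection argument for the $\infty$ case (where the paper actually uses the tighter factor $h(T)\le h$). The only cosmetic difference is that you isolate the product rule as a separate observation rather than inlining it.
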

\begin{proof}
    We prove the first inequality by induction on the height of $T$.
    If it is zero, the statement trivially holds.
    For the inductive step,
    let $T_1, \dots, T_k$ denote the trees rooted in the children of $T$.
    Then,
    \begin{align*}
        \left\lvert t_h(T, \alpha) - t_h(T, \beta) \right\rvert
        &= \left\lvert
            \prod_{i \in [k]} \int_{\MM_{h-1}} t_{h-1}(T_i, \cdot) \, d \alpha
            - \prod_{i \in [k]} \int_{\MM_{h-1}} t_{h-1}(T_i, \cdot) \, d \beta
            \right\rvert\\
        &\le \sum_{i \in [k]} \left\lvert \int_{\MM_{h-1}} t_{h-1}(T_i, \cdot) \, d \alpha
            - \int_{\MM_{h-1}} t_{h-1}(T_i, \cdot) \, d \beta
            \right\rvert\\
        &\le \sum_{i \in [k]} \lVert t_{h-1}(T_i, \cdot) \rVert_{BL} \cdot \BLMetric(\alpha, \beta)\\
        &\le \sum_{i \in [k]} \left(1 + \lVert t_{h-1}(T_i, \cdot) \rVert_{L}\right) \cdot \WMetric(\alpha, \beta)\\
        &\le \sum_{i \in [k]} \left(1 + \lvert E(T_i) \rvert\right) \cdot \WMetric(\alpha, \beta)\\
        &= \lvert E(T) \rvert \cdot \WMetric(\alpha, \beta)
        = \lvert E(T) \rvert \cdot \WDistOf{h}(\alpha, \beta),
    \end{align*}
    which proves the claim.
    The second inequality then follows from the first. We have
    \begin{align*}
        \lvert t(T, \mu) - t(T, \nu) \rvert
        &= \Bigl\lvert  \int_{\MM_h} t_h(T, \cdot) \, d\mu - \int_{\MM_h} t_h(T, \cdot) \, d\nu \Bigr\rvert\\
        &\le \lVert t_h(T, \cdot) \rVert_{BL} \cdot \BLMetric(\mu, \nu)\\
        &\le (1 + \lvert E(T) \rvert) \cdot \WMetric(\mu, \nu)
        =  \lvert V(T) \rvert \cdot \WDeltaDistOf{h}(\mu, \nu).
    \end{align*}

    Let $h(T)$ denote the height of $T$.
    Then,
    \begin{align*}
        \lvert t(T, \alpha) - t(T, \beta) \rvert
        &= \lvert t(T, p_{\infty, h(T)}(\alpha)) - t(T, p_{\infty, h(T)}(\beta)) \rvert\\
        &\le \lvert E(T) \rvert \cdot \WDistOf{h(T)}(p_{\infty, h(T)}(\alpha), p_{\infty, h(T)}(\beta))\\
        &\le \lvert E(T) \rvert \cdot h(T) \cdot \WDistOf{\infty}(\alpha, \beta).
    \end{align*}
    for all $\alpha, \beta \in \MM_\infty$ and
    \begin{align*}
        \lvert t(T, \mu) - t(T, \nu) \rvert
        &= \lvert t(T, (p_{\infty, h(T)})_* \mu) - t(T, (p_{\infty, h(T)})_* \nu) \rvert\\
        &\le \rvert V(T) \rvert \cdot \WDeltaDistOf{h(T)}((p_{\infty, h(T)})_*\mu, (p_{\infty, h(T)})_*\nu)\\
        &\le \rvert V(T) \rvert \cdot h(T) \cdot \WDeltaDistOf{\infty}(\mu, \nu),
    \end{align*}
    for all $\mu, \nu \in \pMeasures(\MM_\infty)$,
    where the last inequality follows from the definition of $\WDistOf{\infty}$.
\end{proof}
\clearpage{}
\clearpage{}\section{Additional experimental results} \label{app:experiments}
Here, we provide additional experimental results.

\subsection{Graph distance preservation}
\begin{figure}[hp!]
  \centering
\includegraphics[width=0.9\textwidth]{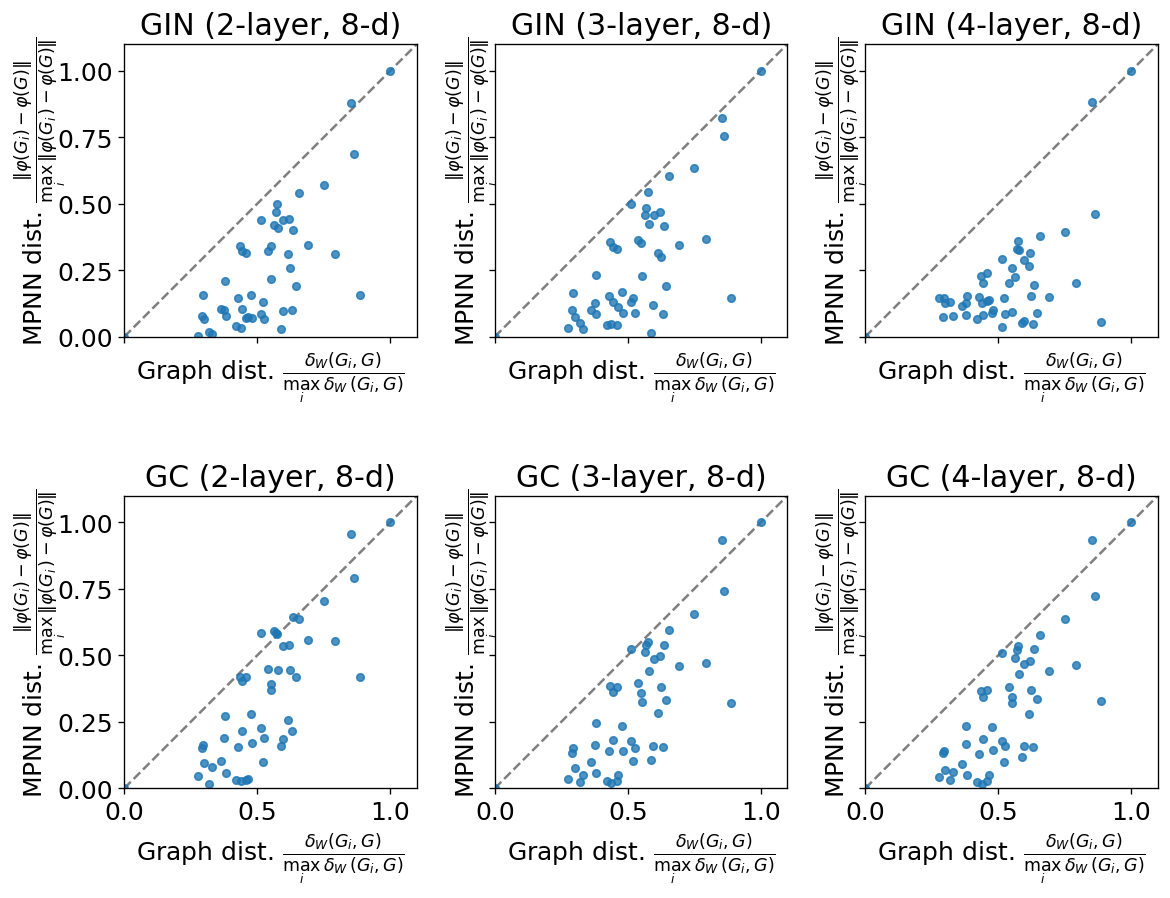}
  \caption{Increasing the number of layers first improves and then degrades the ability of MPNNs to preserve graph distance. Comparatively, untrained GIN embeddings are more sensitive than untrained GraphConv to changes in the number of layers.} 
  \label{fig:finegrain_compare_SBM2}
\end{figure}

\begin{figure}[hp!]
  \centering
\includegraphics[width=0.9\textwidth]{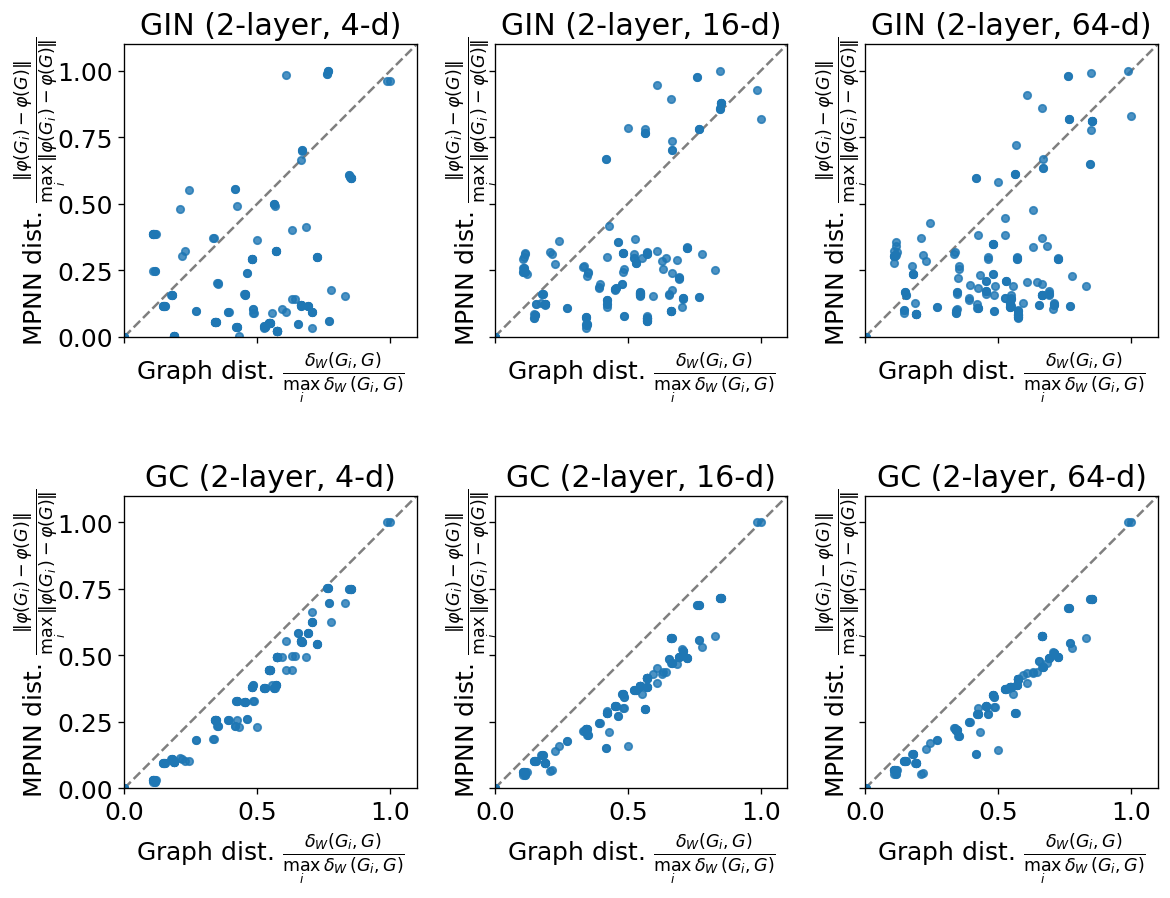}
  \caption{For \textsc{Mutag}, MPNNs preserve graph distance better when increasing the number of hidden dimensions. }
  \label{fig:finegrain_compare_MUTAG_hid}
\end{figure}

\begin{figure}[hp!]
  \centering
\includegraphics[width=0.9\textwidth]{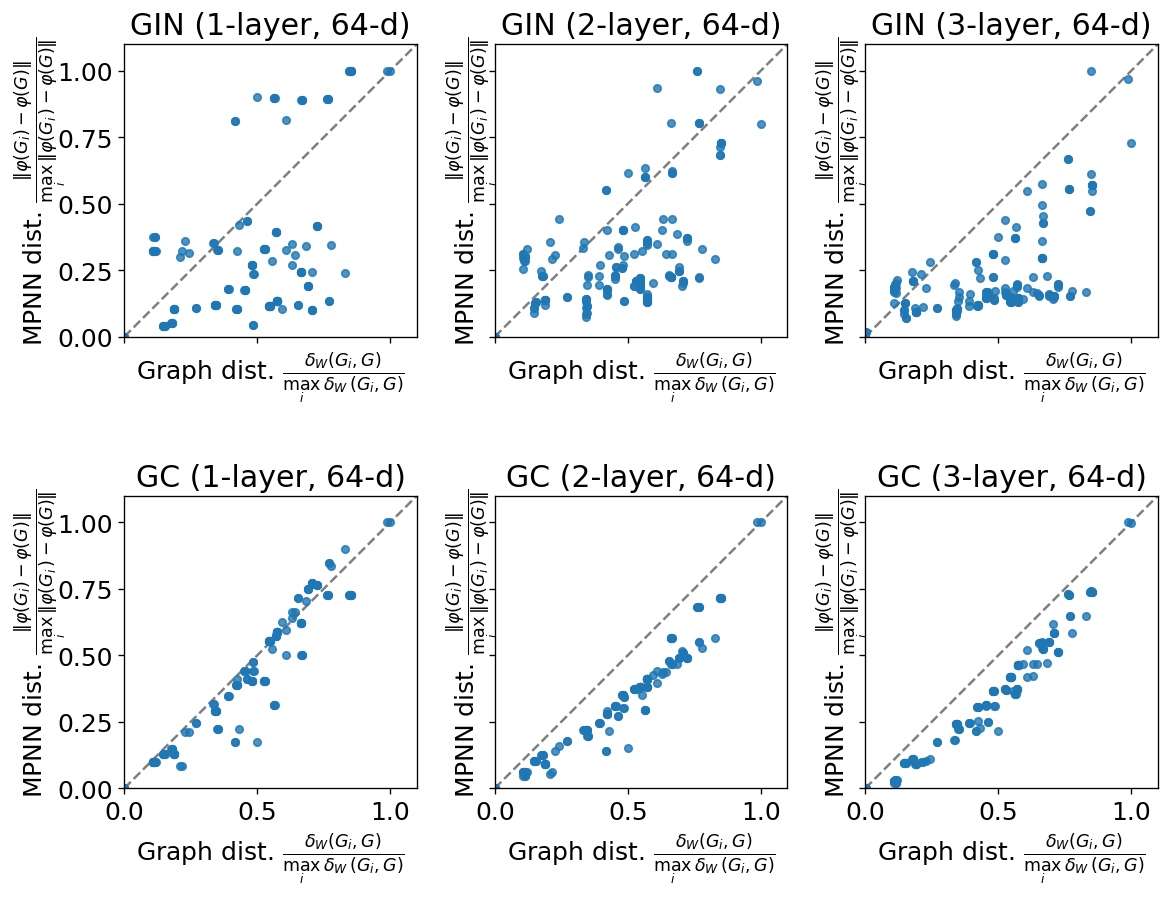}
  \caption{For \textsc{Mutag}, MPNNs preserve graph distance worse when depth increases beyond a certain threshold. }\label{fig:finegrain_compare_MUTAG_layer}
\end{figure}

\clearpage

\subsection{Untrained MPNNs}

\begin{table*}[hp!]
\scriptsize
\caption{Untrained MPNNs are significantly faster than trained MPNNs. We report the mean training time (in milliseconds) per epoch $\pm$ std over $200$ epochs.} 
\label{tab:experiments-untrain-time}
\centering
\resizebox{1.\textwidth}{!}{ 	\renewcommand{\arraystretch}{1.05}
\begin{tabular}{lcccccc}
\toprule
\textbf{Time} $\downarrow$	& \textsc{Mutag} &	\textsc{Imdb-Binary}	&	\textsc{Imdb-Multi}	&	\textsc{NCI1}	&	\textsc{Proteins}	& \textsc{Reddit-Binary}  \\ 	
\midrule							
 GIN (trained)	& 17.09 $\pm$ 0.04	 &	101.23 $\pm$ 0.07&	148.09 $\pm$ 0.03&	321.52 $\pm$ 0.07&	90.31 $\pm$ 0.06&	291.41 $\pm$ 0.18 \\	
 GIN (untrained)	& \textbf{9.17} $\pm$ 0.00  &	 	\textbf{65.09 $\pm$ 0.01} &	\textbf{98.49 $\pm$ 0.17} &	\textbf{180.54 $\pm$ 0.02} &	\textbf{48.52 $\pm$ 0.00} &	\textbf{210.29 $\pm$ 0.03} \\	
\midrule						
GraphConv (trained)	& 12.02 $\pm$ 0.01 &80.59 $\pm$ 0.07&	118.28 $\pm$ 0.03&	226.31 $\pm$ 0.07&	65.01 $\pm$ 0.06&	228.08 $\pm$ 0.18\\	
 GraphConv (untrained) & \textbf{8.08} $\pm$ 0.00	& \textbf{64.44 $\pm$ 0.01} &	\textbf{95.41 $\pm$ 0.17} &	\textbf{163.08 $\pm$ 0.02} &	\textbf{45.95 $\pm$ 0.00} &	\textbf{185.76 $\pm$ 0.03} \\	
\bottomrule
\end{tabular}}
\end{table*}

\begin{table*}[hp!]
\scriptsize
\caption{When the hidden dimensionality is smaller ($3$-layer, $128$-hidden-dimension), untrained GIN-m is still competitive as trained GIN-m, whereas untrained GraphConv-m is less competitive as trained GraphConv-m. We report the mean accuracy $\pm$ std over $10$ data splits.} 
\label{tab:experiments-untrain-128}
\centering

\resizebox{1.\textwidth}{!}{ 	\renewcommand{\arraystretch}{1.05}
\begin{tabular}{lcccccc}
\toprule
\textbf{Accuracy} $\uparrow$	& \textsc{Mutag} &	\textsc{Imdb-Binary}	&	\textsc{Imdb-Multi}	&	\textsc{NCI1}	&	\textsc{Proteins}	& \textsc{Reddit-Binary}  \\ 	
\midrule							
 GIN-m (trained)	& \textbf{78.8 $\pm$ 2.05} &	69.76 $\pm$ 1.44&	45.63 $\pm$ 1.07&	\textbf{78.8 $\pm$ 0.58} &	73.11 $\pm$ 0.7&	\textbf{84.15 $\pm$ 0.71 } \\	
 GIN-m (untrained)	& 78.34 $\pm$ 2.0&	\textbf{70.31 $\pm$ 0.74} &	\textbf{46.75 $\pm$ 1.17} &	71.86 $\pm$ 0.29&	\textbf{73.17 $\pm$ 0.63} &	78.8 $\pm$ 0.5  \\	
\midrule						
GraphConv-m (trained)	& \textbf{81.96 $\pm$ 1.78} &	\textbf{67.69 $\pm$ 1.56} &	\textbf{44.71 $\pm$ 0.98} &	\textbf{64.06 $\pm$ 0.45} &	\textbf{71.37 $\pm$ 0.61} &	\textbf{81.9 $\pm$ 0.35} \\	
 GraphConv-m (untrained)	& 66.75 $\pm$ 0.42&	62.33 $\pm$ 0.87&	42.54 $\pm$ 1.47&	62.31 $\pm$ 0.35&	71.51 $\pm$ 0.58&	74.09 $\pm$ 0.34 \\
\bottomrule
\end{tabular}}
\end{table*}

\begin{table*}[hp!]
\scriptsize
\caption{For standard MPNNs using sum aggregation, sum pooling, and without $1/V(G)$ normalization (denoted as ``MPNN-s''), the untrained ones also show competitive performance as trained ones ($3$-layer, $128$-hidden-dimension). We report the mean accuracy $\pm$ std over $10$ data splits.} 
\label{tab:experiments-untrain-standard}
\centering

\resizebox{1.\textwidth}{!}{ 	\renewcommand{\arraystretch}{1.05}
\begin{tabular}{lcccccc}
\toprule
\textbf{Accuracy} $\uparrow$	& \textsc{Mutag} &	\textsc{Imdb-Binary}	&	\textsc{Imdb-Multi}	&	\textsc{NCI1}	&	\textsc{Proteins}	& \textsc{Reddit-Binary}  \\ 	
\midrule							
 GIN-s (trained)	& 80.99 $\pm$ 3.57 &	69.06 $\pm$ 1.31&	44.59 $\pm$ 0.91&	76.91 $\pm$ 0.44&	\textbf{72.72 $\pm$ 0.93} &	80.33 $\pm$ 1.62\\	
 GIN-s (untrained)	& \textbf{81.78 $\pm$ 3.29} &	 69.44 $\pm$ 1.37&	\textbf{47.53 $\pm$ 0.86} &	\textbf{77.59 $\pm$ 0.45} &	70.23 $\pm$ 0.80 &	\textbf{83.22 $\pm$ 0.75}  \\	
\midrule						
GraphConv-s (trained)	& 70.74 $\pm$ 2.47 & \textbf{57.39 $\pm$ 2.20} &	\textbf{35.89 $\pm$ 2.0}0 &	56.89 $\pm$ 1.14&	63.6 $\pm$ 0.97&	\textbf{54.82 $\pm$ 2.35} \\	
 GraphConv-s (untrained)	& \textbf{71.35 $\pm$ 2.28} & 52.06 $\pm$ 1.62&	33.31 $\pm$ 1.09&	\textbf{57.12 $\pm$ 1.88} &	\textbf{68.19 $\pm$ 1.26} &	52.45 $\pm$ 1.63 \\
\bottomrule
\end{tabular}}
\end{table*}

\setcounter{table}{4}                   \begin{table*}[hp!]
\scriptsize
\caption{Trained and Untrained MPNNs ($3$-layer, $512$-hidden-dimension) with \emph{mean} aggregation and mean (graph-level) pooling, denoted by ``MPNN-mean''. We report the mean accuracy $\pm$ std over ten data splits. Although our theoretical results do not apply to mean aggregation, we still see that untrained MPNNs are competitive compared to their trained counterparts.} 
\label{tab:experiments-untrain-mean-aggr}
\centering

\resizebox{.95\textwidth}{!}{ 	\renewcommand{\arraystretch}{1.05}
\begin{tabular}{lcccccc}
\toprule
\textbf{Accuracy} $\uparrow$	& \textsc{Mutag} &	\textsc{Imdb-Binary}	&	\textsc{Imdb-Multi}	&	\textsc{NCI1}	&	\textsc{Proteins}	& \textsc{Reddit-Binary}  \\ 	
 \midrule
 GIN-mean (trained) & 74.63 $\pm$ 2.93&	49.48 $\pm$ 1.56&	33.70 $\pm$ 1.35&	73.74 $\pm$ 0.45&	71.53 $\pm$ 0.93&	50.04 $\pm$ 0.70 \\
 GIN-mean (untrained) & 72.46 $\pm$ 2.56&	49.18 $\pm$ 1.83&	33.03 $\pm$ 1.12&	77.16 $\pm$ 0.39&	70.33 $\pm$ 0.95&	49.90 $\pm$ 0.83 \\
 \midrule
 GraphConv-mean (trained) & 65.87 $\pm$ 3.24&	49.32 $\pm$ 1.35&	33.15 $\pm$ 1.19&	54.39 $\pm$ 1.25&	66.76 $\pm$ 0.96&	49.68 $\pm$ 0.82 \\
 GraphConv-mean (untrained) & 63.30 $\pm$ 3.55&	48.80 $\pm$ 1.91&	32.51 $\pm$ 0.90&	55.84 $\pm$ 0.53&	70.73 $\pm$ 0.69&	49.39 $\pm$ 0.48 \\
\bottomrule
\end{tabular}}
\end{table*}

\begin{figure}[hp!]
  \centering
\includegraphics[width=0.75\textwidth]{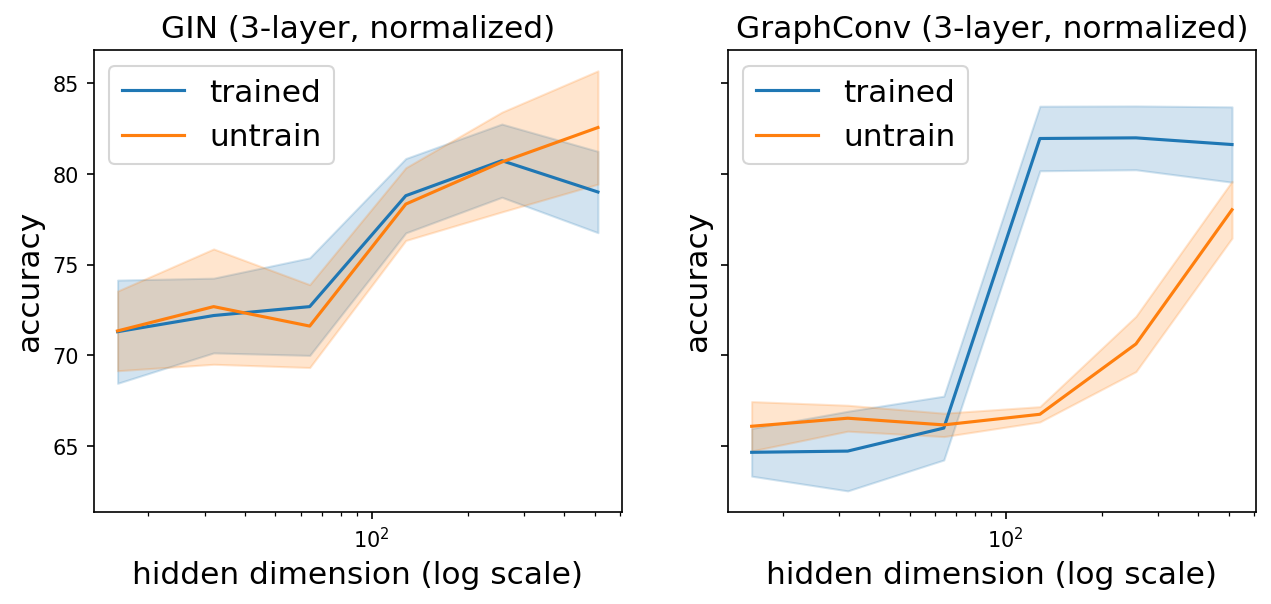}
  \caption{Increasing hidden dimension (number of functions) improves untrained MPNN performance, supporting Theorem \ref{th:DistMPNNConverse}.}
  \label{fig:hid_dim}
\end{figure}

\begin{figure}[hp!]
  \centering
\includegraphics[width=0.75\textwidth]{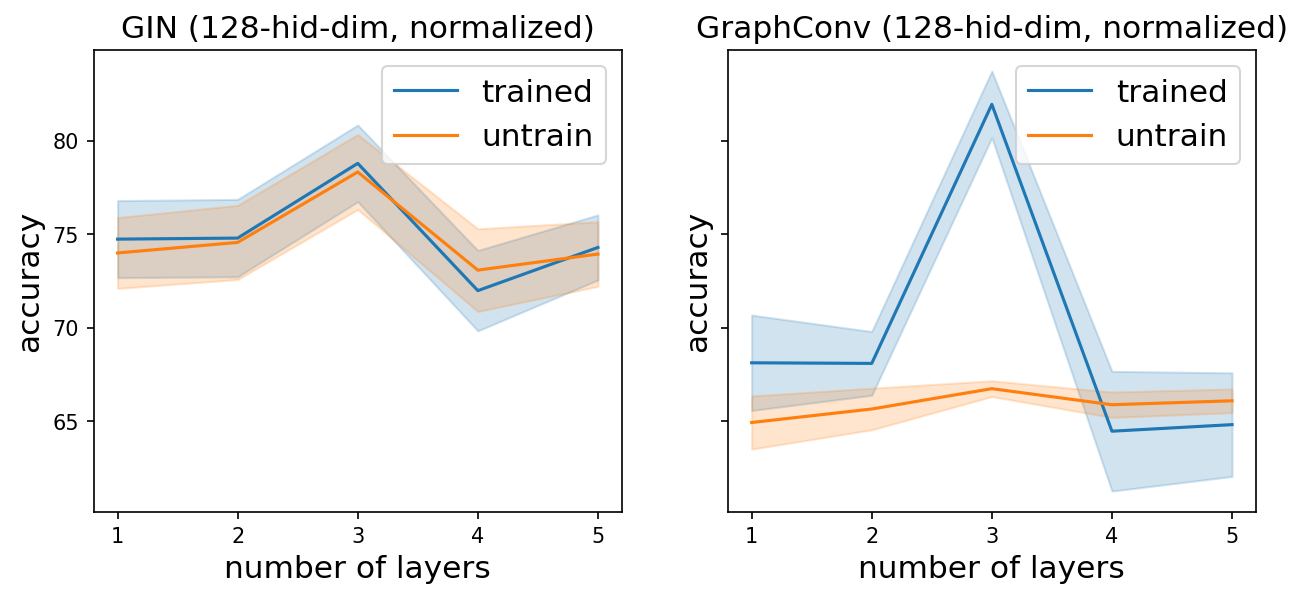}
  \caption{Increasing the number of message-passing layers (number of \wlone{} iterations) first improves and then degrades the performance of untrained MPNNs.}
  \label{fig:layer}
\end{figure}

\clearpage
\subsection{Evaluating graph distances in graph classification task} \label{app:exp_dist_class}

Our main theoretical results and empirical validation focus on the relationship between MPNNs and their induced graph distances, suggesting that the Euclidean distance between MPNN embeddings is an efficient lower bound of these graph distances. More precisely, the time complexity of computing $\proDeltaDistOf{h}, \WDeltaDistOf{h}$ is $\mathcal{O}(h\cdot n^5 \cdot \log n)$ (the same order as the WL distance in \citep{ChenLMWW22}). In contrast, the time complexity of computing $h$-layer, $d$-dimensional MPNN embedding distance (with $\psi_t, \psi$ chosen as the composition of linear maps and pointwise nonlinearities) is $\mathcal{O}(h \cdot n \cdot |E(G)| + n \cdot d)$, which is massively cheaper, especially for large, sparse graphs that are ubiquitous in real-world applications.

For ablation purposes, we evaluate our graph distances $\proDeltaDistOf{h}, \WDeltaDistOf{h}$ in graph classification tasks to examine how well they can separate graphs. For comparison, we follow the same set-up from \citet{ChenLMWW22}. Specifically, for each selected graph dataset in the TUDataset~\citep{Mor+2020}, we compute the pairwise distances for all graphs in the dataset. We then perform graph classification via $1$-nearest-neighbor classifier (1-NN), which classifies the test graph based on the label of the closest training graph (measured by our graph distance). We use 90/10 training/test random split and repeat ten times, following the same random data split in \cite{ChenLMWW22}.  

Table~\ref{tab:experiments-dist} shows the mean classification accuracy of 1-NN using our graph distances and demonstrates that our graph distances achieve competitive classification performance as the WL distance in \cite{ChenLMWW22}. 
\begin{table*}
\scriptsize
\caption{Classification accuracy of 1-NN on our graph distances. Results of $d_{\text{WL}}$ are taken from \cite{ChenLMWW22} using node degrees as initial labels.} 
\label{tab:experiments-dist}
\centering

\resizebox{.5\textwidth}{!}{ 	\renewcommand{\arraystretch}{1.05}
\begin{tabular}{lcc}
\toprule
\textbf{Accuracy} $\uparrow$	& \textsc{Mutag} &	\textsc{Imdb-Binary}	 \\ 
\midrule					$d_{\text{WL}}^{(3)}$ \cite{ChenLMWW22}	& 91.1 $\pm$ 4.3 & 69.4 $\pm$ 3.9 \\	
$d_{\text{WLLB}}^{(3)}$ \cite{ChenLMWW22} & 85.2 $\pm$ 3.5 & 69.8 $\pm$ 3.3 \\
\midrule
$\WDeltaDistOf{3}$ & 87.89 $\pm$ 4.11 & 66.50 $\pm$ 4.15  \\
$\WDeltaDistOf{\ge 3}$\footnotemark[1] & 86.32 $\pm$ 4.21 & 66.60 $\pm$ 3.93  \\
\bottomrule
\end{tabular}}
\end{table*}
\footnotetext[1]{$\WDeltaDist$ computed up to at most three iterations after color stabilizes.} 

\clearpage{}

\end{document}